\theoremstyle{plain}
\newtheorem{assum}{Assumption}
\theoremstyle{plain}
\newtheorem{lem}{Lemma}
\theoremstyle{plain}
\newtheorem{thm}{Theorem}
\theoremstyle{definition}
\newtheorem{defn}{Definition}
\theoremstyle{remark}
\newtheorem{remark}{Remark}
\title{Entry-Specific Matrix Estimation under Arbitrary Sampling Patterns through the Lens of Network Flows}
 \author[1]{Yudong Chen}
 \author[2]{Xumei Xi}
 \author[2]{Christina Lee Yu}
 \affil[1]{Department of Computer Sciences, University of Wisconsin-Madison}
 \affil[2]{School of Operations Research and Information Engineering, 
 Cornell University}
 \date{ \small \texttt{ yudong.chen@wisc.edu, \{xx269, cleeyu\}@cornell.edu}}
\begin{document}

\global\long\def\sgn{\operatorname{sgn}}%
\global\long\def\diag{\operatorname{diag}}
\global\long\def\KL{\operatorname{D_{KL}}}
\global\long\def\path{\operatorname{path}}
\global\long\def\rank{\operatorname{rank}}%
\global\long\def\argmax{\operatorname{argmax}}%
\global\long\def\argmin{\operatorname{argmin}}%
\global\long\def\vec{\operatorname{vec}}
\global\long\def\Span{\operatorname{span}}
\global\long\def\var{\operatorname{Var}}

\global\long\def\norm#1{\left\Vert #1\right\Vert }%
\global\long\def\card#1{\left|#1\right|}%
\global\long\def\prob#1{\mathbb{P}\left[#1\right]}%
\global\long\def\expt#1{\mathbb{E}\left[#1\right]}%
\global\long\def\expect#1{\mathbb{\mathbb{E}}\left[#1\right]}
\global\long\def\opnorm#1{\left\Vert #1\right\Vert _{\mathrm{op}}}%
\global\long\def\ip#1{\left\langle #1\right\rangle }%
\global\long\def\nunorm#1{\left\Vert #1\right\Vert _{\ast}}%
\global\long\def\twonorm#1{\left\Vert #1\right\Vert _{2}}%
\global\long\def\fnorm#1{\left\Vert #1\right\Vert _{F}}%
\global\long\def\inftynorm#1{\left\Vert #1\right\Vert _{\infty}}%
\global\long\def\twotoinftynorm#1{\left\Vert #1\right\Vert _{2 \to \infty}}%
\global\long\def\infnorm#1{\left\Vert #1\right\Vert _{\infty}}%

\global\long\def\R{\mathbb{R}}%
\global\long\def\Z{\mathbb{Z}}%
\global\long\def\pmin{p_{\min}}%
\global\long\def\pthr{p_{\mathrm{thr}}}%
\global\long\def\normal{\mathcal{N}}%
\global\long\def\Su{\mathbb{S}}%
\global\long\def\ind{\mathds{1}}
\global\long\def\one{\boldsymbol{1}}
\global\long\def\zero{\boldsymbol{0}}
\global\long\def\cU{\mathcal{U}}
\global\long\def\cV{\mathcal{V}}
\global\long\def\cG{\mathcal{G}}
\global\long\def\cE{\mathcal{E}}
\global\long\def\aa{\boldsymbol{a}}
\global\long\def\bb{\boldsymbol{b}}
\global\long\def\xx{\boldsymbol{x}}
\global\long\def\uu{\boldsymbol{u}}
\global\long\def\vv{\boldsymbol{v}}
\global\long\def\ii{\boldsymbol{i}}
\global\long\def\rr{\boldsymbol{r}}
\global\long\def\cc{\boldsymbol{c}}
\global\long\def\bbeta{\boldsymbol{\beta}}
\global\long\def\indic{\mathds{1}} 
\global\long\def\ddup{\textup{d}}%

\global\long\def\lse{\textup{LSE}}%
\global\long\def\efe{\textup{EFE}}%
\global\long\def\dop{\textup{DOP}}

\maketitle

\begin{abstract}
Matrix completion tackles the task of predicting missing values in a low-rank matrix based on a sparse set of observed entries. It is often assumed that the observation pattern is generated uniformly at random or has a very specific structure tuned to a given algorithm. There is still a gap in our understanding when it comes to arbitrary sampling patterns. Given an arbitrary sampling pattern, we introduce a matrix completion algorithm based on network flows in the bipartite graph induced by the observation pattern. For additive matrices, the particular flow we used is the electrical flow and we establish error upper bounds customized to each entry as a function of the observation set, along with matching minimax lower bounds. Our results show that the minimax squared error for recovery of a particular entry in the matrix is proportional to the effective resistance of the corresponding edge in the graph. Furthermore, we show that our estimator is equivalent to the least squares estimator. We apply our estimator to the two-way fixed effects model and show that it enables us to accurately infer individual causal effects and the unit-specific and time-specific confounders. For rank-$1$ matrices, we use edge-disjoint paths to form an estimator that achieves minimax optimal estimation when the sampling is sufficiently dense. Our discovery introduces a new family of estimators parametrized by network flows, which provide a fine-grained and intuitive understanding of the impact of the given sampling pattern on the relative difficulty of estimation at an entry-specific level. This graph-based approach allows us to quantify the inherent complexity of matrix completion for individual entries, rather than relying solely on global measures of performance. 
\end{abstract}

\section{Introduction}
\label{sec:intro}

High-dimensional data appears in a broad variety of settings, including e-commerce, business operations, social networks, biological research, and medical records among others. 
Before this data can be used to aid in decision-making, it often needs to be cleaned, which may involve denoising the dataset and predicting missing entries in the data.
This enables one to estimate underlying patterns in the data given measurements that can contain significant amount of noise, errors, and missing data. 
A rich literature in matrix completion and matrix estimation has been developed over the past 15 years to address this challenging and important task. 
Matrix completion studies the task of estimating a low-rank matrix $M^* \in \R^{n \times m}$ given potentially noisy observations of a subset $\Omega \subseteq [n]\times[m]$ of its entries~\cite{davenport2016overview, chen2018harnessing}. 
It is a powerful tool in various areas such as collaborative filtering~\cite{rennie2005cf}, system identification~\cite{liu2010systemid}, sensor localization~\cite{biswas2006sensor}, and causal inference~\cite{athey2021completion}.
The developed algorithms have been widely adopted in a variety of industries, demonstrating that these methods are truly practical even when theoretical assumptions may be violated.     

The basic task of matrix completion is summarized as follows:
We observe a noisy and sparsely sampled data matrix
\begin{equation}
M = \Omega \circ (M^\ast + E),
\end{equation}
where $E \in \R^{n \times m}$ is mean zero additive noise, $\Omega \in \{0,1\}^{n \times m}$ is the observation matrix, and $M^\ast \in \R^{n\times m}$ is the underlying signal of interest. 
Given $M$ and $\Omega$, the goal is to output an estimate $\hat{M}$ for the signal $M^\ast$. 
As this task is impossible without imposing some assumptions on $M^\ast$, the literature typically assumes that $M^\ast$ is low rank. There is both empirical and theoretical evidence~\cite{udell2019lowrank} that the low-rank assumption is in fact commonly satisfied for large matrices.
    
Fundamental research questions include what is the minimal amount of data required to guarantee statistical recovery of the underlying true pattern governing the data? How do we design efficient and optimal algorithms? 
Numerous methods with provable guarantees have been developed, including convex relaxation~\cite{candes2010matrix, candes2012exact}, alternating minimization~\cite{koren2009matrix,jain2013alternating} and spectral algorithms~\cite{keshavan2009matrix}. In these earlier works, it is typically assumed that observations are uniformly distributed across the matrix, and the goal was to either derive conditions for exact recovery in the noiseless setting (e.g.~\cite{candes2010matrix, keshavan2009matrix}) or characterize the mean squared error averaged across entries in the presence of noise, equivalent to bounding the Frobenius norm of the error matrix $\|\hat{M} - M^\ast\|_F$ (e.g.~\cite{keshavan2010matrix}). Other works such as~\cite{Abbe2020entrywise} show uniform bounds on the maximum entrywise error $\max_{i,j} |\hat{M}_{ij} - M^\ast_{ij}|$.
Despite leading to very beautiful theoretical results, the uniform observation assumption is often violated in real-world applications due to selection biases and potential confounding. 
In addition, more fine-grained, entry-specific error bounds are sometimes needed in order to make downstream decisions that involve comparing estimates of individual entries. 

There has been a growing interest in relaxing the uniform observation requirement by considering highly non-uniform observation models~\cite{xi2023nonuniform} or deterministic ones~\cite{foucart2021weighted, heiman2014deterministic, hazan2024partial,chatterjee2020deterministic}. \cycomment{I guess there are also ones that look at nonuniform but only up to a constant \cite{bhattacharya2022matrix} ... there are some older ones too in the literature I think that use a similar idea of estimating the probability of observation and then dividing by the estimated probability.}
However, when the sampling pattern $\Omega$ is highly non-uniform, we should not expect uniform error bounds, as it may be expected that some entries can be recovered to a high accuracy while other entries cannot be recovered at all. 
A few recent works~\cite{xi2023nonuniform, agarwal2021causal} consider the challenge of entry-specific error bounds with non-uniform or deterministic observation models.
However, these results require strong structural assumptions on the observation pattern, as both works~\cite{xi2023nonuniform, agarwal2021causal} require one or more densely/completely observed submatrices of adequate sizes. In a generic dataset where this structure is missing, their methods fall short as their error bounds can be of constant order. 

We focus on the task of entry-specific estimation under completely arbitrary observation patterns:

\begin{tcolorbox}[center,colback=gray!5!white,colframe=gray!75!black,size=normal,halign=flush center,fontupper=\itshape]
{\bf entry-specific Matrix Estimation \textemdash} Given any arbitrary unstructured observation set $\Omega$, can we output an estimate $\hat{M}$ such that for each entry $(i,j)$, the performance matches the entry-specific minimax optimal error? How does the entry-specific minimax optimal error depend on $(i,j)$ and $\Omega$?
\end{tcolorbox}

An estimation algorithm that achieves the entry-specific minimax optimal error guarantees for a given dataset, regardless of the observation pattern $\Omega$, effectively extracts the maximal amount of information from the data. These entry-specific guarantees enhance downstream decision-making tasks and can also indicate whether additional data collection is necessary. \xxedit{Furthermore, entry-specific guarantees open up new opportunities for optimizing the sampling process itself. In practice, this can lead to more efficient data collection strategies, where resources are allocated based on the specific challenges posed by the sampling pattern. }
Several recent works~\cite{agarwal2021causal, xi2023nonuniform} consider entry-specific guarantees. However, our work is the first to derive such guarantees for arbitrary observation patterns. 

With all conditions on the observation set $\Omega$ removed, it may be unclear where to start, as the previous literature either relied heavily on uniform-like sampling through spectral properties of random matrices or expander graphs or used a completely/densely observed large submatrix as an anchor for estimations.
In fact, ~\cite{hardt2014computational} shows that matrix completion becomes computationally intractable without distributional assumptions on the observation pattern, even if we observe 90\% of the entries, the output matrix can have a higher rank, and the underlying signal matrix is incoherent.
\xxcomment{\cite{hardt2014computational} claims "Matrix Completion
remains computationally intractable even if the unknown matrix has rank 4 but we are
allowed to output any constant rank matrix, and even if additionally we assume that the
unknown matrix is incoherent and are shown 90\% of the entries." Not sure if this is the hardness result we want since they didn't do entry-specific estimation.
}

As a step towards the broader goal of entry-specific matrix estimation, we consider two special subclasses of low-rank matrices: additive matrices and rank-1 matrices, introduced in Subsection~\ref{subsec:intro_additive} and~\ref{subsec:intro_rank1}, respectively.
For these, we provide entry-specific guarantees under arbitrary observation patterns. The algorithms are constructed from network flows, and the entry-specific guarantees are closely tied to graph connectivity metrics as quantified through the effective resistance and minimum cut. 
Our guarantees are conditioned on the observation pattern $\Omega$, allowing for correlation between $\Omega$ and the underlying ground truth matrix, with the key assumption that the observation noise remains independent of $\Omega$.

\subsection{Additive Matrices}
\label{subsec:intro_additive}

    \begin{defn}[Additivity]
        \label{defn:additive_rank1}
        A matrix $M^\ast \in \R^{n\times m}$ is an \emph{additive} matrix (or satisfies additivity) if there exist vectors $a^\ast \in \R^{n}$ and $b^\ast\in\R^{m}$ such that
        \begin{equation}
        \label{eq:additive_rank1_defn}
            M^\ast_{ij} = a_i^\ast + b_j^\ast,\quad i \in [n], \; j \in [m].
        \end{equation}
    \end{defn}
    
    While such additive structure may seem restrictive, these models are in fact widely used in economics and social science, referred to as two-way additive fixed effect models~\cite{imai2021fixed}, where the outcome $M_{ij}^\ast$ is the sum of a row fixed effect $a_i^\ast$ and a column fixed effect $b_j^\ast$.
    
    \paragraph{Panel Data Analysis under Two-Way Fixed Effects Model.} We will specifically discuss the implications of our results for estimating causal effects from panel data under a heterogeneous two-way additive fixed effects model. Suppose we have a partially observed panel dataset of $N$ units and $T$ time periods, where a unit $i$ at time $t$ may be exposed to treatment or control, and the treatment indicator variable is denoted by $X_{it} \in \{0,1\}$. 
    Let $\Omega \in \{0,1\}^{N \times T}$ indicate the observed entries. Therefore, $\Omega^0 = \Omega \circ (1-X)$ denotes the entries observed under control, and $\Omega^1 = \Omega \circ X$ denotes the entries observed under treatment.
    
    \begin{defn}
    \label{defn:fixed_effects}
        The outcomes matrix $Y$ satisfies the \emph{heterogeneous two-way fixed effects model} if
        \begin{align}
        \label{eq:linear_fixed_effect}
            Y_{it} =  \alpha_i + \gamma_t + \beta_{it} X_{it} + E_{it}, \;\text{ and } \beta_{it} = a_i + b_t, \quad\text{ for } i \in[N], \; t\in[T],
        \end{align}
        where $X_{it} \in \{0, 1\}$ indicates the binary treatment, $Y_{it}$ is the outcome, $\alpha_i$ and $\gamma_t$ are the unit and time fixed effects, $\beta_{it}$ denotes the individual causal effects, and $E_{it}$ is a mean-zero noise term.
    \end{defn}
    
    Looking at~\eqref{eq:linear_fixed_effect}, we notice that for the control group, the outcome admits a simple expression
    \begin{align}
    \label{eq:Y_control_decomp}
        Y_{it} = \alpha_i + \gamma_t + E_{it}, \quad \text{ if } X_{it}=0.
    \end{align} 
    For the treatment group, we have a similar expression
    \begin{align}
    \label{eq:Y_treatment}
        Y_{it} = ( \alpha_i + a_i ) + ( \gamma_t + b_t)  + E_{it}, \quad \text{ if } X_{it}=1.
    \end{align}
    The individual causal treatment effect is the difference in outcomes under treatment and control, which is equal to $\beta_{it} = a_i + b_t$ for this model. We can estimate $\beta_{it}$ by taking the difference between predictions of the outcome of unit $i$ at time $t$ under treatment and control. As the outcomes under treatment and control both satisfy additivity, the causal estimation task reduces to matrix completion on additive matrices corresponding to the treatment dataset and the control dataset. The fixed effect assumption is common in economics and the two-way fixed effects regression is considered as the default method to estimate causal effects from panel data, as mentioned in~\cite{imai2021fixed}. The model in Definition \ref{defn:fixed_effects} generalizes beyond the standard two-way fixed effects model by allowing for heterogeneity in the causal effects $\beta_{it}$. 
    
\subsection{Rank-1 Matrices}
\label{subsec:intro_rank1}

    \begin{defn}[Rank-1]
        \label{defn:rank1}
        A matrix $M^\ast \in \R^{n\times m}$ is rank-1 if there exist vectors $a^\ast \in \R^{n}$ and $b^\ast\in\R^{m}$ such that
        \begin{equation}
        \label{eq:latent_variable}
            M^\ast_{ij} = a_i^\ast  b_j^\ast,\quad i \in [n], \; j \in [m].
        \end{equation}
    \end{defn}
    
    The rank-$1$ matrix completion task arises in a few engineering and social data applications which we discuss below.
            
    \paragraph{$\Z_2$-synchronization} Suppose we have multiple unknown labels from $\Z_2 = \{\pm 1\}$ and we want to recover them from noisy pairwise observations. This problem is a simple example of more general models such as phase synchronization and $\mathrm{SO}(3)$-synchronization, with applications in cyro-EM~\cite{shkolnisky2012viewing} and calibration of cameras~\cite{tron2009distributed}. 
    Specifically, consider a signal vector $ x \in \{\pm 1\}^n$. Let the observation matrix be $Y = xx^\top + \sigma W$. The noise matrix $W \in \R^{n \times n}$ satisfies: $W_{ij} = W_{ji}$, $W_{ii}=0$, and $\{W_{ij}, i<j\}$ are independent $\normal(0,1)$ variables. 
    Note that there is no missingness pattern associated with the observation $Y$ so the main task is to denoise $Y$. It has been proved that the information-theoretic threshold for exact recovery is $\sigma \lesssim \sqrt{\frac{n}{\log n}}$, as established by~\cite{bandeira2017tightness}. 
    Common estimators for this recovery problem include the maximum likelihood estimator~(MLE) and the spectral estimator.
    
    \paragraph{Crowdsourcing} Crowdsourcing refers to the approach of data collection for tasks that require human expertise such as image recognition. We assign different tasks to workers with varying competence levels to obtain binary or multi-class labels. The goal is to estimate the unknown ground truth labels, which could be challenging, especially when workers' reliability levels are highly variable. A classic model for crowdsourcing is the Dawid \& Skene (DS) single coin model~\cite{dawid1979maximum}. In this model, each worker makes mistakes independently from other workers. The probability of worker $i$ making a mistake is task-independent and only depends on $i$. 
    According to~\cite{berend2014consistency}, estimating the competence levels of the workers is an essential step to optimal estimation of the ground truth labels. This step can be framed as a rank-$1$ matrix completion problem~\cite{ma2020gradient, ma2020adversarial}. 
    Suppose that tasks are binary, and let $q^*_j \in \{\pm 1\}$ be the correct answer to the $j$th question. Suppose the $i$th worker correctly answers a question with probability $p^*_i$. Then the expectation of worker $i$'s answer to task $j$ is $p^*_iq^*_j + (1-p^*_i)(-q^*_j) = (2p^*_i - 1) q^*_j$, which is rank-1.

    \paragraph{Preference/ranking model} The Bradley-Terry model~\cite{bradley1952rank} is a model predicting the outcomes of pairwise comparisons. For example, it can be used to predict the probability that a sports team $i$ wins a game against team $j$ via $p_{ij}=\prob{i>j} = \frac{p_i}{p_i + p_j}$, where $p_i$ represents a score measuring the strength or quality of $i$. The exponential score function $p_i = e^{\beta_i}$ indicates that the odds of the preference probabilities, $\frac{p_{ij}}{1-p_{ij}} = e^{\beta_i} / e^{\beta_j}$, form a rank-$1$ matrix.

\subsection{Summary of Contributions}

    We summarize our contributions as follows:

    \medskip \noindent 
    1) We propose network flow-based algorithms for estimating additive or rank-$1$ matrices under arbitrary observation patterns. The algorithms reliably predict missing values with \emph{entry-specific} estimation guarantees, that align with the minimax lower bounds under certain conditions.
        Our theoretical guarantees provide insight into the relative estimation hardness of the missing entries in the data. Our bounds show that entries corresponding to vertices with stronger connectivity in the bipartite graph formed by the observations exhibit smaller estimation errors. This finding underscores the connection between matrix completion and graph theory, potentially opening up new areas for exploration.
    
    \medskip \noindent 
    2) For additive matrices, our estimator is based on the electrical flow derived from viewing the bipartite graph as an electrical network and applying unit current between the vertices of interest. 
            \begin{itemize}
                \item We show that the entry-specific error upper bounds match the local minimax lower bounds up to logarithmic factors. In addition, the estimation variance, governed by the effective resistance between the corresponding vertices, reflects how graph connectivity affects estimation quality, offering a clear view of estimation difficulty and paving the way for future research.
                \item We show that our electrical flow-based estimator is equivalent to the least squares estimator, despite their different approaches—entry-level versus global optimization. This connection provides an intuitive interpretation of entry-specific estimation quality of the least squares estimators and reveals that the least squares estimator achieves a minimax lower bound for each entry. Moreover, this equivalence confirms that our electrical flow-based estimator is the uniformly minimum-variance unbiased estimator~(UMVUE).
                \item When applied to two-way fixed effects models, our algorithm enables practitioners to predict \emph{individual} causal effects instead of an aggregated one, with provable entry-specific guarantees. It also establishes the necessary and sufficient conditions for identifying these individual causal effects.
                We further uncover a surprising yet intuitive connection between difference-in-differences (DiD) estimators and TWFE regression: both are part of a family of estimators parameterized by flows over networks.
                 Specifically, restricting flows to length-3 paths yields DiD, while electrical flow corresponds to a variant of the TWFE regression estimator.
                 \item Our proof for local minimax optimality provides a novel perspective through voltages in the electrical network. We discover that the dual variables for optimizing the variance among all unit flows correspond precisely to these voltages. This technique can be generalized to linear regression problems, providing an alternative method for constructing hard instances to prove minimax optimality.
            \end{itemize}
    
    \medskip \noindent 
    3) For rank-$1$ matrices, our estimator uses network flows constructed by edge-disjoint paths in the graph.
            \begin{itemize}
                \item The entry-specific error upper bounds again depend on the connectivity of the vertices of interest. Specifically, they are affected by the maximum number of edge-disjoint paths (equal to the size of the minimum edge cut) and the maximum length of these paths. When there are more and shorter paths between certain vertices, the estimation quality of the corresponding entry becomes better. We further show that the estimator is minimax optimal when the observation is sufficiently dense or has a certain structure.
                \item Similarly, we present a novel construction using minimum cut to show minimax optimality. This approach not only provides a fresh perspective but also clarifies how the estimation quality depends on minimum cuts.
            \end{itemize}

    \paragraph{Notation.}
    We use $c,C$ etc.\ to denote positive absolute constants, which might change from line to line. Let $[n]:=\{1,2,\dots,n\}$.
    For non-negative sequences $\{f_n\}$ and $\{g_n\}$, we write $f_n \lesssim g_n$ when $f_n \le C g_n,\forall n$, and write $f_n \asymp g_n$ or $f_n = \Theta(g_n)$ when both $f_n \lesssim g_n$ and $f_n \gtrsim g_n$ hold. While $\Omega$ represents a binary matrix, we may also refer to $\Omega$ as a subset consisting of the index pairs $(i,j)$ for which $\Omega_{ij} = 1$.
\section{Related Literature}

\paragraph{Matrix Completion: Beyond Uniform Sampling.}
    The problem of matrix completion studies estimating a low-rank matrix based on potentially noisy and partial observations of its entries~\cite{davenport2016overview, chen2018harnessing}. 
    Earlier research~\cite{candes2010matrix,koren2009matrix, keshavan2009matrix} typically assumes that observations are uniformly distributed across the matrix, meaning each entry has an equal chance of being observed and that sampling is done independently across entries. The goals of estimation differ between noiseless and noisy observations. 
    In the noiseless case, we focus on deriving conditions for exact recovery (e.g.,~\cite{candes2010matrix, keshavan2009matrix}), while in the noisy scenario, the aim is to characterize an aggregate error metric, often the mean squared error averaged across entries (e.g.,~\cite{keshavan2010matrix}). 
    However, the assumption of uniform observation is often violated in practical applications due to selection biases and potential confounding. 
    Additionally, for decision making that involves comparing individual entry estimates, more detailed, entry-specific/entrywise error bounds may be necessary alongside the mean squared error.
    
    There has been a growing interest in relaxing the uniform observation requirement by considering deterministic ones~\cite{agarwal2021causal, foucart2021weighted, heiman2014deterministic, kanade2022partial} or highly non-uniform observation models~\cite{xi2023nonuniform}.
    Other works, such as~\cite{Abbe2020entrywise}, provide more fine-grained error bounds that identify the worst-case estimation quality, under uniform sampling. 
    More recently, quite a few works~\cite{xi2023nonuniform, agarwal2021causal} successfully address both aspects at the same time, showing entry-specific error bounds with non-uniform or deterministic observation models.
    However, there remains a presumed structure in the observation pattern, since both works~\cite{xi2023nonuniform, agarwal2021causal} require one or more densely/completely observed submatrices of adequate sizes. In a generic dataset lacking such structure, these methods may fall short, as their error bounds could be of constant order. 
    Our work tackles the challenge of deriving optimal and meaningful entry-specific error bounds under arbitrary, potentially unstructured observation patterns. Addressing this challenge can lead to more robust and broadly applicable methods, especially in contexts where structured observation patterns are absent.

\paragraph{Panel Data: Heterogeneity and Arbitrary Pattern.}
    One real-world application of matrix completion is estimating treatment effects using panel data in causal inference. In such cases, the observation and treatment patterns are far from uniform and may be correlated with the observation matrix. It is no longer reasonable to rely on the uniform sampling assumption and methods that are effective under more general settings are needed. Traditional panel data methods often assume simple block or staggered treatment patterns~\cite{athey2021completion, athey2022design}, with blocks serving as a foundation for further estimation. 
    In reality, the observation and treatment patterns can be highly irregular and may lack block structures. 
    Another common practice is to focus on estimating average effects, which overlooks the underlying heterogeneity in the data. A recent line of work~\cite{goodman2021difference, callaway2021difference, sun2021estimating, de2023two, imbens2021staggered, jochmans2019fixed} allows heterogeneous treatment effects and assumes certain structures of the effects. 
    In Subsection~\ref{subsec:panel_data}, we address the challenge of arbitrary treatment patterns and heterogeneous treatment effects, uncovering a surprising yet intuitive connection between DiD estimators and TWFE regression. 
    There has been a flurry of work~\cite{wooldridge2021two, goodman2021difference, ruttenauer2024can} drawing connections to better understand TWFE regression. However, we are the first to interpret these estimators using network flows, which carries physical meaning and paves the way for future exploration. 
    There has also been work discussing how graph connectivity affects estimation. For instance, \cite{jochmans2019fixed} shows that graph connectivity as characterized by the spectral gap and overlap of direct neighbors is key to the estimation accuracy of TWFE regression for fixed effect models. Our result further gives an explicit and exact equivalence of the variance of the best estimator with the effective resistance, providing fine-grained entry-specific results.

\cycomment{Anything notable to comment on regarding rank 1 matrices? In the rank 1 setting ... have there been any results that go beyond uniform sampling?}

\section{Entry Specific Estimation for Additive Matrices}

We first focus on the estimation task for additive matrices, as introduced in Definition~\ref{defn:additive_rank1}.
We introduce a family of unbiased estimators parameterized by flows on an undirected bipartite graph constructed from the observation pattern $\Omega$. We introduce the estimator intuitively via an example of a flow consisting of a single path, and we subsequently show that the variance-minimizing estimator will use the electrical flow.

\subsection{Graph Construction}
\label{subsec:additive_graph_construction}
    For a fixed $\Omega$, we construct an undirected bipartite graph $\cG(\Omega) = (\cV, \cE(\Omega))$ where the edges in the graph are given by the sparsity pattern of $\Omega$.
    The left vertices are denoted as $\{u_i\}_{i \in [n]}$, corresponding to the $n$ rows, and the right vertices as $\{v_j\}_{j \in [m]}$, representing the $m$ columns. 
    The vertex set is therefore $\cV = \{u_i\}_{i \in [n]} \cup \{v_j\}_{j \in [m]}$.
    The vertex $u_i$ is adjacent to $v_j$  if and only if $\Omega_{ij} = 1$. 
    Hence, the edge set is $\cE(\Omega) = \{ ( u_i, v_j ): \Omega_{ij} = 1 \}$. See Figure~\ref{fig:bipartite} for illustrations of the bipartite graphs constructed from given observation patterns.
    We define $n_v = |\cV| = n+m$ and $n_e \equiv n_e(\Omega) = |\cE(\Omega)| = \one^\top \Omega \one$ as the number of vertices and edges in $\cG(\Omega)$. 
    For simplicity of presentation, we assume the vertices have the following ordering: $u_1, u_2, \dots, u_n, v_1, v_2, \dots, v_m$, i.e.~the left vertices are ordered before the right ones. 
    We will use the notation $u_i, v_j$ and their respective ordering $i, n+j$ interchangeably. 
    Under this ordering, the adjacency matrix of the bipartite graph $\cG(\Omega)$ is
    \begin{align*}
        A(\Omega) = 
        \begin{pmatrix}
            0 & \Omega \\
            \Omega^\top & 0 
        \end{pmatrix} \in \{0,1\}^{n_v \times n_v}.
    \end{align*}
    
    Without loss of generality, we assume that the graph $\cG(\Omega)$ is connected. If it is not connected, this means that $\Omega$ can be partitioned into diagonal block submatrices with zeros on the off-diagonal blocks. It is easy to argue that in such a setting the entries in the off-diagonal blocks cannot be identified, and thus it is sufficient to focus on estimation within each of the connected components.

    \begin{figure}
     \centering
     \begin{subfigure}[b]{0.4\textwidth}
         \centering
\includegraphics[width=.7\textwidth]{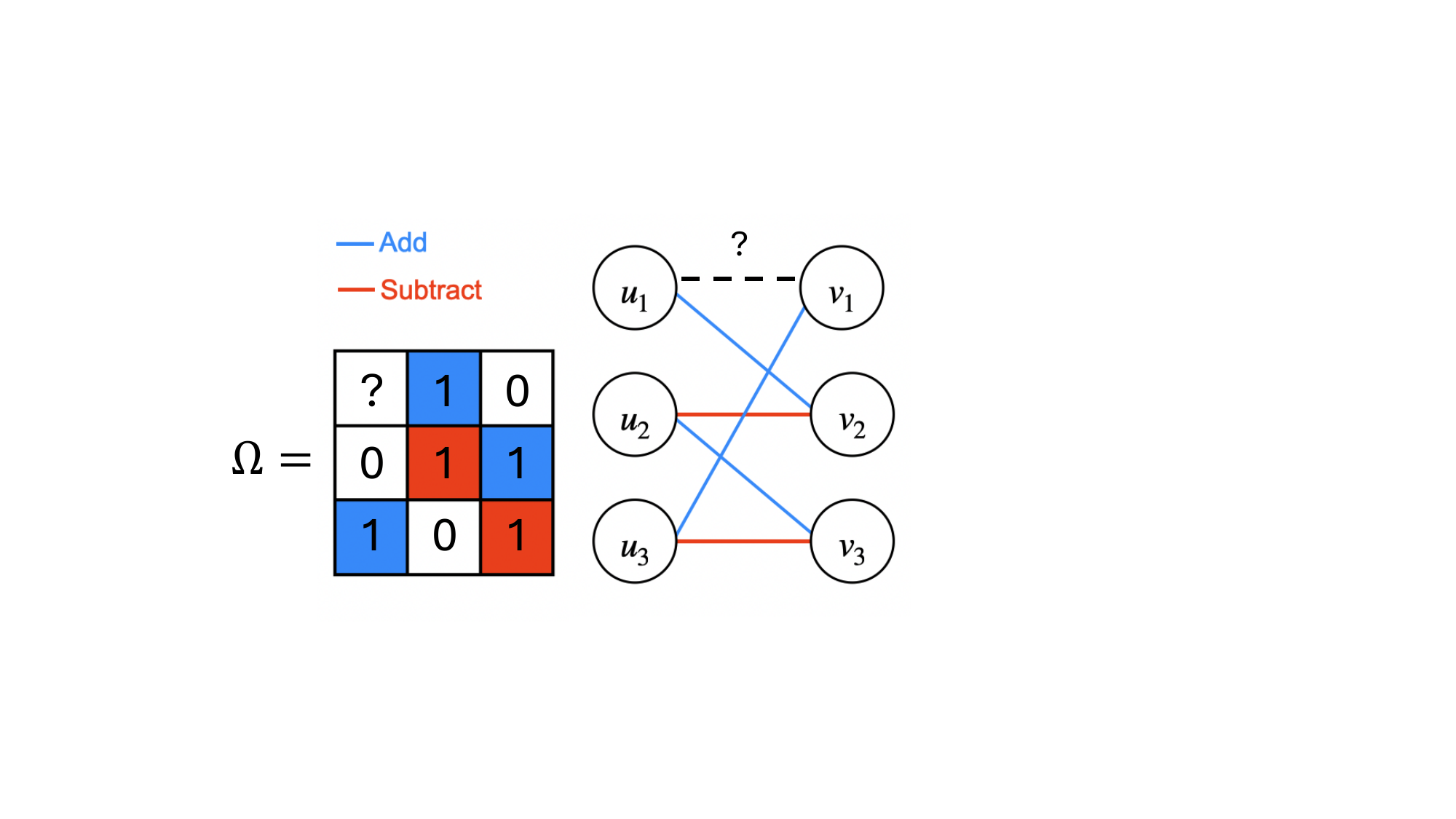}
         \caption{ \qquad \qquad \qquad}
         \label{fig:bipartite_add_subtract}
     \end{subfigure}
     \qquad
     \begin{subfigure}[b]{0.4\textwidth}
         \centering
    \includegraphics[width=.75\textwidth]{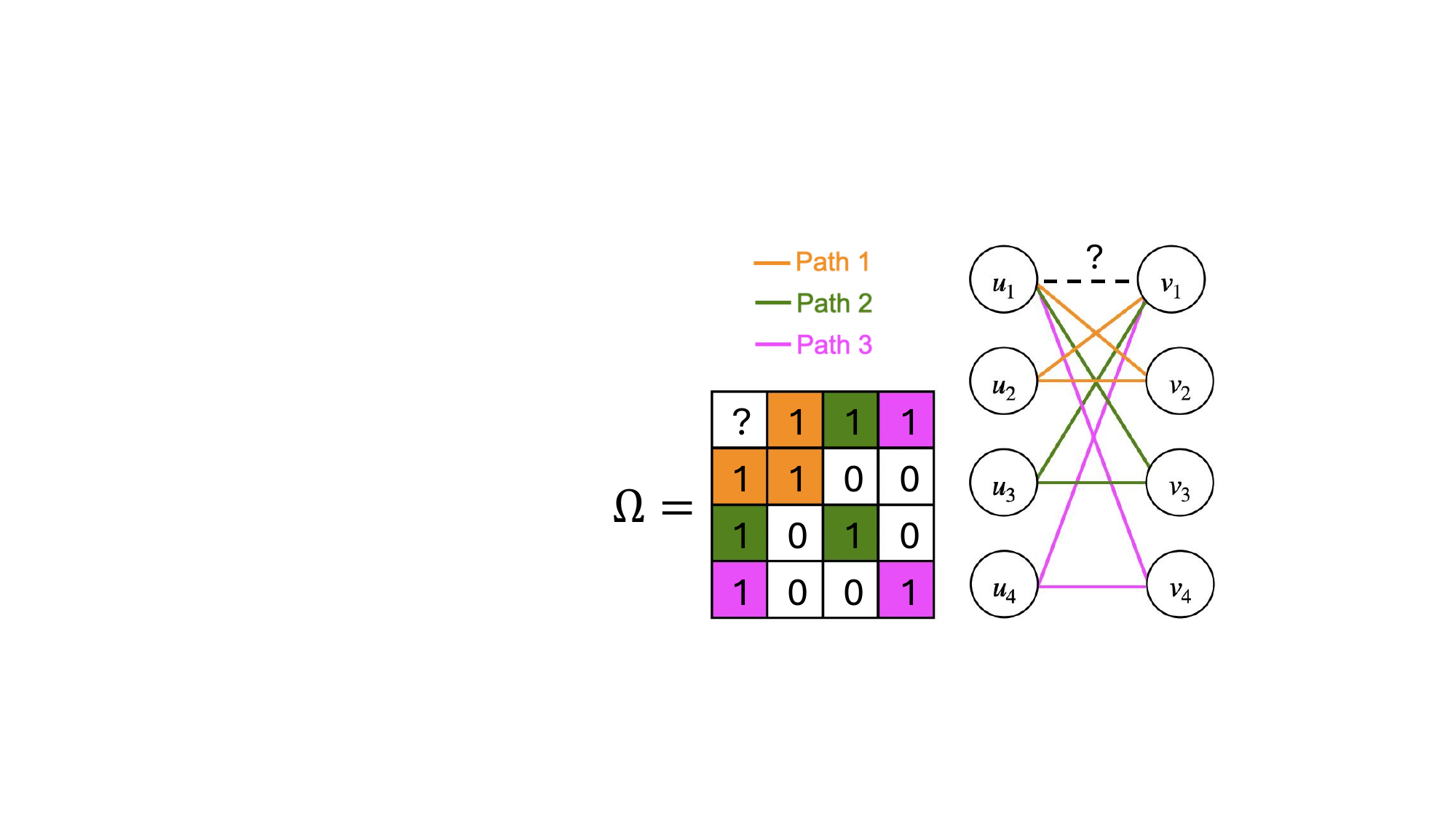}
         \caption{\qquad \qquad}
         \label{fig:bipartite_multi_path}
     \end{subfigure}
    \caption{We depict two bipartite graphs constructed from two given observation patterns. To estimate the entry $(1,1)$, we use data along the paths connecting $u_1$ and $v_1$ in the graph. In~(a), there is a path of length $\ell=5$ from $u_{1}$ to $v_{1}$. We construct an estimate by alternatingly adding and subtracting observations along the path, where blue edges indicate addition and red edges indicate subtraction. Due to the alternating signs, the latent factors corresponding to the intermediate vertices cancel out, resulting in an expected estimate of $a_1^\ast + b_1^\ast$.  
    In~(b), we have three short paths of length $3$ connecting $u_1$ and $v_1$. The orange path (path $1$) corresponds to observations in entries $(1,2), (2,2)$ and $(2,1)$; the green path (path $2$) corresponds to observations in entries $(1,3), (3,3)$ and $(3,1)$; the magenta path (path $3$) corresponds to observations in entries $(1,4), (4,4)$ and $(4,1)$.}
    \label{fig:bipartite}
    \end{figure}
    
\subsection{Network Flow Estimator} \label{sec:algo}

    Consider a concrete example depicted in Figure~\ref{fig:bipartite_add_subtract},
    where we want to estimate entry $(1,1)$ in a $3$-by-$3$ matrix, with the given $\Omega$. There is a single simple path connecting $u_1$ and $v_1$ in the corresponding bipartite graph. An unbiased estimate of $M_{11}^\ast$ can be constructed by alternating between adding and subtracting the observations on this path: 
    \[\hat M_{11} =  M_{12} - M_{22} + M_{23} - M_{33} + M_{31},\]
    where its expectation is given by 
    \begin{align*}
    \expt{\hat M_{11}} &= (a_1^\ast + b_2^\ast) - (a_2^\ast + b_2^\ast) + (a_2^\ast + b_3^\ast) - (a_3^\ast + b_3^\ast) + (a_3^\ast + b_1^\ast) = a_1^\ast + b_1^\ast = M_{11}^\ast.
    \end{align*} 
    This simple example illustrates the key intuition of the estimator. Fix $(i,j) \in [n] \times [m]$. 
    Given any path from $v_i$ to $u_j$ in the network, an unbiased estimator for $M^\ast_{ij}$ can be constructed by alternating adding and subtracting the observations on edges along the path. 
    Consider a simple path in $\cG(\Omega)$ connecting $u_{i}$ and $v_{j}$, denoted as $\path(\xx, \ell)$: $u_{x_{1}} (= u_{i}) \to v_{x_2} \to u_{x_3} \to v_{x_4} \to \dots \to u_{x_{\ell}} \to v_{x_{\ell+1}} (=v_{j})$, with path length $\ell \ge 1$. The difference-on-path estimator given by $\path(\xx, \ell)$ is
    \begin{align}
    \label{eq:single_path_est}
        \hat{M} _{ij}^{\path(\boldsymbol{x}, \ell)} = \sum_{s=1, 3, \dots, \ell} M_{x_{s} x_{s+1}} - \sum_{s=3,5,\dots,\ell} M_{x_{s} x_{s-1}}.
    \end{align}
    When $\expt{E_{ij}}=0$ for all $i,j$, the above estimator is unbiased since 
    \begin{align*}
        \expt{\hat{M} _{ij}^{\path(\boldsymbol{x}, \ell)} } = \sum_{s=1,3,\dots,\ell} ( a_{x_s}^\ast +b_{x_{s+1}}^\ast) - \sum_{s=3,5,\dots,\ell} (a_{x_s}^\ast+ b_{x_{s-1}}^\ast) = a_{i}^\ast + b_{j}^\ast = M^\ast_{ij}.
    \end{align*}
    In fact, when the noise terms are i.i.d.~$\normal(0,\sigma^2)$, the random variable $\hat{M} _{ij}^{\path(\boldsymbol{x}, \ell)} - M_{ij}^\ast$ is Gaussian with zero mean and variance $\ell \sigma^2$. This implies longer paths result in an estimator with a higher variance, because~\eqref{eq:single_path_est} will involve adding and subtracting more observations.
    The properties of the difference-on-path estimator critically rely on the additive structure of the underlying matrix $M^\ast$. 
    
    When there are multiple paths in the graph, it is natural to aggregate the estimates obtained from each path to make the most out of available information. We can achieve that by considering network flows, as any unit flow on the graph from $u_{i}$ to $v_{j}$ can be decomposed as a convex combination of paths from $u_{i}$ to $v_{j}$.
    Consequently, we can construct an unbiased estimator for a unit flow by weighting the individual path estimators accordingly.
    Let $f \in \R^{n_e}$ be an arbitrary unit flow on $\cG(\Omega)$ from $u_i$ to $v_j$. The flow is defined on the edges in $\cE(\Omega)$, where we order row vertices before column vertices, such the flow on edge $(u_i, v_j)$ denoted by $f_{i, n+j}$ will be positive if the flow goes from $u_i$ to $v_j$, and negative if the flow goes from $v_j$ to $u_i$. The \emph{unit flow estimator} constructed from $f$ is given by
    \begin{align}
    \label{eq:unit_flow_est}
    \hat M^f_{ij} = \sum_{(k, \ell) \in [n] \times [m]}  f_{k, n+\ell} \Omega_{k \ell}  M_{k \ell},
    \end{align}
    where the estimator is a weighted sum of the observations, the weight of which is assigned using $f$. Note that due to the orientation of edges from rows to columns, the flow vector will naturally alternate in signs, similar to the difference-on-path estimator in \eqref{eq:single_path_est}. The unit flow estimator is equivalent to taking a convex combination of the individual difference-on-path estimates that arise from decomposing the flow into a convex combination of paths. As a result, the unit flow estimator is also unbiased as long as the observation noise is zero-mean.
    
    Intuitively, flows that utilize more paths should result in a better estimate as they incorporate more observations. However, determining the optimal way to combine estimates from paths of varying lengths or with overlapping edges may not be immediately obvious. Assigning equal weights to paths of different lengths is not optimal since shorter paths contain less noise and should therefore be given more weights. For edges that appear in multiple paths, we should also assign more weights to them as they contribute more to the aggregate estimation. These intuitions are realized by the electrical flow, as demonstrated in Figure~\ref{fig:path_varying_length_overlapping_edge}.
    
    \begin{figure}
        \centering
        \begin{subfigure}[b]{0.45\textwidth}
        \centering
            \includegraphics[width=.4\textwidth]{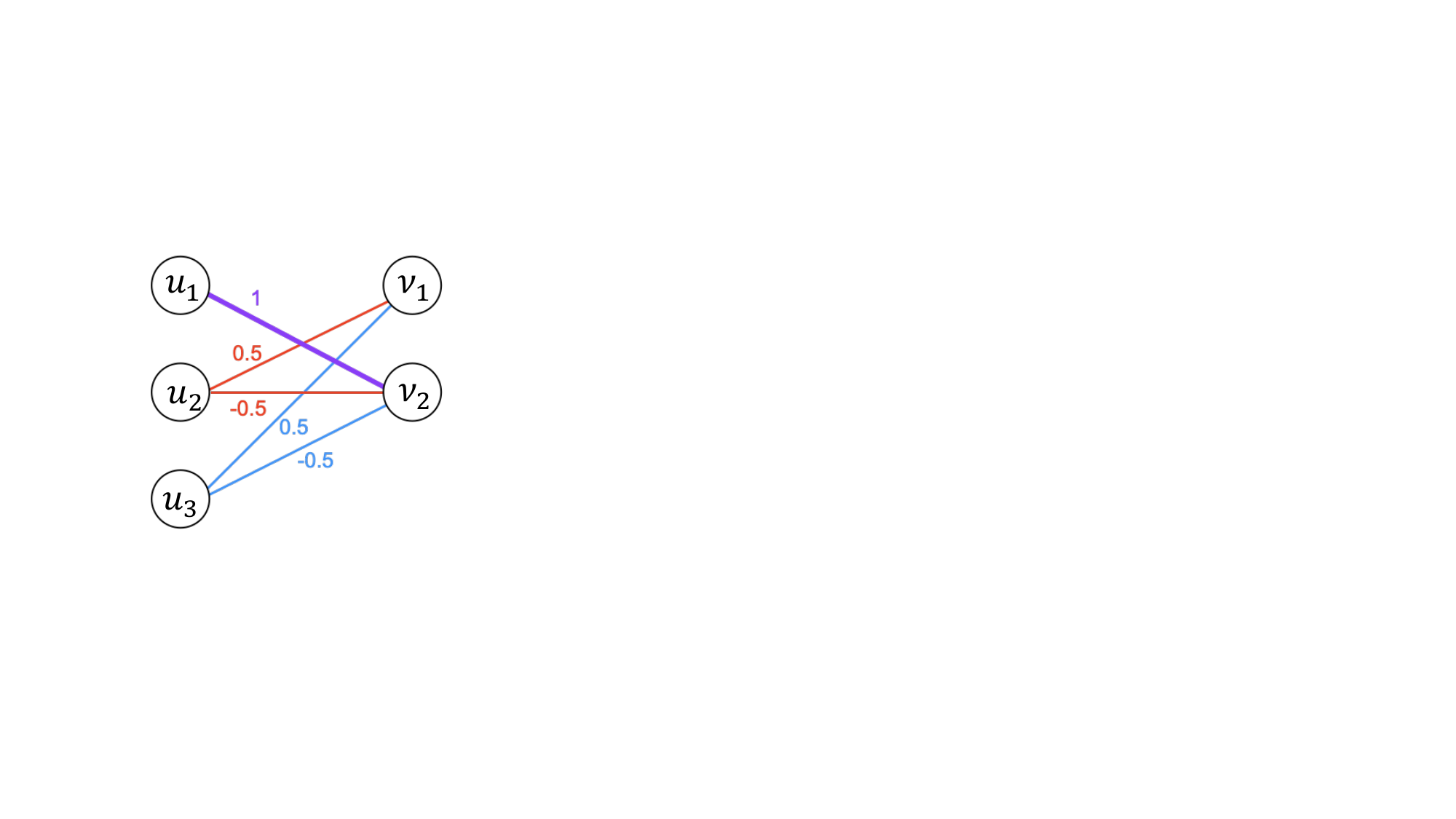}
            \caption{~}
            \label{fig:path_overlapping_edge}
        \end{subfigure}
        \begin{subfigure}[b]{0.45\textwidth}
        \centering
            \includegraphics[width=.3\textwidth]{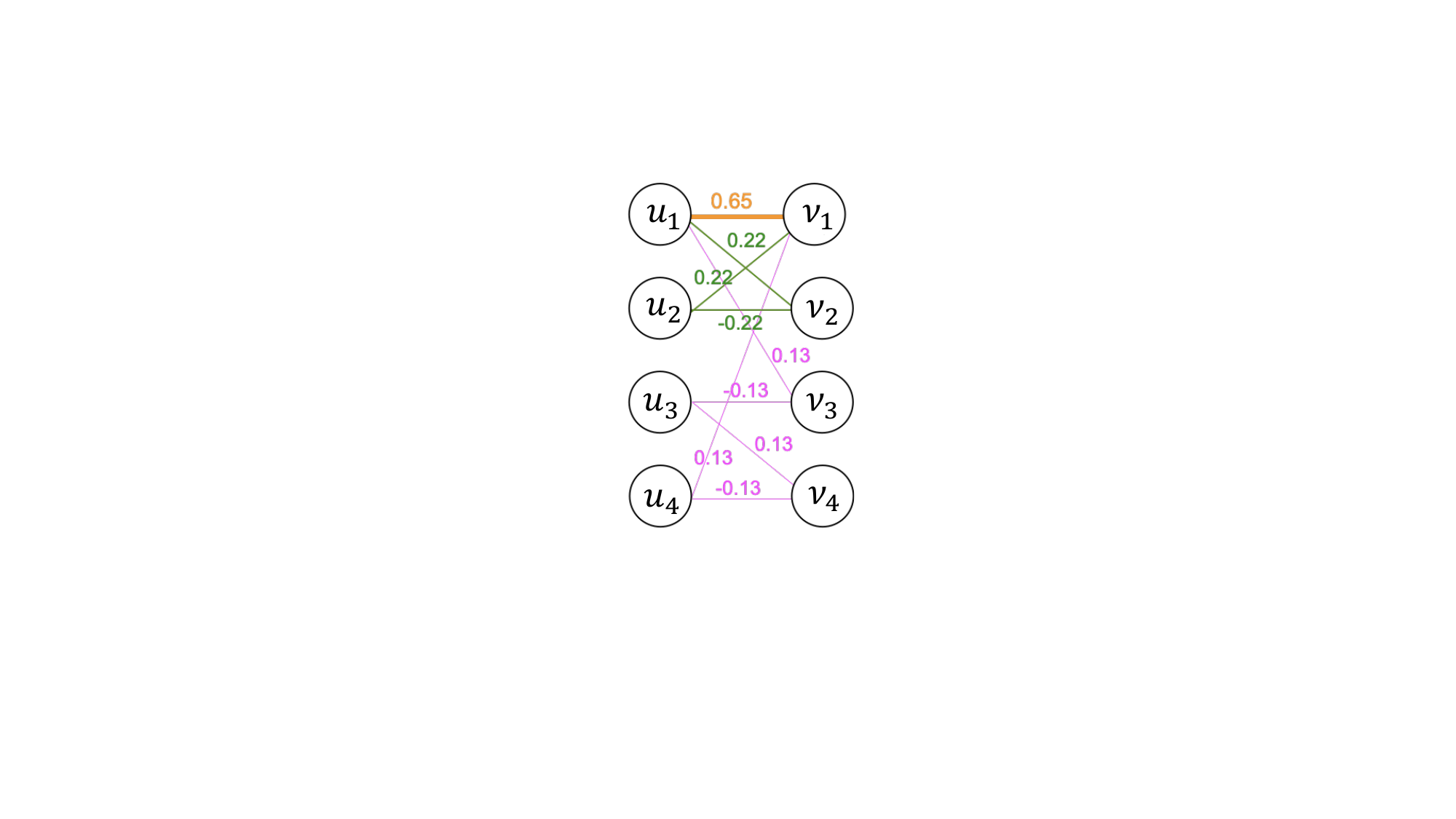}
            \caption{~}
            \label{fig:path_varying_length}
        \end{subfigure}
        \caption{Examples depicting electrical flow over graphs that contain overlapping paths and paths of varying lengths. (a) Electrical flow puts higher weight on the edge that overlaps multiple paths. (b) Electrical flow puts higher weight on shorter paths relative to longer paths.}
    \label{fig:path_varying_length_overlapping_edge}
    \end{figure}

    \paragraph{Electrical Flow Estimator~(EFE).}
    Within the class of unbiased flow estimators, we can ask which flow minimizes the variance, for which there is a surprisingly simple answer. We can view the observation graph as an electrical network as depicted in Figure~\ref{fig:electrical_network},
    where each edge $e$ represents a resistor with resistance $r_e=1$. 
    For a given flow $f$, the electrical energy of that flow is in fact proportional to the variance of the corresponding unit flow estimator.
    As a result, the electrical flow estimator~(EFE), obtained by letting $f$ be equal to the unit electrical flow, is the optimal variance-minimizing flow estimator as it is the unique unit flow that minimizes the electrical energy by Thomson's principle. 
    The variance of the resulting EFE is proportional to the effective resistance\footnote{The effective resistance between $s$ and $t$ in an electrical network is equal to the potential difference that appears across $s$ and $t$ when a unit current source is applied between them. The effective resistance is small when there are many short paths between $s$ and $t$, and it is large when there are few paths between $s$ and $t$ that tend to be long.}, which measures the connectivity between $u_i$ and $v_j$~\cite{tetali1991random, klein1993er}. This connects to a rich literature in physics and computer science that studies properties of network flows, electrical flows, and effective resistance as a function of the network structure~\cite{ghosh2008minimizing}.
    If we attach a battery in the electrical network that applies a voltage difference between $u_i$ and $v_j$ equal to the effective resistance $R(u_i,v_j)$, the resulting current $\ii$ that flows from $u_i$ to $v_j$ corresponds to the unit electrical flow.
    In Section~\ref{sec:guarantees}, we will show that the electrical flow estimator achieves both the minimax lower bound and the minimum variance when the noise terms $E_{ij}$'s are i.i.d.~$\normal(0,\sigma^2)$.

    \begin{figure}
        \centering
        \includegraphics[width=0.5\linewidth]{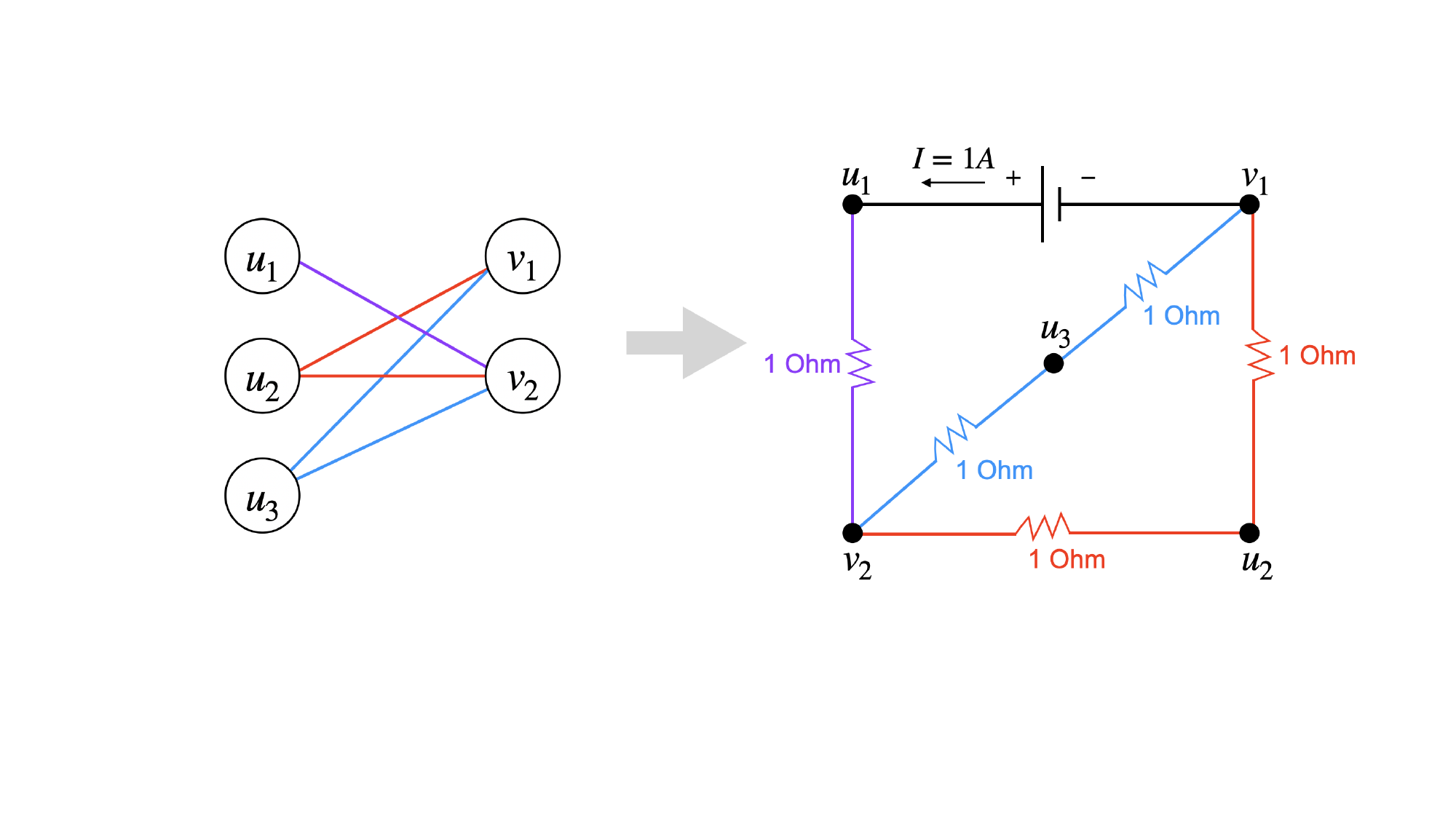}
        \caption{Electrical network constructed from a bipartite graph.}
        \label{fig:electrical_network}
    \end{figure}

    We give a summary of electrical flows and their properties in Appendix~\ref{appen:effective_resistance}.
    Applying equations~\eqref{eq:electrical_flow_defn} and~\eqref{eq:effective_resistance_defn} from Appendix~\ref{appen:effective_resistance}, 
    we can give the following explicit expression for the unit electrical current between $u_i$ and $v_j$, \cycomment{should we change to use $(s,t)$ instead of $(i,j)$ so we don't clash with the $\ii$ notation? Also if we leave the stuff in appendix, should we at least state the definition of the incidence matrix and Laplacian matrix here?}
    \begin{align}
        \ii(\Omega,i,j) = B(\Omega) L^+ (\Omega) (e_{i} - e_{n+j}), \label{eq:electrical_flow_omega_defn} 
    \end{align}
    where $B(\Omega)$ and $L^+(\Omega)$ are the oriented incidence matrix and the pseudo-inverse of the Laplacian matrix defined by the adjacency matrix $A(\Omega)$.
    Hence, the \emph{electrical flow estimator} has the form 
    \begin{align}
    \label{eq:electrical_flow_est}
        \hat M_{ij}^{\efe} &= \sum_{(k, \ell) \in [n] \times [m]}\ii(\Omega,i,j)_{k, n + \ell}   \Omega_{k\ell}  M_{k \ell}.
    \end{align}
    Define an operator $\vec_{\Omega}(\cdot): \R^{n \times m} \to \R^{n_e}$ that maps a data matrix $M$ with missing entries to a vector composed of only the observations, i.e.~$\vec_{\Omega}(M) = (M_{k \ell})_{(u_k, v_\ell) \in \cE(\Omega)}$ in row-major order.
    Under this notation, we obtain a matrix-vector formula of the electrical flow estimator:
    \begin{align}
        \hat M_{ij}^{\efe} &= \langle \ii (\Omega,i,j) , \vec_{\Omega}(M)  \rangle = \langle B(\Omega) L^+(\Omega) (e_i - e_{n+j}) , \vec_{\Omega}(M)\rangle. \label{eq:closed_form_flow_est}
    \end{align}
    The computation complexity of $\hat M_{ij}^{\efe}$ is determined by computing $L^+(\Omega)$, which is $O((n+m)^3)$. 
    
    Next, we introduce an equivalence result connecting the electrical flow estimator to the least squares estimator, which provides simpler computation compared to~\eqref{eq:closed_form_flow_est}. \cyedit{This equivalence hints at the generalization of our result to linear regression tasks.}
    The least squares estimator solves the following convex problem to estimate the latent factors:
    \begin{align}
         \label{eq:lse_factor_defn}
            (\hat a, \hat b) = \underset{(a, b) \in \R^{n_v} }{\argmin} \; f(a, b) \coloneqq \norm{\Omega \circ (a \one_n^\top + \one_m b^\top - M)}_F^2.
    \end{align}
    The following Lemma gives a closed-form expression for the least squares estimator with the minimum Euclidean norm. We partition the pseudo-inverse of the Laplacian into $4$ blocks:
    \begin{align}
    \label{eq:pinv_Laplacian_partition}
        L^+(\Omega) = \begin{pmatrix}
                \Gamma_{11} & \Gamma_{12} \\
                \Gamma_{21} & \Gamma_{22}
            \end{pmatrix},
    \end{align}
    where $\Gamma_{11} \in \R^{n \times n}, \Gamma_{12} \in \R^{n \times m} , \Gamma_{21} \in \R^{m \times n}$, and $\Gamma_{22}\in \R^{m \times m}$. The proof of Lemma~\ref{lem:closed-form_lse} is shown in Appendix~\ref{appen:proof_closed-form_lse}.
    \begin{lem}
        \label{lem:closed-form_lse}
        For $(\hat a, \hat b) \in \R^{n_v}$ satisfying~\eqref{eq:lse_factor_defn} with minimum Euclidean norm, we have
        \begin{align}
        \label{eq:closed_form_lse}
            (\hat a, \hat b) = \begin{pmatrix}
                \Gamma_{11} & -\Gamma_{12} \\
                -\Gamma_{21} & \Gamma_{22}
            \end{pmatrix}
            \begin{pmatrix}
                \diag(\Omega M^\top) \\
                \diag(\Omega^\top M)
            \end{pmatrix},
        \end{align}
        where $\diag(\cdot)$ denotes the vector formed by the diagonal of a matrix. 
    \end{lem}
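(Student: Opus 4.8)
The plan is to recognize the optimization~\eqref{eq:lse_factor_defn} as an ordinary linear least-squares problem, read off its normal equations, and then identify the resulting coefficient matrix as a signed conjugate of the graph Laplacian, so that its pseudo-inverse is exactly $S\,L^+(\Omega)\,S$ for the signature matrix $S=\diag(I_n,-I_m)$. Concretely, writing $x=(a,b)\in\R^{n_v}$, the objective is $f(a,b)=\fnorm{Px-q}^2$ where $q=\vec_{\Omega}(M)$ and $P$ is the unsigned incidence operator sending $(a,b)$ to the vector $(a_i+b_j)_{(u_i,v_j)\in\cE(\Omega)}$. First I would compute the gradient and set it to zero, obtaining the normal equations $P^\top P\,x = P^\top q$. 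A direct entrywise computation shows that $P^\top q=\bigl(\diag(\Omega M^\top);\,\diag(\Omega^\top M)\bigr)$, which is precisely the right-hand vector in~\eqref{eq:closed_form_lse}, and that
\[
P^\top P=\begin{pmatrix} D_{\mathrm{row}} & \Omega \\ \Omega^\top & D_{\mathrm{col}} \end{pmatrix}=D(\Omega)+A(\Omega)=:Q(\Omega),
\]
the signless Laplacian of $\cG(\Omega)$, where $D_{\mathrm{row}},D_{\mathrm{col}}$ are the diagonal matrices of row/column observation counts (vertex degrees).

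Next I would argue that the minimum-Euclidean-norm minimizer equals $Q(\Omega)^+P^\top q$. Since $f$ is a convex quadratic with $\nabla f(x)=2\bigl(Q(\Omega)x-P^\top q\bigr)$, its set of minimizers is exactly the solution set of the linear system $Q(\Omega)\,x=P^\top q$; this system is consistent because $P^\top q$ lies in the column space of $P^\top$, which coincides with that of $P^\top P=Q(\Omega)$. The minimum-norm element of that solution set is then $Q(\Omega)^+P^\top q$, the unique solution orthogonal to $\mathrm{null}(Q(\Omega))=\Span(\one_n;-\one_m)$.

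The crux is the algebraic identity $Q(\Omega)=S\,L(\Omega)\,S$, where $L(\Omega)=D(\Omega)-A(\Omega)$ is the Laplacian and $S=\diag(I_n,-I_m)$. This holds because $S$ commutes with the block-diagonal degree matrix while flipping the sign of the off-diagonal adjacency blocks, i.e.\ $S\,A(\Omega)\,S=-A(\Omega)$; this is exactly the bipartite structure at work. Since $S$ is orthogonal and symmetric with $S^2=I$, conjugation commutes with the pseudo-inverse, giving $Q(\Omega)^+=S\,L^+(\Omega)\,S$. Substituting the block partition~\eqref{eq:pinv_Laplacian_partition} of $L^+(\Omega)$ and multiplying out $S\,L^+(\Omega)\,S$ flips the signs of exactly the off-diagonal blocks $\Gamma_{12},\Gamma_{21}$, producing the matrix in~\eqref{eq:closed_form_lse}; combined with $P^\top q=\bigl(\diag(\Omega M^\top);\,\diag(\Omega^\top M)\bigr)$ this yields the claimed formula.

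I expect the main obstacle to be the careful justification that the minimum-norm least-squares solution coincides with $Q(\Omega)^+P^\top q$ — in particular verifying consistency and that $Q(\Omega)^+$ selects the component orthogonal to $\mathrm{null}(Q(\Omega))$ — together with cleanly establishing the conjugation identity $Q(\Omega)=S\,L(\Omega)\,S$ and the commutation $Q(\Omega)^+=S\,L^+(\Omega)\,S$. Everything else reduces to bookkeeping of the block structure.
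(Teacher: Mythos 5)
Your proposal is correct, and its first half coincides exactly with the paper's proof: both set up the normal equations, identify the coefficient matrix as the signless Laplacian $W(\Omega)=\bigl(\begin{smallmatrix} D_U & \Omega \\ \Omega^\top & D_V \end{smallmatrix}\bigr)$ with right-hand side $\bigl(\diag(\Omega M^\top);\,\diag(\Omega^\top M)\bigr)$, and observe that the minimum-norm minimizer is $W^+(\Omega)$ applied to that vector. Where you genuinely diverge is the final step. The paper invokes an explicit block pseudo-inverse formula (from p.~101 of Gentle's matrix algebra text) to write out both $L^+(\Omega)$ and $W^+(\Omega)$ in terms of $D_U^+$, $\Omega$, and the Schur-complement term $\bigl(D_V-\Omega^\top D_U^+\Omega\bigr)^+$, and then compares the two expressions to see that only the off-diagonal signs differ. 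You instead prove the conjugation identity $W(\Omega)=S\,L(\Omega)\,S$ with the signature matrix $S=\diag(I_n,-I_m)$, and use that $S$ is symmetric orthogonal to conclude $W^+(\Omega)=S\,L^+(\Omega)\,S$, which flips exactly the off-diagonal blocks $\Gamma_{12},\Gamma_{21}$. Your route is arguably cleaner: it avoids importing a partitioned pseudo-inverse formula (whose validity for pseudo-inverses requires some care), it holds unconditionally, and it exposes the structural reason for the sign flip, namely that conjugation by $S$ negates the adjacency blocks precisely because the graph is bipartite. The paper's route, in exchange, yields explicit closed-form expressions for the blocks of $L^+(\Omega)$, though these are not needed for the lemma itself. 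One small remark: your identification of $\mathrm{null}(Q(\Omega))$ with $\Span(\one_n;-\one_m)$ uses the paper's standing connectivity assumption on $\cG(\Omega)$, but your min-norm argument only needs orthogonality of the pseudo-inverse solution to the null space, so nothing breaks either way.
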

    Having obtained $\hat a$ and $\hat b$, the estimator of $M^\ast$ is $\hat M ^{\lse} = \hat a \one_n^\top + \one_m \hat b^{ \top}$ and subsequently,
    \begin{align}
        \label{eq:lse_entry_defn}
        \hat{M}^{\lse}_{ij} = \hat a_i + \hat b_j.
    \end{align}
    Quite surprisingly, the electrical flow estimator is equivalent to the least squares estimator. The proof of Theorem~\ref{thm:equivalence_efe_lse} can be found in Appendix~\ref{appen:proof_equivalence_efe_lse}.
    
    \begin{thm}
    \label{thm:equivalence_efe_lse}
        If $u_i$ and $v_j$ are connected in $\cG(\Omega)$ for $(i,j) \in [n] \times [m]$, we have $\hat M^{\lse}_{ij} = \hat M^{\efe}_{ij}$.
    \end{thm}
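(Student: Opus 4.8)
The plan is to reduce both estimators to one common matrix–vector expression and then read off the equality. Write $J \coloneqq \diag(I_n, -I_m) \in \R^{n_v \times n_v}$ for the diagonal sign matrix that flips the column-vertex block, and set $d^r \coloneqq \diag(\Omega M^\top) \in \R^n$ and $d^c \coloneqq \diag(\Omega^\top M) \in \R^m$ for the observed row sums and column sums. I claim both $\hat M^{\efe}_{ij}$ and $\hat M^{\lse}_{ij}$ equal $(e_i - e_{n+j})^\top L^+(\Omega) \begin{pmatrix} d^r \\ -d^c \end{pmatrix}$, where $e_i, e_{n+j} \in \R^{n_v}$ are standard basis vectors under the fixed vertex ordering.

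First I would treat the electrical flow side. Starting from $\hat M^{\efe}_{ij} = \langle B(\Omega) L^+(\Omega)(e_i - e_{n+j}), \vec_\Omega(M)\rangle = (e_i - e_{n+j})^\top L^+(\Omega) B(\Omega)^\top \vec_\Omega(M)$, where I used the symmetry of $L^+(\Omega)$, the one nontrivial computation is to evaluate $B(\Omega)^\top \vec_\Omega(M)$. Since each edge $(u_k, v_\ell)$ is oriented from its row vertex to its column vertex, the corresponding column of $B(\Omega)$ carries $+1$ in row $k$ and $-1$ in row $n+\ell$. Summing the contribution $M_{k\ell}$ of each incident edge therefore yields $+\sum_\ell \Omega_{k\ell} M_{k\ell}$ at a row vertex $u_k$ and $-\sum_k \Omega_{k\ell} M_{k\ell}$ at a column vertex $v_\ell$, i.e. $B(\Omega)^\top \vec_\Omega(M) = \begin{pmatrix} d^r \\ -d^c \end{pmatrix}$. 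Substituting gives the claimed expression for $\hat M^{\efe}_{ij}$.

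Next I would handle the least squares side using Lemma~\ref{lem:closed-form_lse}. The block matrix appearing there is exactly $J L^+(\Omega) J$, since conjugating $L^+(\Omega)$ by $J$ negates precisely the off-diagonal blocks $\Gamma_{12}, \Gamma_{21}$ and leaves $\Gamma_{11}, \Gamma_{22}$ untouched. Hence $\begin{pmatrix} \hat a \\ \hat b \end{pmatrix} = J L^+(\Omega) J \begin{pmatrix} d^r \\ d^c \end{pmatrix} = J L^+(\Omega) \begin{pmatrix} d^r \\ -d^c \end{pmatrix}$. Writing $\hat M^{\lse}_{ij} = \hat a_i + \hat b_j = (e_i + e_{n+j})^\top \begin{pmatrix} \hat a \\ \hat b \end{pmatrix}$ and using $(e_i + e_{n+j})^\top J = e_i^\top - e_{n+j}^\top$ (the row vertex $u_i$ keeps its sign while the column vertex $v_j$ is flipped), I obtain $\hat M^{\lse}_{ij} = (e_i - e_{n+j})^\top L^+(\Omega) \begin{pmatrix} d^r \\ -d^c \end{pmatrix}$, matching the electrical flow expression.

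The bookkeeping of signs is where the real care is needed and is the step I expect to be the main obstacle: the ordinary Laplacian $L = D - A$ driving the electrical flow and the signless operator $D + A$ implicit in the least squares normal equations are conjugate via $J$ precisely because the graph is bipartite, and it is this conjugation that reconciles the two block structures. The connectivity hypothesis on $u_i$ and $v_j$ does not enter the algebra directly, but it guarantees that $e_i - e_{n+j}$ lies in the range of $L(\Omega)$, so that $L^+(\Omega)$ acts as a genuine inverse on it and the vector $\ii(\Omega, i, j)$ is an honest unit flow from $u_i$ to $v_j$; this is what makes the common expression a well-posed estimator rather than a projection artifact. I expect no further difficulty beyond this sign and orientation accounting.
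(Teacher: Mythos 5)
Your proof is correct and takes essentially the same approach as the paper's: both arguments reduce the two estimators to one common matrix--vector expression, using Lemma~\ref{lem:closed-form_lse} for the least squares side and the closed form~\eqref{eq:closed_form_flow_est} together with the symmetry of $L^+(\Omega)$ for the electrical flow side. Your sign bookkeeping via the matrix $J$ is in fact tidier than the paper's appendix, which carries two compensating sign slips (it asserts $B(\Omega)^\top \vec_{\Omega}(M) = \bigl(\diag(\Omega M^\top)^\top , \; \diag(\Omega^\top M)^\top\bigr)^\top$, whereas the column-vertex block should be negated, exactly your $-d^c$), but the substance of the two proofs is identical.
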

    This equivalence gives us an easy way to compute the electrical flow estimator. Furthermore, it shows that our estimator has favorable statistical properties \cyedit{by leveraging the optimality of least squares for linear regression. The additive matrix structure reduces the nonconvex matrix completion task to a significantly simpler linear regression task.} 
    We summarize our method in Algorithm~\ref{algo:flow_est}. The computation complexity of Algorithm~\ref{algo:flow_est} is determined by computing $L^+$, which is $O((n+m)^3)$.
    
    \RestyleAlgo{ruled}
    \begin{algorithm}
         \SetAlgoLined
         \SetKwFunction{algo}{ElectricalFlowEstimator}{}
         \SetKwProg{myalg}{Function}{}{}
         \myalg{\algo{$M, \Omega$}}{
            Initialize $\hat{M}_{ij}^{\efe} \gets \Box$ for all $(i,j) \in [n] \times [m]$. \\
            \For{connected component $\cG'=(\cV', \cE')$ of $\cG(\Omega)$}{
                In graph $\cG'$, compute $L^{\prime +}$ and partition it according to~\eqref{eq:pinv_Laplacian_partition}. \\
                Compute row and column sums in the corresponding submatrix $(\Omega', M')$ as $\rr' \gets \diag(\Omega' M^{\prime \top} )$ and $\cc' \gets \diag(\Omega^{\prime \top} M')$. \\
                $\hat{a} \gets \Gamma_{11}' \rr' - \Gamma_{12}' \cc'$ and $\hat{b} \gets -\Gamma_{21}' \rr' + \Gamma_{22}' \cc'$. \\
                $\hat{M}' \gets \hat{a} \one^\top + \one \hat{b}^\top$; assign the corresponding submatrix of $\hat M^{\efe}$ as $\hat{M}'$.
            }
             \KwRet{$\hat{M}^{\efe}$}
         }
         \caption{Electrical flow estimator for additive model. \label{algo:flow_est}}
    \end{algorithm}

\subsection{Theoretical Guarantees}
\label{sec:guarantees}
        
    We introduce the theoretical guarantees for the electrical flow estimator presented in Algorithm~\ref{algo:flow_est}. 
    We divide the results into three layers, each imposing progressively stronger conditions on the noise. The first layer requires only zero-mean noise, which ensures the unbiasedness of the estimator. The second layer further assumes independent sub-Gaussian noise, leading to matching upper and lower bounds for the error. The third layer employs the strongest assumption of Gaussian noise and explores the concept of the uniformly minimum variance unbiased estimator~(UMVUE).

    \subsubsection{Identifiability and Unbiasedness}
        We prove an unidentifiability argument in Theorem~\ref{thm:unidentifiability} to show that estimating entries that are not connected in the graph is impossible. 
        \begin{thm}
            \label{thm:unidentifiability}
            For $u_i$ and $v_j$ that are not connected in $\mathcal{G}(\Omega)$, there exist two additive models $M^\ast$ and $M'^\ast$ such that $\Omega \circ M^\ast = \Omega \circ M'^\ast$ but $M^\ast_{ij} \neq M'^\ast_{ij}$.
        \end{thm}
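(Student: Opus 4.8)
The plan is to exploit the gauge freedom inherent in the additive parametrization $M^\ast_{ij} = a^\ast_i + b^\ast_j$. Observations constrain the latent factors only up to a shift that must be shared within each connected component of $\cG(\Omega)$; since $u_i$ and $v_j$ lie in distinct components by hypothesis, I can shift the factors of one component relative to the other without disturbing any observed entry, thereby altering the value predicted at $(i,j)$. The whole proof is thus a constructive one: exhibit two additive models agreeing on $\Omega$ but disagreeing at $(i,j)$.

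Concretely, let $S \subseteq \cV$ denote the vertex set of the connected component of $\cG(\Omega)$ containing $u_i$; by assumption $v_j \notin S$. Fix any nonzero constant $c$ and define a second additive model $M'^\ast_{k\ell} = a'_k + b'_\ell$ through the shifted factors
\[
a'_k = a^\ast_k + c\,\ind\{u_k \in S\}, \qquad b'_\ell = b^\ast_\ell - c\,\ind\{v_\ell \in S\}.
\]
Here $M^\ast$ and $M'^\ast$ are both genuine additive models by construction, so it remains only to compare them on $\Omega$ and at the target entry.

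The key step is to verify that $\Omega \circ M^\ast = \Omega \circ M'^\ast$. The crucial observation is that whenever $\Omega_{k\ell} = 1$, the edge $(u_k, v_\ell)$ belongs to $\cE(\Omega)$, so $u_k$ and $v_\ell$ are adjacent and hence lie in the same connected component; consequently either both $u_k, v_\ell \in S$ or both lie outside $S$. In the first case the $+c$ on the row factor and the $-c$ on the column factor cancel, while in the second case neither factor is shifted, so $M'^\ast_{k\ell} = M^\ast_{k\ell}$ for every observed entry. This exact cancellation on $\Omega$ is the heart of the argument, and it is precisely what fails at the target entry.

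Finally, at $(i,j)$ we have $u_i \in S$ but $v_j \notin S$, so only the row factor is shifted and $M'^\ast_{ij} = (a^\ast_i + c) + b^\ast_j = M^\ast_{ij} + c \neq M^\ast_{ij}$ because $c \neq 0$. This yields the required pair of indistinguishable-on-$\Omega$ yet differing-at-$(i,j)$ models. I do not expect a genuine obstacle: the construction is a one-line gauge shift, and the only point demanding care is the bookkeeping that links ``$(k,\ell)$ is observed'' to ``$u_k$ and $v_\ell$ are in the same component,'' since that is exactly what forces the shifts to cancel on the observations while surviving at the disconnected target entry.
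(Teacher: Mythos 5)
Your proposal is correct and takes essentially the same approach as the paper: both arguments construct a second additive model by shifting the latent factors on one side of the vertex partition separating $u_i$ from $v_j$, using the fact that no observed edge crosses that partition so the shifts cancel on every entry of $\Omega$ but not at $(i,j)$. The only cosmetic difference is that the paper phrases the partition via a minimum edge cut (which is empty here) starting from the zero model, whereas you shift an arbitrary base model over the connected component of $u_i$ — if anything, your write-up is the more complete of the two, since the paper leaves the factor values on the $v_j$ side unspecified.
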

        \begin{proof}
        For a fixed $\Omega$, we construct signals $M^\ast$ and $M'^\ast$ as:
        \begin{align*}
            M^\ast_{st} &= a_{s} + b_{t} \\
            M'^\ast_{st} &= y_{s} + z_{t}.
        \end{align*}
        For $a$ and $b$, we set them as $a = b = \boldsymbol{0}$ where  $\boldsymbol{0}$ represents an all-zero vector. 
        For $y$ and $z$, we define them according to a minimum edge cut for $u_i$ and $v_j$. 
        Suppose this cut partitions the nodes into two sets $\mathcal{L}$ and $\mathcal{R}$. We assume $u_i \in \mathcal{L}$ and $v_j \in \mathcal{R}$. 
        For a left node $u_s \in \mathcal{L}$, let $y_s = 0$. For a right node $v_t \in \mathcal{L}$, let $z_t = 0$.
        By definition, $M^\ast_{ij}=0$ and $M'^\ast_{ij}=-\epsilon$. 
        There is no path between two sides of the cut $\mathcal{L}$ and $\mathcal{R}$. Because $M^\ast$ and $M'^\ast$ only differ on the entries crossing the cut, we conclude that $\Omega \circ M^\ast = \Omega \circ M'^\ast$. 
        \end{proof}
        
        Next, we show the unbiasedness of the unit flow estimator under the zero-mean noise assumption. Since the electrical flow estimator and the difference-on-path estimator are special cases of the unit flow estimator, they are unbiased as well.
        \begin{assum}
        \label{assum:zero-mean_noise}
        We assume $\expt{E_{ij}}=0$ for all $(i,j)\in[n] \times [m]$.
        \end{assum}
        \begin{thm}
        \label{thm:unbiasedness}
        Under Assumption~\ref{assum:zero-mean_noise}, for an arbitrary unit flow $f$, we have $\expt{\hat M_{ij}^f} = M_{ij}^\ast$.
        \end{thm}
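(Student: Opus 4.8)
The plan is to compute $\expt{\hat M_{ij}^f}$ directly from the definition~\eqref{eq:unit_flow_est} and reduce the claim to the flow-conservation constraints that characterize a unit flow from $u_i$ to $v_j$.

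First I would substitute the additive data model $M_{k\ell} = a_k^\ast + b_\ell^\ast + E_{k\ell}$ into~\eqref{eq:unit_flow_est}. Using linearity of expectation together with Assumption~\ref{assum:zero-mean_noise}, the noise contributions vanish, so that
\[
\expt{\hat M_{ij}^f} = \sum_{(k,\ell) \in [n]\times[m]} f_{k,n+\ell}\, \Omega_{k\ell}\, (a_k^\ast + b_\ell^\ast).
\]

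Next I would regroup this double sum according to the row and column latent factors:
\[
\expt{\hat M_{ij}^f} = \sum_{k \in [n]} a_k^\ast \Bigl( \sum_{\ell \in [m]} f_{k,n+\ell}\, \Omega_{k\ell} \Bigr) + \sum_{\ell \in [m]} b_\ell^\ast \Bigl( \sum_{k \in [n]} f_{k,n+\ell}\, \Omega_{k\ell} \Bigr).
\]
The key observation is that, because $\cG(\Omega)$ is bipartite with every edge oriented from a row vertex to a column vertex, the inner parenthesized sums are exactly the net flow out of $u_k$ and the net flow into $v_\ell$, respectively. The defining property of a unit $u_i$-to-$v_j$ flow is that its net outflow equals $1$ at the source $u_i$, equals $-1$ at the sink $v_j$, and equals $0$ at every other vertex. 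Translating these conservation equations through the sign convention (positive flow means row-to-column) yields the two identities $\sum_{\ell} f_{k,n+\ell}\Omega_{k\ell} = \ind\{k=i\}$ and $\sum_{k} f_{k,n+\ell}\Omega_{k\ell} = \ind\{\ell=j\}$. Substituting these collapses the two outer sums to $a_i^\ast$ and $b_j^\ast$, giving $\expt{\hat M_{ij}^f} = a_i^\ast + b_j^\ast = M_{ij}^\ast$, as desired.

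The main thing to get right is the sign bookkeeping at the column vertices: since $f_{k,n+\ell}$ is defined to be positive when flow travels from $u_k$ into $v_\ell$, the net flow \emph{out} of $v_\ell$ carries an extra minus sign, and one must verify that the source/sink conditions then produce exactly $\ind\{\ell = j\}$ (and not its negative) in the column identity. A cleaner but less self-contained alternative would be to invoke a flow-decomposition theorem to write $f$ as a convex combination of unit $u_i$-to-$v_j$ path flows, observe that $\hat M_{ij}^f$ is then the matching convex combination of difference-on-path estimators, and conclude from the already-established unbiasedness of each path estimator together with the fact that the decomposition weights sum to one. I would present the direct conservation argument as the primary proof, since it avoids formally stating the decomposition theorem and makes the role of Assumption~\ref{assum:zero-mean_noise} transparent.
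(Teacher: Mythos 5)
Your proposal is correct and follows essentially the same route as the paper's proof: substitute the additive model, use linearity of expectation with the zero-mean noise assumption, regroup the sum by row and column latent factors, and invoke the flow conservation constraints (net outflow $1$ at $u_i$, net inflow $1$ at $v_j$, zero elsewhere) to collapse the inner sums to $a_i^\ast$ and $b_j^\ast$. Your explicit attention to the sign convention at column vertices is a useful elaboration of what the paper leaves implicit in the phrase ``the last step uses the flow conservation constraint,'' but it does not constitute a different argument.
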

        \begin{proof}
        By model assumption and linearity of expectation, we have
        \begin{align*}
            \expt{\hat M_{ij}^f} &= \sum_{k, \ell} f_{k, n+\ell} M^\ast_{k\ell} 
            = \sum_{k, \ell} f_{k, n+\ell} \left( a^\ast_{k} + b^\ast_{\ell} \right) 
            = \sum_k a^\ast_{k} \sum_{\ell} f_{k, n+\ell} + \sum_{\ell} b^\ast_{\ell} \sum_k f_{k, n+\ell} 
            = a_i^\ast + b_j^\ast = M^\ast_{ij},
        \end{align*}
        where the last step uses the flow conservation constraint. 
        \end{proof}
        The unbiasedness of $\hat M_{ij}^f$ is ensured by the zero-mean noise and the value of $f$ being $1$. Note that we do not require any independence conditions for $E$. As long as $E$ is zero-mean, the unit flow estimator remains unbiased, regardless of the correlation in the noise.
    
    \subsubsection{Error Upper Bound and Minimax Lower Bound}
        Now we consider the electrical flow estimator.
        Besides being zero-mean, we additionally assume the noise terms are independent sub-Gaussian random variables. 
        \begin{assum}
            \label{assum:indepenent_subG}
            We assume $E_{ij}$'s are independent and each $E_{ij}$ is sub-Gaussian with parameter $\sigma^2$, for an unknown $\sigma^2>0$.
        \end{assum}
        To present the error upper bound, we define the effective resistance $R(u_i, v_j)$ as the potential difference between $u_i$ and $v_j$ one needs to apply to send one unit of current between them:
        \begin{align}
            R(u_i, v_j) = (e_i - e_{n+j})^\top L^+(\Omega) (e_{i} - e_{n+j}).\label{eq:effective_resistance_omega_defn}
        \end{align}
        It quantifies how well connected $u_i$ and $v_j$ are in the graph. See Appendix~\ref{appen:effective_resistance} for a more detailed discussion. Theorem~\ref{thm:flow_est_upper_bound} states that the squared error can be upper bounded by a factor of the effective resistance, which implies entry $(i,j)$ can be better estimated when $u_i$ and $v_j$ are more connected in the graph. 
        
        \begin{thm}
    \label{thm:flow_est_upper_bound}
            Under Assumptions~\ref{assum:zero-mean_noise} and~\ref{assum:indepenent_subG}, we have $\var[ \hat{M}_{ij}^{\efe}] \leq \sigma^2 R(u_i,v_j)$, \cyedit{where this is satisfied with equality if $\var[E_{ij}] = \sigma^2$ for all $(i,j)$.}
            Furthermore, with probability at least $1-\delta$,  we have
            \begin{align}
                \label{eq:flow_est_upper_bound}
                \big( \hat{M}_{ij}^{\efe} - M^\ast_{ij} \big)^2 \le 2 \sigma^2 R(u_i, v_j) \log ( 2nm / \delta), \quad \forall \text{ connected } (i,j).
            \end{align}
        \end{thm}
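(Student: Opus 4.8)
The plan is to write the estimation error as a single weighted sum of the independent noise terms, where the weights are exactly the unit electrical current, and then reduce everything to two facts: that the $\ell_2$-energy of the electrical flow equals the effective resistance, and that weighted sums of independent sub-Gaussians are sub-Gaussian. First I would expand the estimator. Since $M_{k\ell} = M^\ast_{k\ell} + E_{k\ell}$ on the observed entries, and since Theorem~\ref{thm:unbiasedness} already shows that the signal part of $\hat M^{\efe}_{ij}$ equals $M^\ast_{ij}$, the error collapses to the noise contribution:
\begin{align*}
\hat M^{\efe}_{ij} - M^\ast_{ij} = \sum_{(k,\ell) \in [n] \times [m]} \ii(\Omega,i,j)_{k,n+\ell}\, \Omega_{k\ell} E_{k\ell} = \langle \ii(\Omega,i,j), \vec_\Omega(E) \rangle .
\end{align*}
Indexing by edges $e \in \cE(\Omega)$ and abbreviating $\ii := \ii(\Omega,i,j)$, this is $\sum_{e} \ii_e E_e$, a linear functional of the independent noise.

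For the variance claim, I would use that the $E_e$ are independent (Assumption~\ref{assum:indepenent_subG}), so the cross terms vanish and $\var[\hat M^{\efe}_{ij}] = \sum_e \ii_e^2 \var[E_e] \le \sigma^2 \sum_e \ii_e^2 = \sigma^2 \twonorm{\ii}^2$, with equality exactly when every $\var[E_e] = \sigma^2$. It then remains to identify $\twonorm{\ii}^2$ with the effective resistance. Writing the voltage vector $\phi = L^+(\Omega)(e_i - e_{n+j})$ so that $\ii = B(\Omega)\phi$, and using $B(\Omega)^\top B(\Omega) = L(\Omega)$ together with the Moore--Penrose identity $L^+ L L^+ = L^+$, I get
\begin{align*}
\twonorm{\ii}^2 = \phi^\top B(\Omega)^\top B(\Omega) \phi = (e_i - e_{n+j})^\top L^+ L L^+ (e_i - e_{n+j}) = (e_i - e_{n+j})^\top L^+ (e_i - e_{n+j}) = R(u_i, v_j),
\end{align*}
which is precisely Thomson's principle applied to the unit electrical flow. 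Combining the two displays yields $\var[\hat M^{\efe}_{ij}] \le \sigma^2 R(u_i, v_j)$, with equality under homoscedastic noise.

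For the high-probability statement, I would invoke sub-Gaussian concentration. Because $\hat M^{\efe}_{ij} - M^\ast_{ij} = \sum_e \ii_e E_e$ is a linear combination of independent sub-Gaussians each of parameter $\sigma^2$, it is itself sub-Gaussian with parameter $\sigma^2 \twonorm{\ii}^2 = \sigma^2 R(u_i,v_j)$, giving the tail bound $\prob{|\hat M^{\efe}_{ij} - M^\ast_{ij}| > t} \le 2\exp\big(-t^2 / (2\sigma^2 R(u_i,v_j))\big)$. Choosing $t^2 = 2\sigma^2 R(u_i,v_j)\log(2nm/\delta)$ drives the per-entry failure probability down to $\delta/(nm)$, and a union bound over the at most $nm$ connected pairs $(i,j)$ delivers the claimed simultaneous guarantee with probability at least $1-\delta$.

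The genuinely substantive step is the energy--resistance identity $\twonorm{\ii}^2 = R(u_i,v_j)$; everything else (independence for the variance, sub-Gaussianity of a weighted sum, and the union bound) is routine. That identity is the standard electrical-network characterization of effective resistance and is presumably recorded in Appendix~\ref{appen:effective_resistance}, so the main care is simply to invoke it with the correct orientation convention for $B(\Omega)$ and to track the $nm$ factor that the union bound introduces inside the logarithm.
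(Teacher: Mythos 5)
Your proposal is correct and follows essentially the same route as the paper's proof: write the error as the electrical-flow-weighted sum of independent noise terms, identify the flow's energy with the effective resistance $R(u_i,v_j)$, apply sub-Gaussian (Hoeffding) concentration, and union bound over the $nm$ entries. The only cosmetic difference is that you verify the energy--resistance identity directly via $B^\top B = L$ and the Moore--Penrose identity $L^+ L L^+ = L^+$, whereas the paper invokes Thomson's principle (its Theorem~\ref{thm:thomson}) for the same fact; your algebraic route also handles the equality-under-homoscedasticity clause slightly more explicitly than the paper does.
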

        
        \begin{proof}
            Fix $(i,j)$. From Theorem~\ref{thm:unbiasedness}, we know $\hat M_{ij}^{\efe}$ is unbiased for $M_{ij}^\ast$. 
            Next, we compute its variance. For an arbitrary flow $f$, the unit flow estimator $\hat M^{f}_{ij}$ is sub-Gaussian with parameter upper bounded by $\sigma^2 \sum_{k, \ell} f_{k, n+\ell}^2$. 
            By Thomson's principle (Theorem~\ref{thm:thomson}), the following minimization program is solved by the electrical flow with unit resistance on every edge 
            \begin{equation}
            \label{eq:flow_min_var_optimization}
            \begin{aligned}
                \min_{f} \; &  \sum_{k, \ell } f_{k, n+\ell}^2 \\
            \text{s.t. } & \sum_{\ell} f_{i, n+\ell} = \sum_{k} f_{k, n+j} = 1 \\
                        & \sum_{\ell} f_{i', n+\ell} = 0, \quad \forall i' \in [n] \setminus \{i\} \\
                        & \sum_{k} f_{k, n+{j'}} = 0, \quad \forall j' \in [m] \setminus \{j\}.
            \end{aligned}
            \end{equation}
            Furthermore, the solution $\ii$ satisfies $\sum_{x,y} \ii^2_{x,y} = R(u_i, v_j)$.
        
            Applying Hoeffding's inequality, we have
            \begin{align*}
            \big| \hat{M}_{ij} - M^\ast_{ij} \big| \le  \sigma \sqrt{2 R(u_i, v_j) \log (2 nm / \delta)},
            \end{align*}
            with probability at least $1-\frac{\delta}{nm}$.
            Finally, we invoke
            the union bound over all entries 
            and obtain the desired result.
        \end{proof}
        
        Next, we state a local minimax lower bound for estimating entry $(i,j)$ under the framework of~\cite{cai2015framework}. This result implies that the electrical flow estimator is optimal for every instance of additive $M^*$, which is stronger than the usual worst case minimax lower bound. 

        \begin{thm}
        \label{thm:minimax_effective_resistance}
            For an arbitrary additive model $X^\ast$ and a fixed pair of $(i,j)$ that is connected, we have
            \begin{align}
            \label{eq:local_minimax}
                \sup_{Y^\ast} \; \inf_{\hat M} \; \max_{M^\ast \in \{X^\ast, Y^\ast\}} \; \mathbb{E} \left[ ( \hat M_{ij} - M_{ij}^\ast)^2 \right] \ge  \frac{2 \sigma^2 R(u_i, v_j)}{27},
            \end{align}
            where the supremum is over all additive models
            and the infimum is over all estimators $\hat M$.
        \end{thm}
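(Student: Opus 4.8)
The plan is to use the two-point (Le Cam) method, where essentially all the work lies in constructing the hard alternative $Y^\ast$ in a way that surfaces the effective resistance. Since the lower bound is taken over sub-Gaussian noise and a Gaussian is sub-Gaussian with the same parameter, I would specialize to $E_{ij}\sim\normal(0,\sigma^2)$ i.i.d., so that the observation law under any additive model $M^\ast$ is $\normal(\vec_\Omega(M^\ast),\sigma^2 I)$. Given the arbitrary additive $X^\ast$, I would build $Y^\ast$ as an additive perturbation driven by a \emph{voltage vector}: fix $\phi = t\,L^+(\Omega)(e_i-e_{n+j})\in\R^{n_v}$ for a scalar $t>0$ to be tuned, and set $Y^\ast_{k\ell} = X^\ast_{k\ell} + (\phi_{u_k}-\phi_{v_\ell})$. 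Reading $\phi_{u_k}$ as an additive row factor and $-\phi_{v_\ell}$ as an additive column factor shows $Y^\ast$ is again additive, hence an admissible competitor in the outer $\sup_{Y^\ast}$.

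Two quantities then control the bound. The separation at the target entry is
\[
\Delta := |Y^\ast_{ij}-X^\ast_{ij}| = |\phi_{u_i}-\phi_{v_j}| = t\,(e_i-e_{n+j})^\top L^+(\Omega)(e_i-e_{n+j}) = t\,R(u_i,v_j),
\]
using the definition~\eqref{eq:effective_resistance_omega_defn}. The KL divergence between the two Gaussian observation laws is the Laplacian quadratic form of $\phi$,
\[
\KL = \frac{1}{2\sigma^2}\sum_{(k,\ell)\in\Omega}(\phi_{u_k}-\phi_{v_\ell})^2 = \frac{1}{2\sigma^2}\,\phi^\top L(\Omega)\,\phi = \frac{t^2}{2\sigma^2}\,R(u_i,v_j),
\]
where the last step uses $L^+ L L^+ = L^+$ together with $(e_i-e_{n+j})\in\mathrm{range}(L(\Omega))$, which holds precisely because $u_i$ and $v_j$ are connected. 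The key structural observation—and the reason the voltage perturbation is the right choice—is that maximizing $\Delta^2/\KL$ over all perturbation directions $\phi$ is a generalized Rayleigh quotient whose maximum equals $R(u_i,v_j)$ and is attained at $\phi\propto L^+(\Omega)(e_i-e_{n+j})$. This is exactly the voltage/dual interpretation highlighted in the contributions.

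Finally I would invoke the standard two-point inequality
\[
\inf_{\hat M}\max_{M^\ast\in\{X^\ast,Y^\ast\}}\expt{(\hat M_{ij}-M^\ast_{ij})^2} \ge \frac{\Delta^2}{8}\bigl(1-\mathrm{TV}(P_0,P_1)\bigr),
\]
obtained by reducing estimation to testing and charging the closer-point error, and then apply Pinsker's inequality $\mathrm{TV}\le\sqrt{\KL/2}$. Substituting $\Delta=t\,R(u_i,v_j)$ and $\KL=t^2 R(u_i,v_j)/(2\sigma^2)$ and writing $s = t\sqrt{R(u_i,v_j)}/(2\sigma)$ turns the right-hand side into $\tfrac12\sigma^2 R(u_i,v_j)\,s^2(1-s)$; maximizing $s^2(1-s)$ over $s\in(0,1)$ gives $s=2/3$ with value $4/27$, yielding the claimed $\tfrac{2}{27}\sigma^2 R(u_i,v_j)$. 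I expect the main obstacle to be the construction rather than the calculus: one must recognize that restricting the alternative to additive perturbations equal to edge potential differences collapses the KL to $\phi^\top L\phi$, so that the hardest instance is governed by effective resistance. Verifying that $Y^\ast$ stays additive, that $e_i-e_{n+j}\in\mathrm{range}(L(\Omega))$ under connectivity, and that Gaussian noise is admissible inside the sub-Gaussian class are the remaining routine checks.
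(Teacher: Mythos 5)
Your proposal is correct and follows essentially the same route as the paper: both construct the hard alternative by perturbing $X^\ast$ with the voltage vector $L^+(\Omega)(e_i-e_{n+j})$, compute the separation $\epsilon R(u_i,v_j)$ and the KL divergence $\epsilon^2 R(u_i,v_j)/(2\sigma^2)$ via the Laplacian quadratic form, and then apply the two-point Le Cam bound with the same optimized scaling (your $s=2/3$ corresponds exactly to the paper's choice $\epsilon = \frac{4\sigma}{3\sqrt{R(u_i,v_j)}}$). Your write-up is in fact slightly more careful than the paper's, making explicit the additivity of the alternative, the range condition under connectivity, and the reduction of sub-Gaussian to Gaussian noise.
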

        
        \begin{proof}
            We first construct an instance $Z^\ast$. 
            Suppose $X^\ast$ satisfies $X^\ast_{k \ell } = a_k + b_{\ell}$.
            Let $\vv$ be the corresponding voltage vector for the electrical flow. 
            We construct $Z^\ast$ as $Y^{\ast}_{k \ell} = \epsilon (\vv_k + a_k - \vv_{n+\ell} + b_{\ell})$, for some $\epsilon\in(0,1)$.
            It follows that $Z^\ast_{k \ell} - X^\ast_{k \ell} = \epsilon (\vv_k -\vv_{n+\ell})$. Accordingly, we have 
            $Z^\ast_{ij} - X^\ast_{ij} = \epsilon(\vv_i - \vv_j) = \epsilon R(u_i, v_j)$.
            Hence, the KL divergence between the distributions is
             \begin{align*}
                \KL (f_{X^\ast} \Vert f_{Z^\ast}) &= \frac{1}{2\sigma^2} \sum_{k, \ell} \Omega_{k \ell} (X_{k \ell}^\ast - Z^\ast_{k \ell }) 
                = \frac{\epsilon^2}{2\sigma^2}  \sum_{ x, y} (\vv_x - \vv_y)^2 
                = \frac{\epsilon^2 R(u_i, v_j)}{2\sigma^2}.
            \end{align*}
            To lower bound the LHS of~\eqref{eq:local_minimax}, we use the instance $Z^\ast$ constructed before
            \begin{align*}
                \sup_{Y^\ast}\; \inf_{\hat M} \; \max_{M^\ast \in \{X^\ast, Y^\ast\}} \; \mathbb{E} \left[ ( \hat M_{ij} - M_{ij}^\ast)^2 \right] 
                \ge \inf_{\hat M} \; \max_{M^\ast \in \{X^\ast, Z^\ast\}} \; \mathbb{E} \left[ ( \hat M_{ij} - M_{ij}^\ast)^2 \right].
            \end{align*}
            A standard application of Le Cam's method yields
            \begin{align*}
                \inf_{\hat M} \; \max_{M^\ast \in \{X^\ast, Z^\ast\}} \; \mathbb{E} \left[ ( \hat M_{ij} - M_{ij}^\ast)^2 \right] \ge \frac{\epsilon^2 R^2(u_i, v_j)}{8} \left(1 - \sqrt{\frac{\epsilon^2 R(u_i, v_j)}{4\sigma^2}} \right).
            \end{align*}
            Setting $\epsilon=\frac{4\sigma}{3\sqrt{R(u_i,v_j)}}$ gives us the desired result.
        \end{proof}
        
        Comparing Theorem~\ref{thm:flow_est_upper_bound} and Theorem~\ref{thm:minimax_effective_resistance}, we conclude that our estimator achieves the minimax error rate up to logarithmic factors. 
        Furthermore, our results reveal how the inherent challenge of estimating a specific entry hinges on the graph's connectivity. If two vertices are more connected, estimating the relevant entry becomes easier. 
        From an information propagation standpoint, our theory pins down one fact---the amount of information for estimating a particular entry is proportional to the resistance distance between vertices, reflecting their proximity in the graph.

\begin{remark}
To prove the minimax optimality in Theorem~\ref{thm:minimax_effective_resistance}, we explicitly constructed the hard instances using the voltage vector in the electrical network.
We can show that the voltage vector corresponds to the dual variables in the variance-minimizing program which optimizes over flows to wolve for the effective resistance. 
This proof technique can be extended to show minimax optimality of least squares for general linear regression problems, where the hard instance is constructed using the dual variables of the optimization task that minimizes the variance amongst linear estimators subject to unbiasedness constraints.
The use of dual variables to construct minimax lower bounds provides a straightforward and elegant proof of the minimax optimality of the least squares estimator. Details of this generalization are provided in 
Appendix~\ref{appen:linear_regression}.
\end{remark}

\begin{remark}
Recall that in Theorem~\ref{thm:equivalence_efe_lse}, we established the equivalence between the electrical flow estimator and the least squares estimator. In fact, Theorem~\ref{thm:minimax_effective_resistance} can be proved through this equivalence, \cyedit{as the least squares estimator is known to be asymptotically locally minimax optimal due to an equivalence to the maximum likelihood estimator (MLE) under Gaussian noise. This follows from a result of Hajek and Le Cam showing that the MLE is asymptotically locally minimax optimal for models with sufficient regularity \cite{hajek1970characterization,hajek1972local,le1972limits}}.
\cyreplace{Therefore, the minimax lower bound we derive is not entirely new but rather provides a different perspective and proof technique that extends beyond this model.}
{While the local minimax optimality of least squares has been previously known, we provide a new proof technique for the minimax lower bound via a simple and explicit construction using dual variables.} Moreover, our lower bound has physical interpretation in terms of effective resistance, whereas the lower bound for the general least squares estimator does not provide this insight.
\end{remark}    
                   
\subsubsection{Uniformly Minimum-Variance Unbiased Estimator~(UMVUE)}
\label{subsubsec:equivalence_umvue}
Under the strongest assumption of i.i.d.~Gaussian noise, it follows that the electrical estimator is the uniform minimum variance unbiased estimator (UMVUE). 
This property follows from the equivalence to the least squares estimator shown in Theorem~\ref{thm:equivalence_efe_lse}.

\begin{assum}
\label{assum:iid_Gaussian}
We assume $E_{ij}$'s are i.i.d.~$\normal(0,\sigma^2)$, for an unknown $\sigma^2>0$.
\end{assum}

\begin{thm}[UMVUE]
\label{thm:umvue}
Under Assumption~\ref{assum:iid_Gaussian}, the electrical flow estimator $\hat M_{ij}^{\efe}$ is the UMVUE of $M^\ast_{ij}$, for connected $(i,j)$. Its variance is
$\var \left[ \hat M_{ij}^{\efe}\right] = \sigma^2 R(u_i,v_j)$, for all connected $(i,j)$.
\end{thm}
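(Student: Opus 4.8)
The plan is to derive the UMVUE property from the equivalence $\hat M_{ij}^{\efe} = \hat M_{ij}^{\lse}$ established in Theorem~\ref{thm:equivalence_efe_lse}, combined with the classical Lehmann--Scheff\'e theory for Gaussian linear models. Working within a single connected component (as Algorithm~\ref{algo:flow_est} does), I would first rewrite the observation model in vectorized form as a Gaussian linear model: stacking the observed entries gives $\vec_{\Omega}(M) = X\theta + \vec_{\Omega}(E)$, where $\theta = (a^\ast; b^\ast) \in \R^{n_v}$ and $X \in \{0,1\}^{n_e \times n_v}$ is the (unsigned) incidence matrix of $\cG(\Omega)$ whose row for edge $(k,\ell)$ carries ones in the coordinates of $a_k$ and $b_\ell$. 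Under Assumption~\ref{assum:iid_Gaussian}, $\vec_{\Omega}(E) \sim \normal(\zero, \sigma^2 I)$, so the density forms an exponential family
\begin{align*}
f(y;\theta,\sigma^2) \propto \exp\!\left( -\tfrac{1}{2\sigma^2}\twonorm{y}^2 + \tfrac{1}{\sigma^2}\theta^\top X^\top y - \tfrac{1}{2\sigma^2}\twonorm{X\theta}^2 \right),
\end{align*}
with sufficient statistic $T = (X^\top y, \twonorm{y}^2)$ and natural parameter $(\tfrac{1}{\sigma^2}\theta, -\tfrac{1}{2\sigma^2})$.

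The second step is to address the rank deficiency of $X$: on a connected bipartite graph, $X$ has rank $n_v - 1$ with null space $\Span\{(\one_n; -\one_m)\}$, reflecting the non-identifiability $a_k \mapsto a_k + c$, $b_\ell \mapsto b_\ell - c$. I would reparametrize by projecting out this one-dimensional null space, obtaining a full-rank design $X'$ with $\theta' \in \R^{n_v - 1}$. The resulting family is a full-rank exponential family of dimension $n_v$ whose natural parameter set $\{(\tfrac{1}{\sigma^2}\theta', -\tfrac{1}{2\sigma^2}) : \theta' \in \R^{n_v - 1}, \sigma^2 > 0\}$ has nonempty interior; hence its minimal sufficient statistic $T$ is complete. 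Crucially, restricting to the connected component makes $M_{ij}^\ast = a_i^\ast + b_j^\ast$ estimable, since $e_i + e_{n+j}$ is orthogonal to the null direction $(\one_n; -\one_m)$, consistent with Theorem~\ref{thm:unidentifiability}.

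I would then invoke Lehmann--Scheff\'e. By Theorem~\ref{thm:unbiasedness}, $\hat M_{ij}^{\lse}$ is unbiased for $M_{ij}^\ast$; by Lemma~\ref{lem:closed-form_lse} it is a linear function of the row and column sums $(\diag(\Omega M^\top), \diag(\Omega^\top M)) = X^\top y$, hence a function of the complete sufficient statistic $T$. An unbiased estimator that is a function of a complete sufficient statistic is the (almost surely unique) UMVUE, so $\hat M_{ij}^{\lse}$ is the UMVUE of $M_{ij}^\ast$, and by Theorem~\ref{thm:equivalence_efe_lse} so is $\hat M_{ij}^{\efe}$. The variance claim is then immediate: since Assumption~\ref{assum:iid_Gaussian} gives $\var[E_{ij}] = \sigma^2$ for every $(i,j)$, Theorem~\ref{thm:flow_est_upper_bound} holds with equality, yielding $\var[\hat M_{ij}^{\efe}] = \sigma^2 R(u_i, v_j)$.

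The main obstacle is the completeness argument in the presence of a rank-deficient design and an unknown variance. One must verify that projecting to the identifiable subspace genuinely produces a full-rank exponential family whose natural parameter space has nonempty interior, so that the completeness theorem applies; care is also needed to confirm that $\hat M_{ij}^{\lse}$ depends on $y$ only through $X^\top y$ rather than the residual norm, which follows from the closed form in Lemma~\ref{lem:closed-form_lse}. Everything else---unbiasedness, the variance formula, and the equivalence to the electrical flow estimator---has already been supplied by earlier results.
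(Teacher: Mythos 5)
Your proposal is correct and takes essentially the same route as the paper: both derive the UMVUE property from the equivalence $\hat M_{ij}^{\efe} = \hat M_{ij}^{\lse}$ (Theorem~\ref{thm:equivalence_efe_lse}) together with the classical fact that the least squares estimator is UMVUE for estimable functionals in a Gaussian linear model, and both obtain the variance formula from the equality case of Theorem~\ref{thm:flow_est_upper_bound}. The only difference is that the paper invokes this classical fact as a black box (Theorem~3.7 of Shao), whereas you unpack its proof via the exponential-family structure, completeness of the sufficient statistic $(X^\top y, \twonorm{y}^2)$, and Lehmann--Scheff\'e, correctly handling the rank-one deficiency of the design along the null direction $(\one_n; -\one_m)$ and the estimability of $e_i + e_{n+j}$ within a connected component.
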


The result follows from Theorem~3.7 from~\cite{shao2003mathematical}, stating that the least squares estimator is the UMVUE, which we restate here: \cycomment{Do we even need to state this Theorem and the proof? ...}
\begin{thm}[Theorem~3.7 from~\cite{shao2003mathematical}]
\label{thm:lr_umvue}
Let $X=\left(X_1, \ldots, X_N\right), \varepsilon=\left(\varepsilon_1, \ldots, \varepsilon_N\right)$. 
Observed data is denoted as $X=Z \phi^\ast 
+\varepsilon.$
Assume $\varepsilon$ is distributed as  $\normal \left(0, \sigma^2 I_N\right)$ with an unknown $\sigma^2>0$. The least squares estimator $w^\top \hat \phi$, where $\hat \phi = \underset{\phi}{\operatorname{argmin}} \norm{X - Z \phi}_2^2$, is the UMVUE for estimable $w^\top \phi^\ast$\footnote{If there exists an unbiased estimator of $w^\top\phi$, then $w^\top \beta$ is called an estimable parameter.}.
\end{thm}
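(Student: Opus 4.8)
The plan is to prove this via the Lehmann--Scheffé theorem: I will exhibit a statistic that is both complete and sufficient for the family $\{\normal(Z\phi^\ast, \sigma^2 I_N) : \phi^\ast,\, \sigma^2 > 0\}$, and then show that $w^\top\hat\phi$ is an unbiased estimator of $w^\top\phi^\ast$ that is a measurable function of that statistic. Since any unbiased estimator depending on the data only through a complete sufficient statistic is automatically the (essentially unique) UMVUE, this finishes the argument. The estimability hypothesis will enter precisely in guaranteeing that $w^\top\hat\phi$ is well defined and unbiased even when $Z$ is rank deficient.

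First I would expose the exponential-family structure. Writing the Gaussian density and expanding the quadratic form gives, up to constants,
\begin{align*}
f_{\phi^\ast, \sigma^2}(X) \propto \exp\!\left( \frac{1}{\sigma^2} (Z\phi^\ast)^\top X - \frac{1}{2\sigma^2}\norm{X}_2^2 - \frac{1}{2\sigma^2}\norm{Z\phi^\ast}_2^2 \right).
\end{align*}
Let $V$ denote the column space of $Z$, let $r = \rank(Z)$, and let $P = Z(Z^\top Z)^+ Z^\top$ be the orthogonal projection onto $V$. Because $Z\phi^\ast \in V$, we have $(Z\phi^\ast)^\top X = (Z\phi^\ast)^\top (PX)$, so the data enters the exponent only through the pair $T(X) = (PX,\, \norm{X}_2^2)$; hence $T$ is sufficient. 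The natural parameter conjugate to $T$ is $\eta = \big(Z\phi^\ast/\sigma^2,\, -1/(2\sigma^2)\big)$, while the final term $-\norm{Z\phi^\ast}_2^2/(2\sigma^2)$ depends on the parameter alone and plays the role of the log-partition function.

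The main step is completeness. As $\phi^\ast$ ranges over its domain, $Z\phi^\ast$ sweeps out all of $V$, and as $\sigma^2$ ranges over $(0,\infty)$ the scalar $-1/(2\sigma^2)$ sweeps out $(-\infty,0)$; consequently $\eta$ ranges over $V \times (-\infty, 0)$, which is an open subset of the $(r+1)$-dimensional ambient space $V \times \R$ that the statistic $T$ actually probes. Thus $\{f_{\phi^\ast,\sigma^2}\}$ is a full-rank exponential family, and the standard completeness theorem for exponential families whose natural parameter space has nonempty interior implies that $T(X)=(PX,\,\norm{X}_2^2)$ is complete (equivalently one may use $(PX,\,\norm{(I-P)X}_2^2)$, the fitted mean together with the residual sum of squares). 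I expect this verification---pinning down that the natural parameter occupies an open set of the correct dimension $r+1$ even when $Z$ is rank deficient, so that the column-space subtleties do not destroy completeness---to be the crux of the proof.

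Finally I would connect the LSE to $T$. The normal equations give $Z\hat\phi = PX$ for any minimizer $\hat\phi$ of $\norm{X - Z\phi}_2^2$. Estimability of $w^\top\phi^\ast$ forces $w$ to lie in the row space of $Z$, i.e.\ $w = Z^\top c$ for some $c$; then $w^\top\hat\phi = c^\top Z\hat\phi = c^\top PX$ is a fixed linear function of $PX$, in particular independent of the choice of minimizer, hence a function of the complete sufficient statistic $T(X)$. Its expectation is $\expt{c^\top P X} = c^\top P Z\phi^\ast = c^\top Z\phi^\ast = w^\top\phi^\ast$, using $PZ = Z$ and $w = Z^\top c$, so it is unbiased. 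Lehmann--Scheffé then yields that $w^\top\hat\phi$ is the UMVUE for $w^\top\phi^\ast$, completing the proof.
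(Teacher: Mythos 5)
Your proposal is correct, and it is essentially the canonical argument: the paper itself offers no proof of this statement---it is restated verbatim from Theorem~3.7 of the cited textbook \cite{shao2003mathematical}---and your Lehmann--Scheff\'e route (complete sufficient statistic $T(X)=(PX,\norm{X}_2^2)$, equivalently $(Z^\top X,\norm{X}_2^2)$, via the full-rank exponential family structure, then $w^\top\hat\phi=c^\top PX$ as an unbiased function of $T$) is exactly the proof given in that reference, including the correct handling of the rank-deficient case where the natural parameter occupies an open set of dimension $r+1$. The only step you assert rather than prove is that estimability forces $w$ into the row space of $Z$; this is standard and takes one line: the distribution of the data depends on $\phi^\ast$ only through $Z\phi^\ast$, so if $w\notin\mathcal{R}(Z^\top)$ there exists $\phi_0$ with $Z\phi_0=0$ but $w^\top\phi_0\neq 0$, and any estimator has identical expectation under $\phi^\ast$ and $\phi^\ast+\phi_0$ while the targets differ, contradicting unbiasedness. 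With that one-liner added, your proof is complete.
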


\begin{proof}[Proof of Theorem \ref{thm:umvue}]
In our context, the parameters of interest are the latent factors $\aa^\ast$ and $\bb^\ast$, thus $\phi^\ast = (a_1^\ast, a_2^\ast, \dots, a_n^\ast, b_1^\ast, \dots, b_m^\ast) \in \R^{n+m}$. 
Each observation of the vector $X$ corresponds to an observed matrix entry: $X = \Vec_{\Omega}(M) \in \R^{n_e}$. When $X_s$ corresponds to $M_{k \ell } = a_k^\ast + b_{\ell}^\ast + E_{k \ell}$, the covariate $Z_s$ is $e_{k} + e_{n+\ell}$, where $e_k$ and $e_{n+\ell}$ are the standard basis vector in $\R^{n+m}$. 
By applying Theorem~\ref{thm:lr_umvue} with $w = e_i + e_{n+j}$, it follows that the least squares estimator $\hat M_{ij}^{\lse}$ is the UMVUE for $M_{ij}^\ast$. The desired result follows from the equivalence between EFE and LSE stated in Theorem~\ref{thm:equivalence_efe_lse}.
The variance is implied by Theorem~\ref{thm:flow_est_upper_bound}.
\end{proof}

By Thomson's Principle~(Theorem~\ref{thm:thomson}), we know that the electrical flow estimator achieves minimum variance out of all the unit flow estimators. Theorem~\ref{thm:umvue} shows something stronger---the electrical flow estimator has the minimum variance out of all the unbiased estimators, uniformly for all possible values of the signal. 
The connection between the electrical flow estimator and the least squares estimator offers a two-way street. 
On the one hand, we obtain a deeper understanding of the minimum achievable variance of unbiased estimators, which is tied to the effective resistance measuring graph connectivity. 
On the other hand, we demonstrate that the electrical flow estimator is UMVUE, which would have been a daunting task had we not known the equivalence in Theorem~\ref{thm:equivalence_efe_lse}.

\subsection{Application to Panel Data with Two-Way Fixed Effects}
\label{subsec:panel_data}

Consider a panel data setting in which we observe outcomes of units over time, and units can be either exposed to treatment or control at any given time. Two-Way Fixed Effect (TWFE) regressions are commonly used to analyze panel data for the purpose of estimating an average treatment effect. It relies on the Two-Way Fixed Effect model, which posits that
\[Y_{it} =  \alpha_i + \gamma_t + \beta_{it} X_{it} + E_{it}, \quad\text{ for } i \in[N], \; t\in[T],\]
where $Y_{it}$ denotes the outcome, $X_{it}$ denotes the treatment indicator, $E_{it}$ denotes an unobserved noise, and $\alpha_i, \gamma_t, \beta_{it}$ are unknown parameters of the model.
The typical TWFE model in the literature additionally assumes homogenous treatment effect, i.e. $\beta_{it} = \beta$, which can be unrealistic in practice. TWFE regression simply estimates the unknown parameters $\{\alpha_i\}_{i\in[n]}, \{\gamma_t\}_{t\in[T]}$, and $\beta$ via least squares regression on the observations.
While TWFE regressions are well understood for simple treatment patterns, such as staggered adoption, there is still a gap in our understanding when it comes to arbitrary treatment patterns.

We apply our new class of flow estimators to estimate individual causal effects for panel data under the heterogeneous TWFE model as defined in~\eqref{eq:linear_fixed_effect}, where we assume that $\beta_{it} = a_i + b_t$. 
For notation simplicity, we define $F^\ast \in \R^{N \times T}$ and $G^\ast \in \R^{N \times T}$ as the expectations of the outcomes in the control group and the treatment group, respectively, i.e.~$F^\ast_{it} = \alpha_i + \gamma_t$ and $G^\ast_{it} = (\alpha_i + a_i) + (\gamma_t + b_t)$ for all $(i,t)$. Therefore, we have the following relation
    \begin{align*}
        Y_{it} = 
        \begin{cases}
            F^\ast_{it} + E_{it} & \Omega^0_{it} = 1 \\
            G^\ast_{it} + E_{it} & \Omega^1_{it} = 1.
        \end{cases}
    \end{align*}
    The causal effects are the differences between $F^\ast$ and $G^\ast$
   $\beta_{it} = G^\ast_{it} -  F^\ast_{it}$. The individual treatment effect $\beta_{it}$ can be estimated by subtracting estimates obtained from the treatment graph $\mathcal{G}(\Omega^1)$ and the control graph $\mathcal{G}(\Omega^0)$.
   For any {\em arbitrary treatment pattern}, under the heterogeneous two way additive fixed effects model, the unbiased estimate given by our network flow estimator for the outcome of a unit-time pair $(i,t)$ under treatment or control is constructed by weighting observations according to any unit $(i,t)$ flow in the graphs associated with the treatment and control observations $\Omega^1$ and $\Omega^0$ respectively. The network flow estimator for $\beta_{it}$ requires that $i$ and $t$ are connected in both the treatment graph $\mathcal{G}(\Omega^1)$ and the control graph $\mathcal{G}(\Omega^0)$.

The electrical flow estimator, denoted $\hat \beta_{it}^{\efe}  = \hat{G}_{it}^{\efe} - \hat{F}_{it}^{\efe}$, results from weighting observations according to the $(i,t)$-electrical flow in the graphs associated with the treatment and control observations, respectively.
    Due to the properties given in Section \ref{sec:guarantees}, we know that $\hat \beta_{it}^{\efe}$ is minimax optimal and variance minimizing.  
We obtain the following error bound by applying Theorem~\ref{thm:flow_est_upper_bound} to $ \hat{G}_{it}^{\efe}$ and $\hat{F}_{it}^{\efe}$. Let $R_{\Omega^0} (u_i, v_j)$ denote the effective resistance of $u_i$ and $v_j$ in graph $\mathcal{G}(\Omega^0)$ and let $R_{\Omega^1} (u_i, v_j)$ denote the effective resistance of $u_i$ and $v_j$ in graph $\mathcal{G}(\Omega^1)$. 
    
    \begin{thm}
        \label{thm:fixed_effects_upper_bound}
        Under Assumptions~\ref{assum:zero-mean_noise} and~\ref{assum:indepenent_subG}, there exists an absolute constant $C>0$ such that with probability at least $1-\delta$, we have
        \begin{align}
            \big( \hat{\beta}_{it}^{\efe} - \beta_{it} \big)^2 \le C \sigma^2 \left[ R_{\Omega^0} (u_i, v_t) + R_{\Omega^1} (u_i, v_t) \right] \log ( NT / \delta), 
        \end{align}
        for all $(i,t)$ such that $R_{\Omega^0} (u_i, v_t) +  R_{\Omega^1} (u_i, v_t) < \infty$.
    \end{thm}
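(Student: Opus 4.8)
The plan is to reduce this statement to two applications of Theorem~\ref{thm:flow_est_upper_bound}, one on the treatment graph and one on the control graph, and then combine them by exploiting the independence of the noise across treatment and control observations. First I would decompose the error as
\[
\hat \beta_{it}^{\efe} - \beta_{it} = \big(\hat G_{it}^{\efe} - G^\ast_{it}\big) - \big(\hat F_{it}^{\efe} - F^\ast_{it}\big).
\]
Because $F^\ast$ and $G^\ast$ are both additive matrices, and $\hat F^{\efe}$, $\hat G^{\efe}$ are the electrical flow estimators computed on the control graph $\mathcal{G}(\Omega^0)$ and the treatment graph $\mathcal{G}(\Omega^1)$ respectively, the argument in the proof of Theorem~\ref{thm:flow_est_upper_bound} shows that each piece is a zero-mean sub-Gaussian random variable, with sub-Gaussian parameters $\sigma^2 R_{\Omega^0}(u_i, v_t)$ and $\sigma^2 R_{\Omega^1}(u_i, v_t)$. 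The finiteness condition $R_{\Omega^0}(u_i, v_t) + R_{\Omega^1}(u_i, v_t) < \infty$ is precisely the requirement that $u_i$ and $v_t$ be connected in both graphs, which guarantees that both electrical flow estimators are well-defined and unbiased.

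The crucial structural observation is that $\Omega^0 = \Omega \circ (1-X)$ and $\Omega^1 = \Omega \circ X$ partition the observed entries: every observed entry $(k,\ell)$ is recorded either under treatment or under control, but never both. Consequently $\hat F_{it}^{\efe}$ and $\hat G_{it}^{\efe}$ are weighted sums over disjoint collections of observations, so they depend on disjoint subsets of the noise variables $\{E_{k\ell}\}$. Under Assumption~\ref{assum:indepenent_subG} these noise variables are independent, which makes the two error pieces independent. Since a sum (here a difference) of independent sub-Gaussian random variables is itself sub-Gaussian with parameter equal to the sum of the individual parameters, I conclude that $\hat \beta_{it}^{\efe} - \beta_{it}$ is sub-Gaussian with parameter $\sigma^2\big[R_{\Omega^0}(u_i, v_t) + R_{\Omega^1}(u_i, v_t)\big]$.

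Finally I would apply the standard sub-Gaussian (Hoeffding) tail bound for each fixed $(i,t)$,
\[
\mathbb{P}\Big[\,\big|\hat\beta_{it}^{\efe} - \beta_{it}\big| > \tau\,\Big] \le 2\exp\left(-\frac{\tau^2}{2\sigma^2\big[R_{\Omega^0}(u_i, v_t) + R_{\Omega^1}(u_i, v_t)\big]}\right),
\]
take a union bound over all $NT$ pairs $(i,t)$ with finite effective resistances, and solve for $\tau$ with the failure probability set to $\delta$; squaring the resulting deviation yields the claimed bound with $C$ an absolute constant (e.g.\ $C = 4$ suffices to absorb the factor from the union bound). I do not expect a serious technical obstacle: the only step that genuinely requires care is the justification of independence between $\hat F^{\efe}$ and $\hat G^{\efe}$, which rests entirely on the disjointness of $\Omega^0$ and $\Omega^1$ combined with the independence of the noise in Assumption~\ref{assum:indepenent_subG}. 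This is what allows the two effective resistances to add rather than needing a crude triangle-inequality bound that would worsen the constants.
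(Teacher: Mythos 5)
Your proposal is correct and follows essentially the same route as the paper, whose proof consists of applying Theorem~\ref{thm:flow_est_upper_bound} to $\hat G_{it}^{\efe}$ and $\hat F_{it}^{\efe}$ on the treatment graph $\mathcal{G}(\Omega^1)$ and the control graph $\mathcal{G}(\Omega^0)$ and combining the two entrywise bounds. Your refinement of combining at the level of sub-Gaussian parameters --- using the disjointness of $\Omega^0$ and $\Omega^1$ together with Assumption~\ref{assum:indepenent_subG} to get independence of the two error terms, rather than combining two high-probability bounds via the triangle inequality --- only sharpens the absolute constant $C$ and does not change the substance of the argument.
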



The estimation quality of a causal effect $\beta_{it}$ is related to the connectivity between $u_i$ and $v_j$ in the induced bipartite graphs for the control and the treatment groups as characterized by the effective resistance, which gives a fine-grained and intuitive understanding of the impact of the given treatment pattern on estimation.
    Theorem \ref{thm:unidentifiability} implies that if $u_i$ and $v_j$ are not connected in either of the graphs, i.e. either $R_{\Omega^0} (u_i, v_j)=\infty$ or $R_{\Omega^1} (u_i, v_j)=\infty$, then $\beta_{it}$ is unidentifiable. 
        Individual causal effects $\beta_{it}$ that can be estimated very well correspond to pairs $u_i$ and $v_t$ with low effective resistance, implying that they are well connected in both graphs $\cG(\Omega^0)$ and $\cG(\Omega^1)$.  
    Our algorithm and guarantees hold for any given set of observations $\Omega^0$ and $\Omega^1$, in contrast to the very specific and strong assumptions that are placed on the observation patterns in the existing literature on panel data with two-way fixed effects.

\subsubsection{Connection to Difference in Differences Estimator}


Our approach introduces a family of estimators parameterized by flows over networks, where flows restricted to length-3 paths correspond to the difference-in-differences (DiD) estimator. 
    Consider the instance shown in Figure~\ref{fig:DID_equivalence}, where $X_{i2} = 1$, and thus we only have a direct observation of the outcome of unit $i$ at time $2$ under treatment but not under control. However, suppose that in the bipartite graph $\cG(\Omega^0)$ associated with the observations assigned to control, $u_i$ and $v_2$ are connected via a path $u_i \to v_{1} \to u_{j} \to v_2$, which means that $X_{i1} = 0$, $X_{j1} = 0$, and $X_{j2} = 0$. Even though customer $i$ is exposed to the promotion at time $2$, we can then use the outcomes associated with $Y_{i1}, Y_{j1}$, and $Y_{j2}$ to estimate the potential outcome of customer $i$ at time $2$ under control.
    This results in the single path estimator of $\hat \beta_{i2} = Y_{i2} - (Y_{j2} + Y_{i1} - Y_{j1}) = (Y_{i2} - Y_{j2}) - (Y_{i1} - Y_{j1})$. 
    
    \begin{figure}[h]
        \centering
\includegraphics[width=0.8\linewidth]{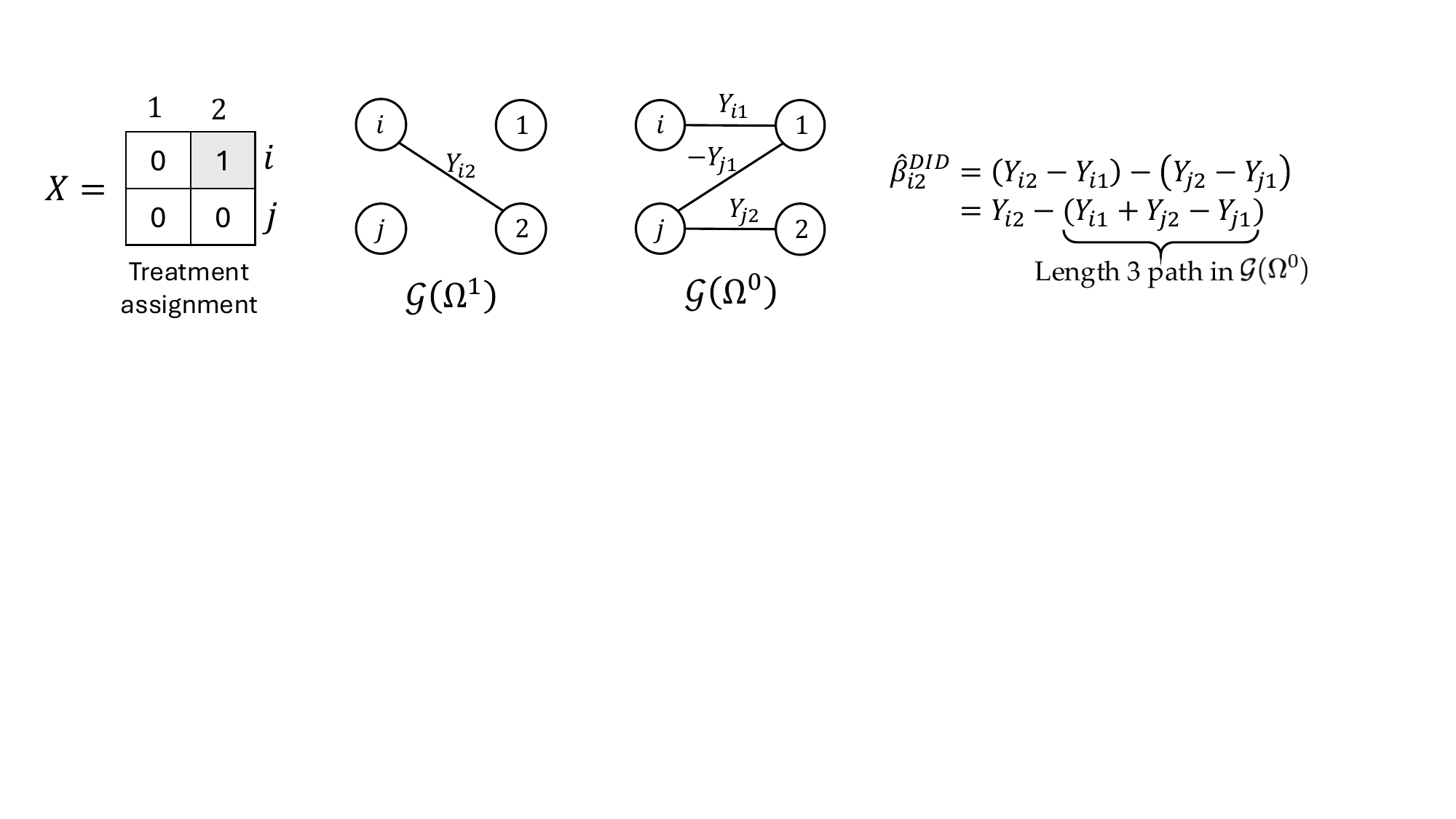}
        \caption{Equivalence of DiD estimators to flow estimators constrained to length 3 paths.}
        \label{fig:DID_equivalence}
    \end{figure}
    
    In this specific example, our single path estimator recovers the \emph{difference-in-differences} estimator, which estimates the treatment's causal effect by subtracting the difference in outcomes between the treated unit $i$ and the control unit $j$ at time $1$ before the treatment, from the difference in outcomes at time $2$ after the treatment. Our algorithmic framework can thus be interpreted as an extension of \emph{difference-in-differences} estimation, where the difference-in-differences method is a special case of our network flow estimator with length 3 paths.
%
    The general class of flow estimators offers more flexibility when the treatment and observation pattern may be unstructured. When there does not exist connecting paths of length 3 or shorter, the DiD estimator is unable to provide any estimate, while the flow estimator shows that the heterogeneous treatment effect $\beta_{it}$ is identifiable as long as $i$ and $t$ are connected in both the treatment and control graphs.
           

\subsubsection{Connection to TWFE Regression}
    
    The equivalence between EFE and LSE in Theorem \ref{thm:equivalence_efe_lse} implies that the EFE is equivalent to a variation of TWFE regression that accounts for heterogeneous treatment effects (satisfying additive fixed effects),
    \begin{align}
    (\hat \mu, \hat \nu) = \operatorname{argmin}_{\mu, \nu, \alpha, \gamma} \; \textstyle\sum_{i, t} \Omega^0_{it} (\alpha_i + \gamma_t - Y_{it} )^2 + \textstyle\sum_{i, t} \Omega^1_{it} (\alpha_i + \gamma_t + \mu_i + \nu_t -Y_{it})^2.
    \label{eq:TWFE_regression}
    \end{align}
    Adding the estimates yields $\hat \beta^{\lse}_{it} = \hat \mu_i + \hat \nu_t$.
    Since LSE is equivalent to EFE, we have $\hat \beta^{\efe}_{it} = \hat \beta^{\lse}_{it}$ and \eqref{eq:TWFE_regression} can be solved by using the electrical flows from the treatment graph and the control graph. 
    We can thus interpret TWFE regression estimates as a sophisticated method for combining paths--potentially longer and overlapping ones--not just those of length-3 as in DiD. 
    Because of this flexibility, EFE (and LSE) can recover treatment effects for arbitrary treatment patterns, even when DiD fails due to missing length-3 paths. This offers a clean interpretation of the minimum variance by leveraging the notion of effective resistance in electrical networks. In an electrical network, effective resistance between two vertices is lower if there are multiple short paths connecting them, indicating greater connectivity. \cite{jochmans2019fixed} has previously showed that graph connectivity as characterized by the spectral gap and overlap of direct neighbors is key to estimation accuracy of TWFE regression for fixed effect models. Our result further gives an explicit and exact equivalence of the variance of the best estimator with the effective resistance, providing fine-grained entry-specific results.

    \subsubsection{Example with a Staggered Exposure Model}
        
        We discuss a concrete observation pattern that demonstrates the efficacy of our method.  For simplicity, assume $N=T$ and the dataset is balanced.
        Suppose we partition the units and the time periods into $G$ groups. We assume that groups are of the same size, $H = N / G$. Let the unit groups be denoted as $R_1, R_2 ,\dots, R_G$ and the time groups be denoted as $C_1, C_2 ,\dots, C_G$. If a unit $i \in R_{g}$ for some $g \in[G]$, then $X_{it}=1$ if and only if $t \in C_{g}$ or $t \in C_{g+1}$. 
        We show a plot of this pattern in Figure~\ref{fig:FE_example}. 
        This suggests a staggered exposure setting in which groups get exposed to treatment at staggered times, but instead of the treatment persisting until the end of the experiment (as in a staggered adoption setting), each unit is only exposed to the treatment for a fixed length of time. 

        \begin{figure}[h]
             \centering
             \includegraphics[width=0.3\textwidth]{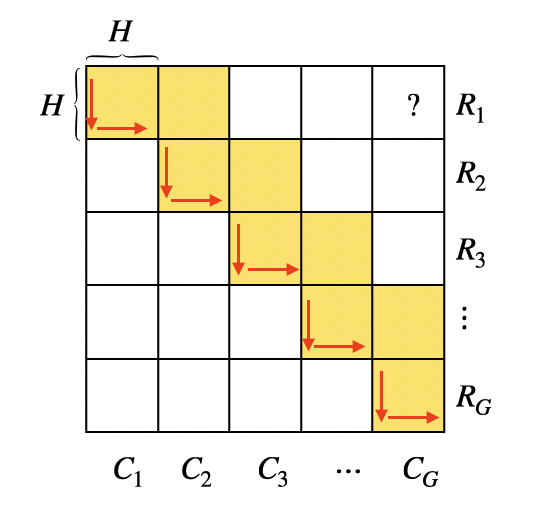}
             \caption{Treatment pattern corresponding to a staggered exposure model, where units are partitioned into $G$ groups and each group is exposed to the treatment for a fixed length of time beginning at staggered times. The red arrows show an example path from row $1$ to column $T$.}
             \label{fig:FE_example}
        \end{figure}
        
        Consider estimating the causal effect of unit $1$ at time $T$. Estimating control outcomes is relatively easy since many of the entries nearby $(1,T)$ are observed under control. Estimating treatment outcomes is more difficult, as many classical matrix completion methods require a densely observed block of a sufficiently large size under treatment. One can show that there does not exist any permutation of the rows and columns such that there is a densely observed block under treatment. This is due to the fact that the treatment pattern resembles a block identity matrix, and if $H$ is small, the graph $\cG(\Omega^1)$ is structured like a thin ring graph, which does not contain large cliques. However, we can verify that $u_1$ and $v_T$ are connected in graph $\cG(\Omega^1)$ by $H$ disjoint paths of length $2G$. 
        In Figure~\ref{fig:FE_example} we show one such example path in red arrows, consisting of entries $(1,1), (H,1), (H,H), (2H, H), (2H, 2H), \dots, (GH, GH)$. \cyedit{Recall that the effective resistance is the minimizer of the program defined in \eqref{eq:flow_min_var_optimization}, and thus the energy of any flow is an upper bound on the effective resistance. Therefore, we can consider the flow which puts $1/H$ flow on each of the disjoint paths of length $2G$, resulting in an upper bound on the effective resistance in the treatment graph $R_{\Omega^1}(u_1,v_T) \leq 2G/H = 2G^2/N$. The control graph is very dense, especially when $H$ is small relative to $N$. In particular there is a upper right submatrix of size $mH \times mH$ for $m = \lfloor \frac{N-H}{2H} \rfloor$, which contains $(1,T)$ and forms a bipartite clique in $\mathcal{G}(\Omega^0)$. Thus there are $mH-1$ disjoint paths of length 3 connecting $u_1$ and $v_T$, and one path of length 1 directly connecting $u_1$ and $v_T$ implying an upper bound on the effective resistance of $R_{\Omega^0}(u_1,v_T) \leq 3/mH = 6/(N-H)$.}
        It follows from Theorem~\ref{thm:fixed_effects_upper_bound} that with high probability
        \begin{align*}
            |\hat \beta_{1T} - \beta_{1T}| \lesssim \sigma \sqrt{\frac{G^2}{N}},
        \end{align*}
        ignoring logarithmic factors. When the number of groups satisfies $G = o(\sqrt{N})$, it is guaranteed that $\hat \beta_{1T}$ is a consistent estimator. In this example, the difference-in-differences estimator is unable to produce an estimate as there is no length 3 path in the bipartite graph associated with the outcomes of the treated units. The estimators in \cite{xi2023nonuniform, agarwal2021causal} are also unable to produce an estimate as there is no dense submatrix containing entry $(1,T)$. Classical estimators from the matrix completion literature also fail to provide guarantees due to the highly non-uniform observation pattern.

\subsubsection{Synthetic Simulations}
We conduct synthetic experiments with a sparsified staircase treatment pattern as depicted in Figure~\ref{fig:staircase_treatment_matrix}. The treatment graph $\mathcal{G}(\Omega^1)$ as depicted in Figure~\ref{fig:staircase_treatment_graph} has a zig-zag pattern. For most entries $(i,j)$ there does not exist a length-3 path in the graph $\mathcal{G}(\Omega^1)$, and as a result DiD fails in this scenario. The data matrix Figure~\ref{fig:simulation_eff_res} shows a heatmap of the sum of the effective resistances for the graphs corresponding to the treatment and control observations.
We generate row and column fixed effects from a Gaussian distribution and set the variance of the observation noise as $\sigma^2 = 0.01$. For a given realization of the observations we compute the corresponding electrical flow estimates $\hat{\beta}_{it}^{\efe}$ for all pairs $(i,t)$. We repeat the experiment 1000 times and compute the entrywise mean squared error for each entry.
Figure~\ref{fig:ratio} shows a histogram of the ratio between the entrywise mean squared error of the EFE and the sum of the effective resistances on the treatment and control graphs. Figure~\ref{fig:simulation_MSE} shows a heatmap of the entrywise mean squared error, which visually matches the heatmap of the effective resistances as depicted in Figure~\ref{fig:simulation_eff_res}. These empirical results confirm our theoretical results as stated in Theorem \ref{thm:flow_est_upper_bound} that the estimation error is proportional to the sum of the effective resistance from the treatment and control graphs, where the multiplicative factor is determined by the variance of the observation noise $\sigma^2 = 0.01$. 

\begin{figure}
    \centering
    \begin{subfigure}[b]{0.28\textwidth}
    \includegraphics[width=\textwidth]{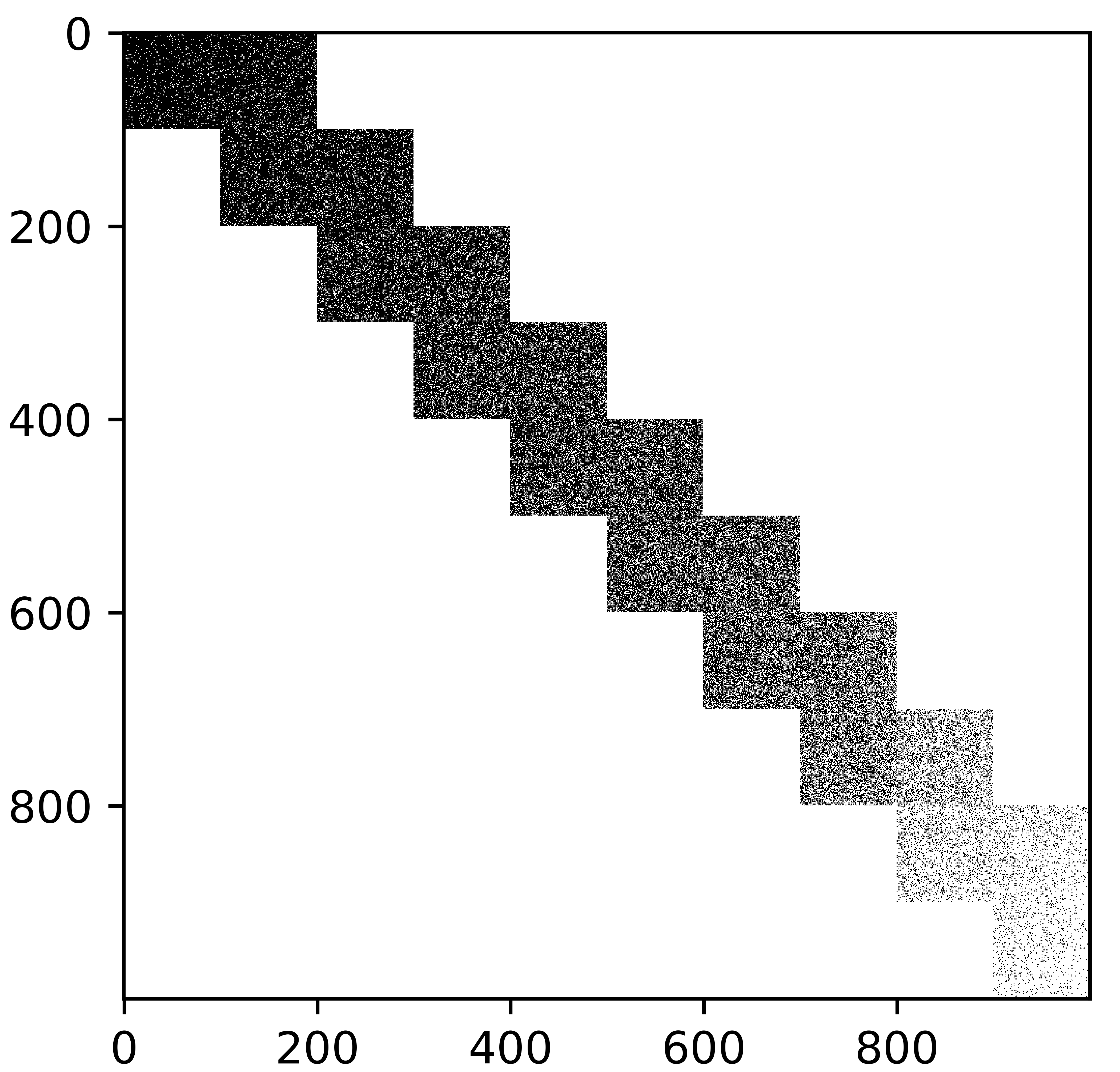}
    \caption{~}
    \label{fig:staircase_treatment_matrix}
    \end{subfigure}
    \quad
    \begin{subfigure}[b]{0.13\textwidth}
    \includegraphics[width=\textwidth]{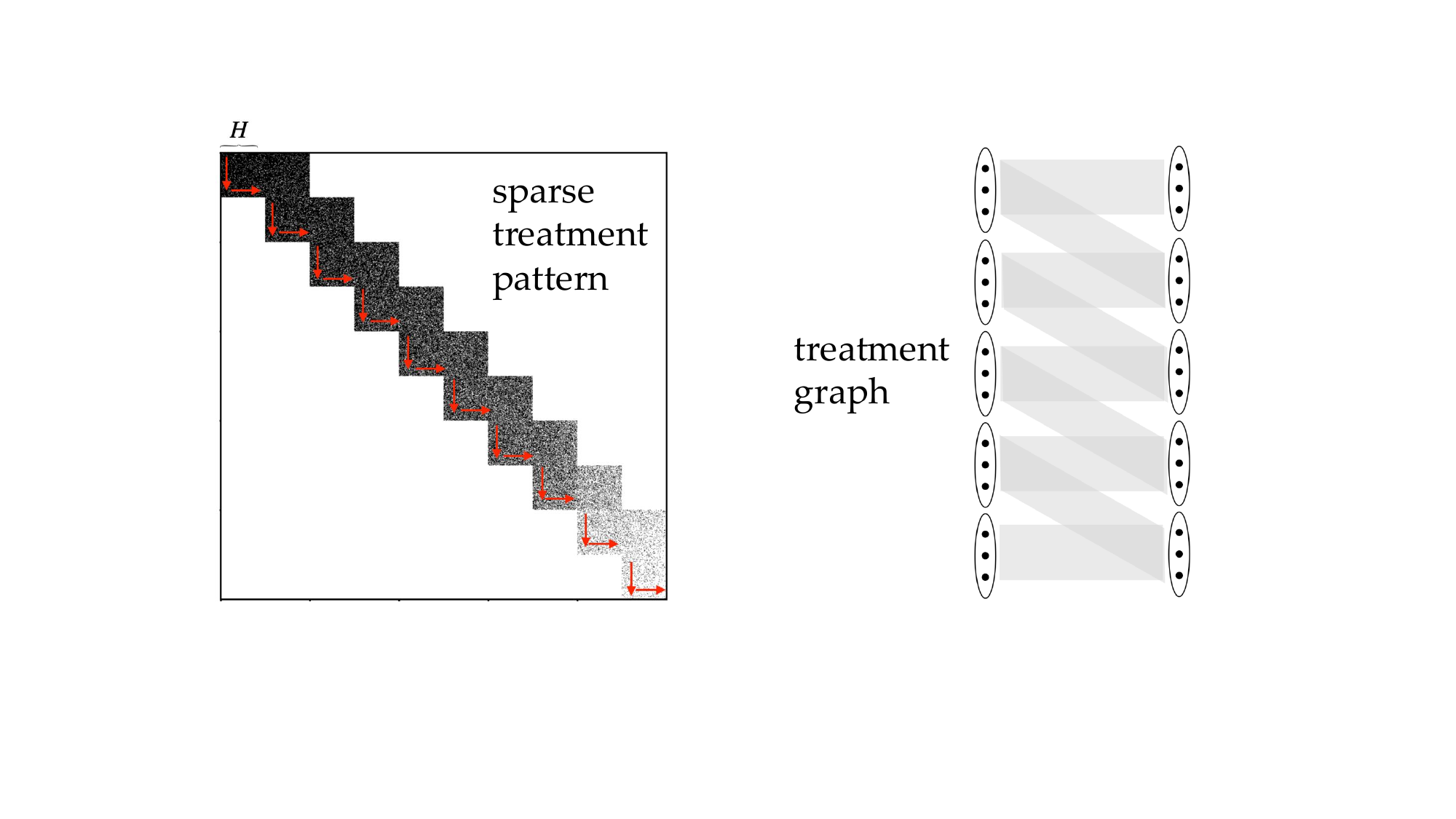}
    \caption{~}
    \label{fig:staircase_treatment_graph}
    \end{subfigure}
    \quad
    \begin{subfigure}[b]{0.5\textwidth}
    \includegraphics[width=\textwidth]{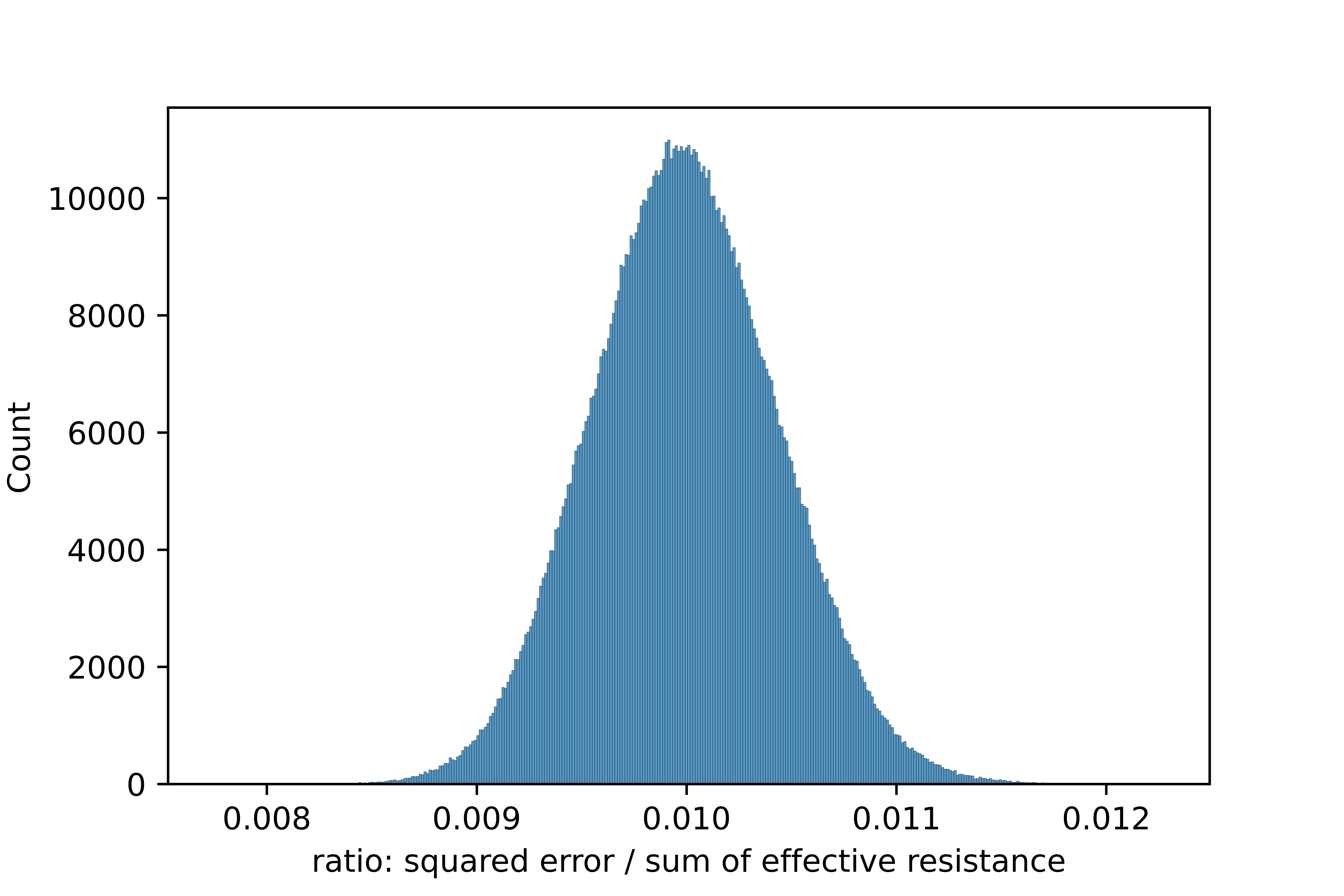}
    \caption{~} \label{fig:ratio}
    \end{subfigure}
    \caption{(a) The treatment pattern $X$ has a staircase structure which gets progressively sparser to the bottom right. (b) The corresponding treatment graph $\mathcal{G}(\Omega^1)$ has a zig-zag pattern. (b) Histogram of the ratio between the entrywise squared error and the sum of effective resistances. As predicted by our theory, the ratio is distributed as a normal distribution with mean equal to $\sigma^2 = 0.01$.}
\end{figure}

\begin{figure}
    \centering
    \begin{subfigure}[b]{0.35\textwidth}
    \includegraphics[width=\textwidth]{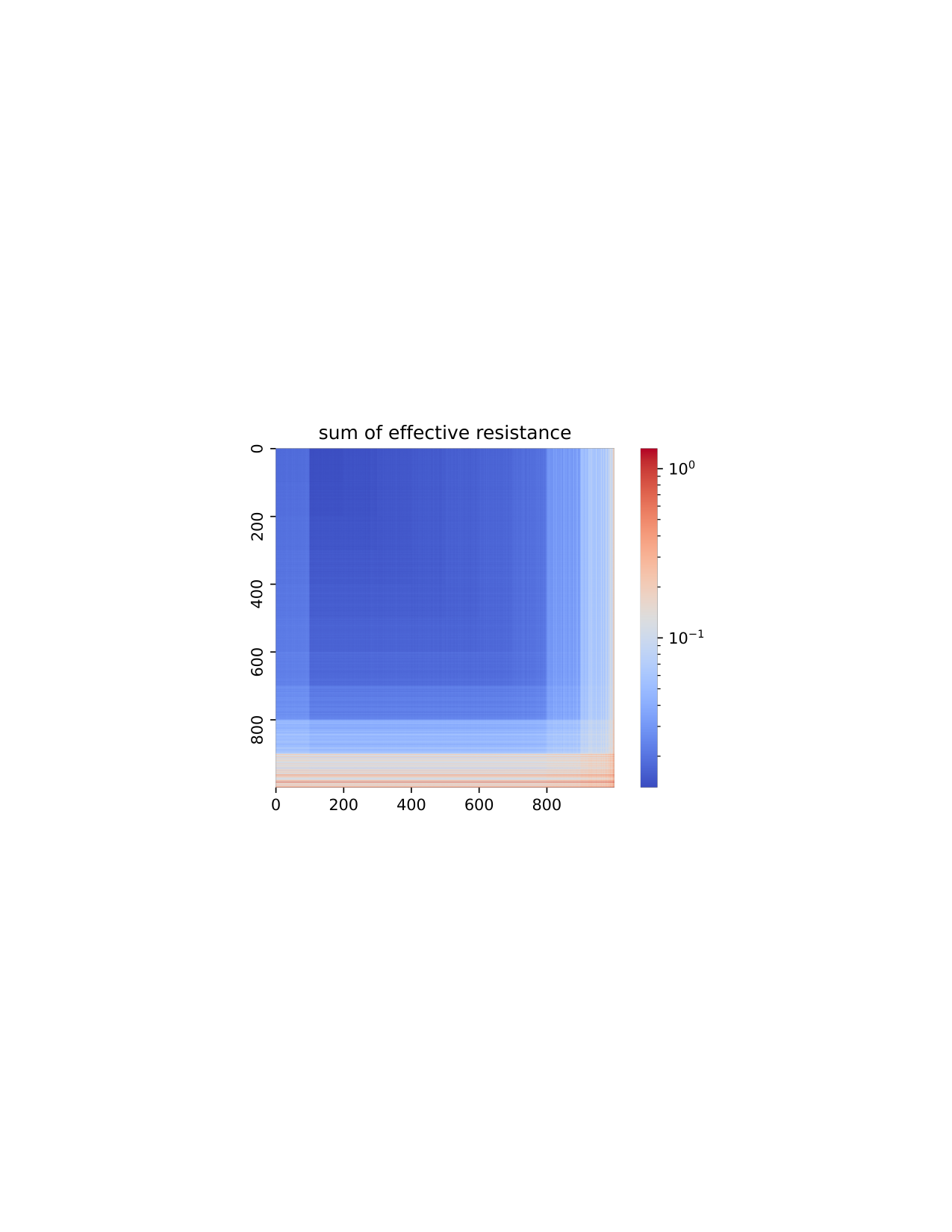}
    \caption{~} \label{fig:simulation_eff_res}
    \end{subfigure}
    \begin{subfigure}[b]{0.35\textwidth}
    \includegraphics[width=\textwidth]{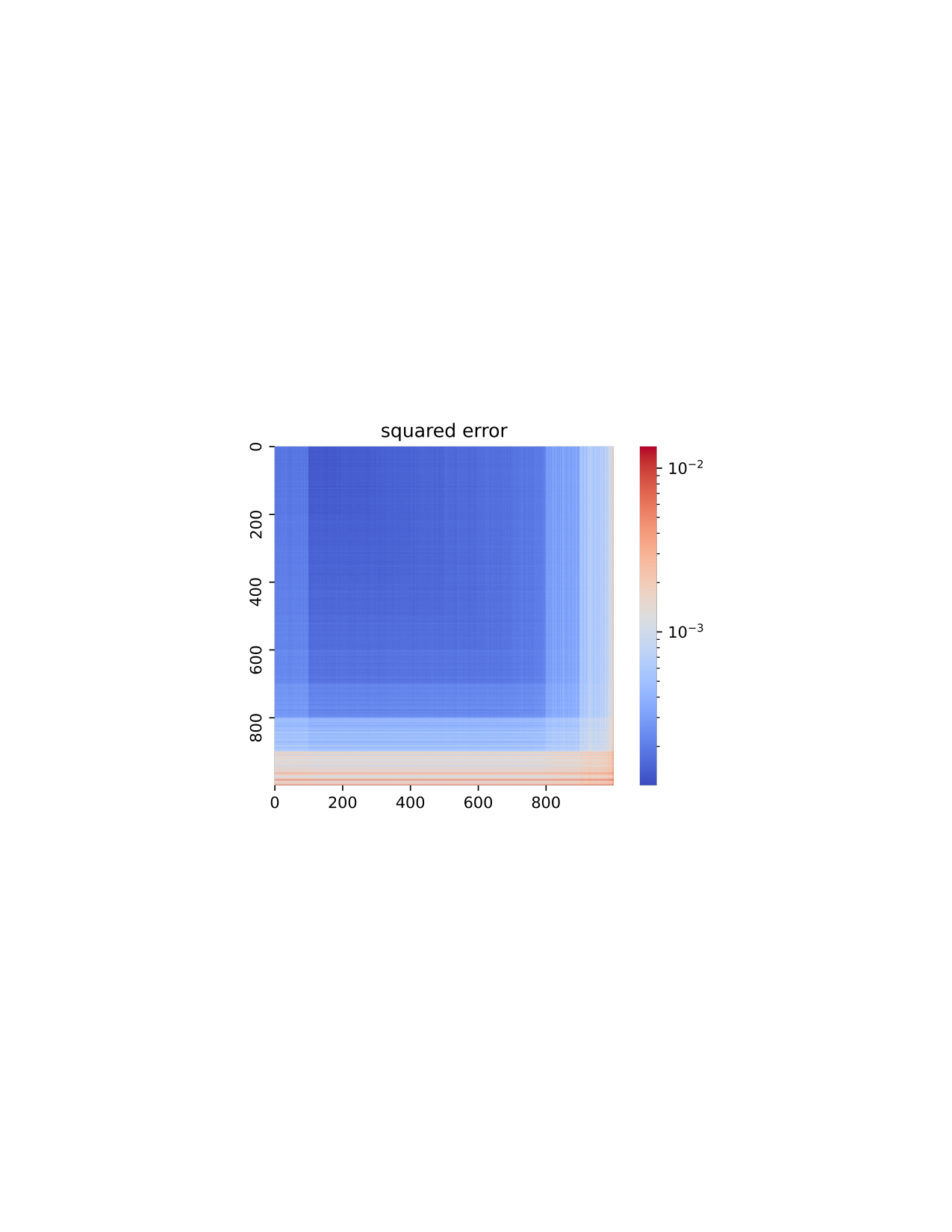}
    \caption{~} \label{fig:simulation_MSE}
    \end{subfigure}
    \caption{We can visually verify that the entrywise squared estimation error is proportional to the effective resistances. (a) Heatmap of the sum of effective resistances in the graphs corresponding to treated and control observations. (b) Heatmap of the entrywise squared estimation error.}
    \label{fig:simulation}
\end{figure}

\section{Entry-Specific Estimation for Rank-1 Matrices}

In this section, we shift our focus to estimating rank-1 matrices, introduced in Definition~\ref{defn:rank1}.
We propose a novel construction of a path-induced estimator, which bears resemblance to the difference-on-path estimator~\eqref{eq:single_path_est} for estimating additive matrices. After introducing our estimator, we present entry-specific error upper bounds that depend on the maximum number of disjoint paths and the maximum length of these paths. We show that the upper bounds match the minimax lower bound when the observation pattern is sufficiently dense or contain dense submatrices. \cycomment{what do we mean by "has a particular structure", I deleted that phrase since I wasn't sure what it was trying to refer to.} 

\subsection{Ratio-on-Path Estimator}
\label{subsec:ratio_on_path}

    Recall that for a fixed $\Omega$, we construct an undirected bipartite graph $\cG(\Omega) = (\cV, \cE({\Omega}) )$ where the vertex sets represent the rows and columns, and the edges are determined by the sparsity pattern of $\Omega$. See Subsection~\ref{subsec:additive_graph_construction} for details and notation.  
    Consider estimating $M_{ij}^\ast$ for a fixed index $(i, j) \in [n] \times [m]$. Let $\path(\xx, \ell)$ denote a simple path in $\cG(\Omega)$ connecting $u_{i}$ and $v_{j}$: $u_{x_{1}} (= u_{i}) \to v_{x_2} \to u_{x_3} \to v_{x_4} \to \dots \to u_{x_{\ell}} \to v_{x_{\ell+1}} (=v_{j})$. 
    We can construct an estimator from this path according to
    \begin{align}
    \label{eq:single_path_est_rank1}
        \hat{M} _{ij}^{\path(\boldsymbol{x}, \ell)} = \frac{ \alpha^{\path(\boldsymbol{x}, \ell)} }{\beta^{\path(\xx, \ell)}} = \frac{  \prod_{s=1, 3, \dots, \ell} Y_{x_{s} x_{s+1}} }{ \prod_{s=3,5,\dots,\ell} Y_{x_{s} x_{s-1}} }.
    \end{align}
    In the numerator $\alpha^{\path(\xx, \ell)}$, we multiply all observations appearing on an edge crossing from a left node to a right one. 
    In the denominator $\beta^{\path(\xx, \ell)}$, we multiply all observations appearing on an edge crossing from a right node to a left one. The intuition behind the design of~\eqref{eq:single_path_est_rank1} is that the factors irrelevant to estimating $M_{ij}^\ast = a_i^\ast b_j^\ast$ can cancel each other out from the terms on the numerator and the denominator, leaving a noisy perturbation of $a_i^\ast b_j^\ast$.  However, without assuming any additional properties of $M^\ast$, it is likely that \eqref{eq:single_path_est_rank1} is not unbiased due to the noise in the denominator. \cyedit{Furthermore, the denominator of~\eqref{eq:single_path_est_rank1} may not be bounded away from zero due to the noise in the observation, even under the assumption that the entries of $M^*$ are bounded away from zero.}

    \cycomment{I went ahead and removed the other estimator in between and skipped right to the final estimator.}
    To address these challenges, we propose the following estimator: We consider a collection of (edge-)disjoint paths $\{ \path(\xx^k, \ell_k) \}_{k \in [K]}$ connecting $u_i$ and $v_j$. Let $K \equiv K(i,j)$ be the maximum number of disjoint paths from $u_i$ to $v_j$. We compute a ratio of the averages of $\alpha^{\path(\boldsymbol{x}, \ell)} \cdot \beta^{\path(\xx, \ell)}$ and $\left(\beta^{\path(\xx, \ell)}\right)^2$ from~\eqref{eq:single_path_est_rank1}, where \cyedit{we have multiplied the numerator and denominator by $\beta^{\path(\xx, \ell)}$ to avoid instability when $\beta^{\path(\xx, \ell)}$ may be close to zero}.
    \begin{equation}
    \label{eq:multi_path_est_stable}
        \hat M_{ij} = \frac{ \frac1K \sum_{k} \alpha^{\path(\xx^k, \ell_k )} \cdot \beta^{\path(\xx^k, \ell_k )} }{  \frac1K \sum_{k} (\beta^{\path(\xx^k, \ell_k )})^2 }.
    \end{equation}
    The squared terms on the denominator make sure that with high probability, $\hat M_{ij}$ is well-defined for sufficiently large $K$, \cyedit{i.e. we just need sufficiently many paths for which $\beta^{\path(\xx, \ell)}$ is not too close to zero.}
    The expectation of the numerator of~\eqref{eq:multi_path_est_stable} is
    \begin{align*}
        \frac1K \sum_k \Bigg( \prod_{s=1,3,\dots, \ell_k} M^\ast_{x_s x_{s+1}} \Bigg) \cdot \Bigg( \prod_{s=3,5,\dots,\ell_k} M^\ast_{x_s x_{s-1}} \Bigg) &= \frac{a_i^\ast b_j^\ast}{K} \sum_k  \prod_{s=3,5,\dots,\ell_k}  (a_{x_s}^\ast b_{x_{s-1}}^\ast)^2.
    \end{align*}
    The expectation of the denominator of~\eqref{eq:multi_path_est_stable} is
    \begin{align*}
        \frac1K \sum_k \prod_{s=3,5,\dots,\ell_k} (M^\ast_{x_s x_{s-1} })^2 = \frac1K \sum_k  \prod_{s=3,5,\dots,\ell_k}  (a_{x_s}^\ast b_{x_{s-1}}^\ast)^2.
    \end{align*}
    \cyedit{Thus, since the $K$ paths are disjoint, the numerator and denominator are each averages of independent statistics, such that the numerator and denominator will each converge to its expectation for large $K$. In Theorem~\ref{thm:multi_path_est_upper_bound} we formally show that the estimator~\eqref{eq:multi_path_est_stable} is consistent as long as there are sufficiently many disjoint paths of sufficiently short length.}

    \RestyleAlgo{ruled}
    \begin{algorithm}
        \SetAlgoLined
        \SetKwFunction{algo}{PathEstimator}{}
        \SetKwProg{myalg}{Function}{}{}
        \myalg{\algo{$M, \Omega$}}{
        \For{$(i,j) \in [n] \times [m]$}{
            Find disjoint paths $\{\path(\xx^k, \ell_k)\}_{k\in[K]} \gets \texttt{PathFinder}(\cG_\Omega, i, j)$. \\
            \uIf{$K=0$}{
                $\hat M_{ij} \gets \Box$.
            }
            \Else{
                \For{$k \in [K]$}{
                    Compute $\alpha^{\path(\boldsymbol{x}^k, \ell_k)}$ and $\beta^{\path(\boldsymbol{x}^k, \ell_k)}$ according to~\eqref{eq:single_path_est_rank1}.
                }
                 Compute $\hat M_{ij}$ according to~\eqref{eq:multi_path_est_stable}.\\
            }
         }
         \KwRet{$\hat{M}$}
        }
    \caption{Path-induced estimation for additive model. \label{algo:path_est}}
    \end{algorithm}

    In line~3, the algorithm \texttt{PathFinder} outputs a set of disjoint paths with maximum size in $\cG(\Omega)$ between $u_i$ and $v_j$. 
    It is a standard routine that can be implemented using a maximum-flow algorithm; for completeness, we present one such implementation as Algorithm~\ref{algo:path_finder} in Appendix~\ref{appen:path_finder}. 
    According to~\cite{kleinberg2006algorithm}, the time complexity of such a path-finding algorithm is $O\left( (n+m) n_e \right)$, which is polynomial in $n$ and $m$. 
    Hence, the overall time complexity of Algorithm~\ref{algo:path_est} is $O\left( nm (n+m) n_e \right)$.

\subsection{Theoretical Guarantees}

    We obtain the following error bound that shows entry $(i,j)$ is easier to estimate if there are more paths between $u_i$ and $v_j$. We further assume that $\min\{ \min_i |a_i^\ast|, \min_{j} |b_j^\ast| \} \ge 1,$ which ensures that the latent factors are bounded away from zero. This condition is necessary to guarantee that the denominator in~\eqref{eq:multi_path_est_stable} is well bounded away from zero. We can replace the lower bound $1$ with an arbitrary positive constant.
    
    \begin{thm}
    \label{thm:multi_path_est_upper_bound}
        There exists an absolute constant $C>0$ such that the following is true. For $(i,j)$ with $K(i,j)\ge C$, with probability at least $1 - \delta$, we have
        \begin{align}
            \big| \hat M_{ij} - M_{ij}^\ast \big| \le C \sigma^L (1 + \inftynorm{M^\ast}^L)  \sqrt{\frac{ 2^L \log^{L+1}{(nm/\delta) }}{K}},
        \end{align}
        where $\inftynorm{M^\ast} = \max_{ij} |M_{ij}^\ast|$,
        $K$ is the maximum number of disjoint paths connecting $u_i$ and $v_j$, and $L = \max_{k} \ell_k$ is the maximum length of these paths.
    \end{thm}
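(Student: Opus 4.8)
The plan is to exploit the edge-disjointness of the $K$ paths: because distinct paths share no edges and the noise $E$ is independent across entries, the per-path statistics $\{(\alpha^{\path(\xx^k,\ell_k)}, \beta^{\path(\xx^k,\ell_k)})\}_{k\in[K]}$ are mutually independent. I would first rewrite the error in the ratio form
\[
\hat M_{ij} - M^\ast_{ij} = \frac{\frac1K\sum_{k}\big(\alpha^{\path(\xx^k,\ell_k)}\beta^{\path(\xx^k,\ell_k)} - M^\ast_{ij}(\beta^{\path(\xx^k,\ell_k)})^2\big)}{\frac1K\sum_{k}(\beta^{\path(\xx^k,\ell_k)})^2} =: \frac{A}{B},
\]
and then bound $|A|$ from above and $B$ from below on a single high-probability event. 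The numerator is designed so that, in the noiseless case, $\alpha^{\path} = M^\ast_{ij}\beta^{\path}$ exactly (the intermediate latent factors telescope), so every summand of $A$ vanishes without noise and is therefore driven entirely by fluctuations of $E$.

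The first technical step is a uniform tail bound on the individual factors. Writing each observed entry as $Y_e = M^\ast_e + E_e$ with $|M^\ast_e|\le\inftynorm{M^\ast}$ and $E_e$ sub-Gaussian with parameter $\sigma^2$, a union bound over the at most $nm$ relevant edges shows that with probability at least $1-\delta$ every noise term obeys $|E_e|\le C\sigma\sqrt{\log(nm/\delta)}$. On this event each factor is bounded by $\inftynorm{M^\ast} + C\sigma\sqrt{\log(nm/\delta)}$, so expanding products of length at most $L$ gives $|\alpha^{\path}|,|\beta^{\path}|\lesssim \big(\inftynorm{M^\ast}+\sigma\sqrt{\log(nm/\delta)}\big)^{L}$; the binomial expansion of this bound is what produces the $2^L$, $\sigma^L$, $(1+\inftynorm{M^\ast}^L)$, and $\log^{L/2}$ factors in the final statement. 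I would then control $A$ as an average of $K$ independent, truncated, bounded summands via a Bernstein/Hoeffding-type inequality; since each summand has magnitude $\lesssim (\inftynorm{M^\ast}+\sigma\sqrt{\log(nm/\delta)})^{L}$ and is built from at least one mean-zero noise factor, this yields concentration of $A$ at rate $1/\sqrt K$, supplying the extra $\log^{1/2}$ (so that the product and concentration contributions combine to $\log^{(L+1)/2}=\sqrt{\log^{L+1}(nm/\delta)}$). For the denominator, the assumption $\min_i|a_i^\ast|,\min_j|b_j^\ast|\ge 1$ forces each noiseless $\beta^{\path}$ to have squared magnitude at least one, and combined with concentration of $B$ around its (bounded-below) mean, for $K\ge C$ this certifies $B\gtrsim 1$ with high probability — precisely the role the squared denominator plays in guaranteeing that $\hat M_{ij}$ is well-defined.

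The main obstacle I anticipate is the concentration in the second step: products of $L$ independent sub-Gaussian variables are only sub-Weibull, with tails heavier than sub-exponential, so a naive application of scalar concentration fails. The clean route is to condition on the truncation event from the first step, which renders each summand bounded, and then track the $L$-dependent constants carefully through the Bernstein bound; nailing down how the $\sqrt{\log}$ factors from each of the $L$ levels of the product accumulate — and interact with the union bound over the $nm$ entries — is where the $2^L$ and $\log^{L+1}(nm/\delta)$ scalings must be established. A secondary subtlety is that $\beta^{\path}$ and the residual $\alpha^{\path}-M^\ast_{ij}\beta^{\path}$ share the backward edges and are therefore correlated, so the per-summand analysis cannot treat them as independent; this dependence has to be absorbed inside the truncation-plus-Bernstein argument rather than handled by a product-of-means factorization.
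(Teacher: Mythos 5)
Your strategy is essentially the paper's: use edge-disjointness of the $K$ paths to get independence, truncate the noise via a union bound over all $nm$ entries so that every path statistic becomes bounded, apply Hoeffding to get $1/\sqrt{K}$ concentration with the $2^L\log^{L+1}(nm/\delta)$ factors, and finish with a ratio-perturbation step in which the denominator is bounded below using $\min_i|a_i^\ast|,\min_j|b_j^\ast|\ge 1$. Your decomposition $\hat M_{ij}-M^\ast_{ij}=A/B$ with $A=\frac1K\sum_k\bigl(\alpha_k\beta_k-M^\ast_{ij}\beta_k^2\bigr)$ (writing $\alpha_k=\alpha^{\path(\xx^k,\ell_k)}$, $\beta_k=\beta^{\path(\xx^k,\ell_k)}$) is only cosmetically different from the paper's, which concentrates the numerator and denominator separately around the noiseless values $\gamma_1=\frac1K\sum_k g_k(M^\ast)$ and $\gamma_2=\frac1K\sum_k h_k(M^\ast)$ and then uses $\gamma_1=M^\ast_{ij}\gamma_2$ to write the error as $(\gamma_2\epsilon_1-\gamma_1\epsilon_2)/(\gamma_2(\gamma_2+\epsilon_2))$.

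There is, however, a genuine gap at the step where you claim $A$ concentrates around zero at rate $1/\sqrt K$ because each summand "vanishes without noise and is built from at least one mean-zero noise factor." Vanishing in the noiseless case does not make a summand mean-zero. Indeed $\expt{\alpha_k\beta_k}=M^\ast_{ij}\,\beta_k(M^\ast)^2$, since every entry enters that product linearly and the noise has mean zero, but the backward entries enter the denominator statistic \emph{squared}, so $\expt{\beta_k^2}=\prod_{s}\bigl((M^\ast_{x_s^kx_{s-1}^k})^2+\var[E_{x_s^kx_{s-1}^k}]\bigr)>\beta_k(M^\ast)^2$ whenever the noise has positive variance. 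Hence $\expt{A}=-M^\ast_{ij}\cdot\frac1K\sum_k\bigl(\expt{\beta_k^2}-\beta_k(M^\ast)^2\bigr)$, an attenuation (errors-in-variables) bias of order $\sigma^2$ times bounded signal factors that does not shrink as $K$ grows; for $L=3$ and Gaussian noise each summand has mean exactly $-M^\ast_{ij}\sigma^2$. Hoeffding/Bernstein controls only $A-\expt{A}$, so your argument as written cannot conclude $|A|\lesssim 1/\sqrt{K}$. You should know that the paper's own proof has the identical unaddressed issue: Lemma~\ref{lem:concentration_num_denom} asserts concentration of $\frac1K\sum_k h_k(Y)$ around $\frac1K\sum_k h_k(M^\ast)$, yet the displayed triangle inequality in its proof only bounds the deviation from $\frac1K\sum_k\expt{h_k(Y)}$, silently conflating $\expt{Y_e^2}$ with $(M^\ast_e)^2$. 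So your proposal faithfully reproduces the paper's argument, including its flaw; closing the gap would require either correcting the denominator for the noise variance (debiasing $\beta_k^2$) or showing the attenuation term is dominated by the claimed bound, neither of which is done here or in the paper.
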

    Note that the error upper bound depends inversely on the maximum number of disjoint paths $K$. This implies that as the number of disjoint paths between $u_i$ and $v_j$ increases, more information is encoded in the dataset for estimating entry $(i,j)$, thereby improving the estimation accuracy.
    However, we notice an exponential dependence on the maximum length $L$, indicating that the observation error accumulates exponentially as we estimate along a given path. In certain scenarios with multiple longer paths, our estimator may not guarantee optimal estimation as it could achieve better accuracy by disregarding noisy estimations based on longer paths. Simply combining disjoint paths through direct averaging is not always optimal and more work is needed to determine a better approach. Nevertheless, when the observation pattern is sufficiently dense or contains a dense submatrix, shorter paths are often available, allowing us to treat $L$ as a constant and achieve minimax optimal estimation. We prove Theorem~\ref{thm:multi_path_est_upper_bound} before introducing the optimality result.
    
    \begin{proof}[Proof of Theorem~\ref{thm:multi_path_est_upper_bound}]
        For ease of presentation, we define the following notation for all $k \in [K]$,
        \begin{align*}
            g_k(Y) &\coloneqq \bigg( \prod_{s=1, 3, \dots, \ell_k} Y_{x_{s}^k x_{s+1}^k } \bigg) \cdot \bigg( \prod_{s=3, 5, \dots, \ell_k} Y_{x_{s}^k x_{s-1}^k }  \bigg) \\
            h_k(Y) &\coloneqq \prod_{s=3, 5, \dots, \ell_k} Y_{x_{s}^k x_{s-1}^k }^2.
        \end{align*}
        As a polynomial of $Y$, the degree of each $g_k(Y)$ is $\ell_k$ and the degree of $h_k(Y)$ is $\ell_k - 1$. We first present an intermediate result showing the concentration of the numerator and the denominator of~\eqref{eq:multi_path_est_stable}, the proof of which is deferred to Appendix \ref{sec:appendix_proofs}.
        \begin{lem}
            \label{lem:concentration_num_denom}
            There exists an absolute constant $C>0$ such that with probability at least $1 - 3\delta$, we have
            \begin{align*}
                \bigg| \frac 1K \sum_{k\in[K]} g_k(Y) - \frac 1K \sum_{k\in[K]}  g_k(M^\ast) \bigg| &\le C \sigma^L (1 + \inftynorm{M^\ast}^L)  \sqrt{\frac{ 2^L \log^{L+1}{(nm/\delta) }}{K}} \\
                \bigg| \frac1K \sum_{k\in[K]}  h_k(Y) - \frac1K \sum_{k\in[K]}  h_k(M^\ast) \bigg| &\le C \sigma^L (1 + \inftynorm{M^\ast}^L)  \sqrt{\frac{ 2^L \log^{L+1}{(nm/\delta) }}{K}}.
            \end{align*}
        \end{lem}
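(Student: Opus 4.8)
The plan is to observe that, because the $K$ paths are edge-disjoint, the summands are \emph{independent}, so that both displayed quantities are deviations of an empirical average of $K$ independent random variables from its mean. Indeed, each factor $Y_{x_s^k x_{s\pm1}^k}$ appearing in $g_k$ or $h_k$ is an observation on a distinct edge, and distinct paths use disjoint edges; hence $g_1(Y),\dots,g_K(Y)$ are mutually independent, each being a product of independent sub-Gaussian observations along its own path, and likewise $h_1(Y),\dots,h_K(Y)$. By the zero-mean property and independence of the noise, $\expt{g_k(Y)}=g_k(M^\ast)$ and $\expt{h_k(Y)}=h_k(M^\ast)$, as computed in the displays preceding the lemma. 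Thus it suffices to bound $\big|\frac1K\sum_k (g_k(Y)-\expt{g_k(Y)})\big|$ and the analogous quantity for $h_k$. The obstacle is that $g_k(Y)$ is a product of up to $L$ sub-Gaussian factors, hence neither sub-Gaussian nor even sub-exponential: it is only sub-Weibull of order $2/\ell_k$, with tails too heavy for a direct application of Hoeffding's or Bernstein's inequality.

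To tame these tails I would first truncate the noise. Since each $E_e$ is sub-Gaussian with parameter $\sigma$, a union bound over the at most $nm$ observed entries gives that, with probability at least $1-\delta$, every relevant noise term satisfies $|E_e|\lesssim\sigma\sqrt{\log(nm/\delta)}$. On this event every observation used by any path obeys $|Y_e|\le\tau$ with $\tau\lesssim\inftynorm{M^\ast}+\sigma\sqrt{\log(nm/\delta)}$, so that $g_k(Y)$ and $h_k(Y)$, each a product of at most $L$ such factors, become bounded random variables with $|g_k(Y)|,|h_k(Y)|\le\tau^L$. The bias incurred by replacing each product with its truncated version is controlled by a Cauchy–Schwarz bound against the sub-Gaussian tail probability $\prob{|E_e|>\tau}\le\delta/(nm)$, and is of smaller order than the fluctuation term below, hence absorbed into the final constant.

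Conditioned on the truncation event, each empirical average is now an average of $K$ independent and bounded random variables, so Hoeffding's inequality (or bounded Bernstein, using the variance proxy) yields, with probability at least $1-\delta$, a deviation of order $\tau^L\sqrt{\log(nm/\delta)/K}$ from the mean. It remains to simplify. Using $\inftynorm{M^\ast}\ge1$ (guaranteed by the assumption $\min_i|a_i^\ast|,\min_j|b_j^\ast|\ge1$) together with $(x+y)^L\le 2^L(x^L+y^L)$, we have $\tau^L\lesssim 2^L\big(\inftynorm{M^\ast}^L+\sigma^L\log^{L/2}(nm/\delta)\big)$; multiplying by $\sqrt{\log(nm/\delta)/K}$ and collecting the logarithmic factors (the dominant term being $\sigma^L\log^{(L+1)/2}(nm/\delta)$) reproduces the claimed bound $C\sigma^L(1+\inftynorm{M^\ast}^L)\sqrt{2^L\log^{L+1}(nm/\delta)/K}$ up to the absolute constant $C$. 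The argument for $h_k$ is identical, the only change being that its polynomial degree is $\ell_k-1\le L$, which is dominated by the same bound. A final union bound over the three failure events (numerator truncation, denominator truncation, and the Hoeffding step) gives the stated probability $1-3\delta$.

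The main difficulty is the heavy-tailedness of the products: the sub-Weibull order $2/L$ is precisely what forces the truncation radius to scale like $\sigma\sqrt{\log(nm/\delta)}$ and, once raised to the $L$-th power inside the range of $g_k$, produces the exponential-in-$L$ constant $2^L$ and the elevated logarithmic power $\log^{(L+1)/2}(nm/\delta)$. An alternative route bounds the Orlicz $\psi_{2/L}$-norm of each centered summand directly, via the product rule $\|\prod_i X_i\|_{\psi_{2/r}}\le\prod_i\|X_i\|_{\psi_2}$, and then applies a Bernstein-type inequality for sums of independent sub-Weibull variables; this avoids truncation but requires more delicate bookkeeping across the mixed Gaussian and Weibull tail regimes, so the truncation argument is the more transparent way to recover exactly the exponents appearing in the statement.
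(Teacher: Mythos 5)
Your proposal follows essentially the same route as the paper's proof: independence of the $g_k$'s and $h_k$'s from edge-disjointness, truncation of the noise at radius $\sigma\sqrt{2\log(nm/\delta)}$, Hoeffding's inequality applied to the resulting bounded averages, a separate bound on the truncation bias, and a union bound over the three failure events to reach $1-3\delta$. The only substantive difference is stylistic: you control the truncation bias by Cauchy--Schwarz against the sub-Gaussian tail probability, whereas the paper computes explicit Gaussian tail integrals.

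However, one step of your argument is genuinely false, and it is the step on which the whole reduction rests: the identity $\expt{h_k(Y)}=h_k(M^\ast)$. The factors of $h_k$ are \emph{squared} observations, and for a single edge $\expt{Y_e^2}=(M_e^\ast)^2+\var[E_e]$, so by independence $\expt{h_k(Y)}=\prod_e\big((M_e^\ast)^2+\var[E_e]\big)\neq h_k(M^\ast)$ whenever the noise is nondegenerate. Consequently, bounding $\big|\frac1K\sum_k h_k(Y)-\frac1K\sum_k\expt{h_k(Y)}\big|$ --- which is what your truncation-plus-Hoeffding argument actually accomplishes --- does not bound the quantity in the lemma: there remains a bias $\frac1K\sum_k\big(\expt{h_k(Y)}-h_k(M^\ast)\big)$, of order $\sigma^2$ per squared factor, that does not decay as $K$ grows. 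Concretely, with $K$ disjoint paths of length $3$ and i.i.d.\ $\normal(0,\sigma^2)$ noise, $\frac1K\sum_k h_k(Y)-\frac1K\sum_k h_k(M^\ast)=\frac1K\sum_k\big(2M_k^\ast E_k+E_k^2\big)\to\sigma^2$ as $K\to\infty$, while the right-hand side of the lemma tends to $0$; so the reduction to concentration around the mean cannot be repaired without recentering at $\expt{h_k(Y)}$. You are in good company here --- the paper's own proof has exactly the same conflation, proving concentration around $\frac1K\sum_k\expect{h_k(Y)}$ and then stating the conclusion with $h_k(M^\ast)$; the two coincide for $g_k$, whose factors are distinct first powers of independent observations, but not for $h_k$. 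A secondary, smaller issue: your final step absorbing the $\inftynorm{M^\ast}^L\sqrt{\log(nm/\delta)/K}$ Hoeffding term and the Cauchy--Schwarz bias term into $C\sigma^L(1+\inftynorm{M^\ast}^L)\sqrt{2^L\log^{L+1}(nm/\delta)/K}$ silently requires $\sigma\sqrt{\log(nm/\delta)}\gtrsim 1$, since the target carries a $\sigma^L$ prefactor on every term; for small $\sigma$ the claim that these pieces are ``absorbed into the constant'' fails, so that regime needs an explicit caveat.
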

        Applying Lemma~\ref{lem:concentration_num_denom}, we get
        \begin{align*}
            \frac1K \sum_{k\in[K]} \alpha^{\path(\xx^k, \ell_k)} \cdot \beta^{\path(\xx^k, \ell_k)} = \frac 1K \sum_{k\in[K]}  f_k(Y)  &=  \underbrace{\frac1K \sum_{k\in[K]} f_k(M^\ast)}_{\gamma_1} + \epsilon_1 \\
             \frac1K\sum_{k\in[K]} (\beta^{\path(\xx^k, \ell_k)})^2 = \frac1K \sum_{k\in[K]} h_k(Y) &= \underbrace{\frac1K\sum_{k\in[K]} h_k(M^\ast)}_{\gamma_2} + \epsilon_2.
        \end{align*}
        where $| \epsilon_1 |, | \epsilon_2|  \le C \sigma^L (1 + \inftynorm{M^\ast}^L)  \sqrt{\frac{ 2^L \log^{L+1}{(nm/\delta) }}{K}}$ with high probability. 
        Since each $M^\ast_{ij}$ is lower bounded by $1$, we derive that $\gamma_1\ge 1$ and $\gamma_2 \ge 1.$
        Hence, the estimation error of~\eqref{eq:multi_path_est_stable} can be bounded as
        \begin{align*}
            \big| \hat M_{ij}  - M_{ij}^\ast \big| &= \Bigg| \frac{\gamma_1 + \epsilon_1}{\gamma_2 + \epsilon_2} -  \frac{\gamma_1}{\gamma_2} \Bigg|  
            = \Bigg|  \frac{ \gamma_2 \epsilon_1 - \gamma_1 \epsilon_2 }{\gamma_2 (\gamma_2+\epsilon_2)} \Bigg| \lesssim  \sigma^L (1 + \inftynorm{M^\ast}^L)  \sqrt{\frac{ 2^L \log^{L+1}{(nm/\delta) }}{K}},
        \end{align*}
        when $K$ is sufficiently large.
    \end{proof}
    
    Next we state a minimax lower bound for estimating entry $(i,j)$. 
        \begin{thm}
            \label{thm:minimax_max_path}
            For a deterministic sampling matrix $\Omega$ and a fixed pair of $(i,j)$, there exists an absolute constant $C>0$ such that 
            \begin{align*}
                \inf_{\hat M} \; \sup_{M^\ast} \; \mathbb{E} \left[ ( \hat M_{ij} - M_{ij}^\ast)^2 \right] \ge  \frac{C \sigma^2}{K}. 
            \end{align*}
        \end{thm}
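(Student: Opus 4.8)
The plan is to prove the bound by Le Cam's two-point method, mirroring the additive argument in Theorem~\ref{thm:minimax_effective_resistance} but with the effective resistance replaced by the minimum cut. The starting observation is Menger's theorem (equivalently max-flow min-cut): the maximum number $K$ of edge-disjoint paths between $u_i$ and $v_j$ equals the size of the minimum edge cut separating them. First I would fix such a minimum cut and take the vertex bipartition $\cV = \mathcal{L}\sqcup\mathcal{R}$ it induces, with $u_i\in\mathcal{L}$ and $v_j\in\mathcal{R}$, chosen so that exactly $K$ observed entries correspond to edges of $\cG(\Omega)$ crossing the cut.

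Next I would construct two rank-1 instances that differ \emph{only} across this cut. Start from a baseline $a^\ast_k=b^\ast_l=\theta$ for a constant $\theta\ge 2$ (so the factors stay bounded away from zero), and define the perturbed instance by a gauge-like scaling localized to $\mathcal{L}$: multiply $a_k$ by $c=1+\eta$ for every row $u_k\in\mathcal{L}$ and divide $b_l$ by $c$ for every column $v_l\in\mathcal{L}$, leaving all factors on $\mathcal{R}$ unchanged. This keeps the matrix rank-1 and, crucially, leaves every entry inside $\mathcal{L}\times\mathcal{L}$ or $\mathcal{R}\times\mathcal{R}$ untouched; only the cross-cut entries change, each scaled by $c$ or $1/c$. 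In particular the target entry $(i,j)$ crosses the cut, so $M^{(c)}_{ij}=c\,\theta^2$ while $M^{(1)}_{ij}=\theta^2$, giving a separation $\Delta=\theta^2\eta$ in the estimand.

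The payoff of the min-cut choice appears when bounding the KL divergence between the two Gaussian observation laws: only the $K$ observed cut edges contribute, and on each the entries differ by at most $\theta^2\eta$ in magnitude (using $|1/c-1|=\eta/(1+\eta)\le\eta$), so
\begin{align*}
\KL\big(P_{M^{(c)}}\,\Vert\,P_{M^{(1)}}\big)
= \frac{1}{2\sigma^2}\sum_{(k,l)\in\Omega}\big(M^{(c)}_{kl}-M^{(1)}_{kl}\big)^2
\le \frac{K\theta^4\eta^2}{2\sigma^2}.
\end{align*}
Choosing $\eta=\sigma/(\theta^2\sqrt{K})$ makes this KL a constant while yielding $\Delta=\sigma/\sqrt{K}$, and Le Cam's two-point inequality then gives $\inf_{\hat M}\max_{M^\ast\in\{M^{(1)},M^{(c)}\}}\expt{(\hat M_{ij}-M^\ast_{ij})^2}\gtrsim \Delta^2=\sigma^2/K$, which is the claimed bound with an absolute constant.

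The main obstacle is the second step: unlike the additive model, a rank-1 matrix cannot be perturbed entrywise while remaining rank-1, so the perturbation must act through the latent factors. The key realization is that the multiplicative gauge freedom $a\mapsto ca,\ b\mapsto b/c$, applied \emph{asymmetrically} across a cut, changes exactly the cross-cut entries and nothing else, and that selecting the minimum cut (of size $K$ by Menger) minimizes the number of observed entries that move and hence the KL divergence. A secondary point needing care is keeping both instances admissible (factors bounded below by $1$), which is handled by taking $\theta$ strictly larger than $1$ and $\eta\le 1$; if $\sigma/(\theta^2\sqrt{K})>1$ one simply caps $\eta$, which only increases the separation and strengthens the bound.
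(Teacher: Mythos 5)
Your proposal is correct and follows the same skeleton as the paper's proof: both reduce the problem to a two-point Le Cam argument, both localize the perturbation to a minimum edge cut separating $u_i$ from $v_j$, and both invoke Menger's theorem to equate the cut size with $K$. Where you differ is the mechanism of the two-point construction. The paper takes $A^\ast = ab^\top$ with $a=b=\varepsilon\one$ and builds $B^\ast = yz^\top$ by flipping the \emph{sign} of the factors on one side of the cut, so that only cross-cut entries change (by $2\varepsilon^2$ each); you instead apply a multiplicative gauge scaling $a\mapsto ca$, $b\mapsto b/c$ on the $\mathcal{L}$-side only, which likewise perturbs exactly the cross-cut entries. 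Your version buys something concrete: with a baseline $\theta\ge 2$ and $\eta\le 1$, both of your instances keep all latent factors bounded below by $1$, so your lower bound holds over the same restricted model class for which the upper bound (Theorem~\ref{thm:multi_path_est_upper_bound}) is proved, whereas the paper's sign-flip instances have factors of magnitude $\varepsilon<1$ and thus sit outside that class. One small correction: your remark that capping $\eta$ at $1$ ``only increases the separation'' is backwards --- capping \emph{decreases} the separation $\theta^2\eta$ below the target $\sigma/\sqrt{K}$. The right fix in the regime $\sigma > \theta^2\sqrt{K}$ is to enlarge the baseline, e.g.\ $\theta^2 = \max\{4,\sigma/\sqrt{K}\}$ with $\eta=\min\{1,\sigma/(\theta^2\sqrt{K})\}$, which restores separation $\asymp \sigma/\sqrt{K}$ while keeping the KL divergence at most $1/2$; the paper's own proof has the analogous implicit restriction $\varepsilon<1$ and would need a similar rescaling in that regime.
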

        The dependence on $K$ (maximum number of disjoint paths) is the same in Theorem~\ref{thm:multi_path_est_upper_bound} and Theorem~\ref{thm:minimax_max_path}. When both $\sigma$ (noise level) and $L$ (max path length) are constants, the error upper bound matches the lower bound up to logarithmic factors. We prove Theorem~\ref{thm:minimax_max_path} by constructing hard instances using the minimum edge cut and relating it to the maximum number of paths $K$ via Menger's theorem.

        \begin{proof}[Proof of Theorem~\ref{thm:minimax_max_path}]
            In the following lemma, we construct two signal matrices such that the KL-divergence can be upper bounded by a multiple of $K$. 
            For a signal matrix $A^\ast$, let $f_{A^\ast}$ denote the data distribution: a product distribution of the densities of all observations, $f_{A^\ast}  = \otimes_{s,t} f_{A^\ast_{st}}$, where $f_{A^\ast_{st}}$ is a normal distribution $\normal(A^\ast_{st}, \sigma^2)$.  The proof of Lemma~\ref{lem:KL_divergence_max_path} is deferred toward the end.
            \begin{lem}
            \label{lem:KL_divergence_max_path}
                For some $0 < \varepsilon < 1$, there exist rank-$1$ signal matrices $A^\ast$ and $B^\ast$ such that $(A_{ij}^\ast - B_{ij}^\ast)^2 \le 4 \varepsilon^2$ and  $\KL(f_{A^\ast} \Vert f_{B^\ast}) \le \frac{2 K \varepsilon^2}{\sigma^2} $. 
            \end{lem}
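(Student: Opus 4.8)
The plan is to establish the lemma by a two-point (Le Cam) construction that confines the discrepancy between $A^\ast$ and $B^\ast$ to exactly the edges of a minimum cut separating $u_i$ from $v_j$. First I would invoke the edge version of Menger's theorem: the maximum number $K$ of edge-disjoint $u_i$--$v_j$ paths equals the size of a minimum edge cut. Fix such a minimum cut $\mathcal{C}$ with $|\mathcal{C}| = K$, and let it partition the vertex set into $\mathcal{L} \ni u_i$ and $\mathcal{R} \ni v_j$, so that the only observed entries $(s,t) \in \Omega$ joining the two sides are the $K$ cut edges. The key idea is to choose the two rank-$1$ instances so that they coincide on every observed entry other than these $K$ cut edges; then the KL divergence, which for the Gaussian model equals $\frac{1}{2\sigma^2}\sum_{(s,t)\in\Omega}(A^\ast_{st}-B^\ast_{st})^2$, accumulates only $K$ nonzero terms.

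The construction exploits the scaling (gauge) invariance of rank-$1$ matrices, namely $a_s b_t = (\lambda a_s)(b_t/\lambda)$. I would take $A^\ast = \mathbf{1}\mathbf{1}^\top$ (i.e.\ $a_s = b_t = 1$), and build $B^\ast = a b^\top$ by setting, with $\lambda = 1 + 2\varepsilon$,
\[
a_s = \begin{cases} 1 & u_s \in \mathcal{L} \\ \lambda & u_s \in \mathcal{R} \end{cases}, \qquad b_t = \begin{cases} 1 & v_t \in \mathcal{L} \\ 1/\lambda & v_t \in \mathcal{R}. \end{cases}
\]
Both matrices are rank-$1$ by construction, and $A^\ast$ serves as the fixed anchor against which the perturbed instance $B^\ast$ is compared.

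Next I would verify the two claimed bounds. For an observed edge with both endpoints in $\mathcal{L}$ we have $B^\ast_{st} = 1 = A^\ast_{st}$; for an edge with both endpoints in $\mathcal{R}$, gauge invariance gives $B^\ast_{st} = \lambda\cdot(1/\lambda) = 1 = A^\ast_{st}$. Hence $A^\ast$ and $B^\ast$ differ only on the $K$ cut edges, where the difference is either $1-1/\lambda = 2\varepsilon/(1+2\varepsilon)$ (for $u_s \in \mathcal{L},\, v_t\in\mathcal{R}$) or $\lambda - 1 = 2\varepsilon$ (for $u_s\in\mathcal{R},\, v_t\in\mathcal{L}$); in either case the squared difference is at most $4\varepsilon^2$. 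Summing over the $K$ cut edges yields $\sum_{(s,t)\in\Omega}(A^\ast_{st}-B^\ast_{st})^2 \le 4K\varepsilon^2$, and therefore $\KL(f_{A^\ast}\Vert f_{B^\ast}) \le 2K\varepsilon^2/\sigma^2$. Finally, since $u_i\in\mathcal{L}$ and $v_j\in\mathcal{R}$, the target entry behaves like a cross-cut position: $A^\ast_{ij}-B^\ast_{ij} = 1 - 1/\lambda = 2\varepsilon/(1+2\varepsilon)$, whose square is at most $4\varepsilon^2$, giving the first bound (and, for the subsequent Le Cam step, it is also $\gtrsim \varepsilon^2$).

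The main obstacle, and the crux of the argument, is guaranteeing that the perturbation is restricted to the cut edges so that the KL budget scales with $K$ rather than with the total number of observations. This is exactly what the gauge transformation buys us: scaling the $\mathcal{R}$-side row factors up and the $\mathcal{R}$-side column factors down by reciprocal amounts leaves every within-side product invariant and moves only the cross-cut products, which would be impossible for a generic non-multiplicative perturbation. I would close by noting that substituting $\varepsilon^2 \asymp \sigma^2/K$ makes the KL a constant while keeping the separation of order $\sigma^2/K$, which is precisely what drives Theorem~\ref{thm:minimax_max_path} through Le Cam's method.
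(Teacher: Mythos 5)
Your proof is correct and takes essentially the same route as the paper's: both arguments fix a minimum edge cut separating $u_i$ from $v_j$, invoke Menger's theorem to identify its size with $K$, and construct two rank-$1$ instances that agree on every observed entry except the $K$ cut edges, so that the Gaussian KL divergence $\frac{1}{2\sigma^2}\sum_{(s,t)\in\Omega}(A^\ast_{st}-B^\ast_{st})^2$ accumulates only $K$ terms of size $O(\varepsilon^2)$. The only difference is the choice of cut-preserving perturbation: the paper flips the sign of all latent factors on the $\mathcal{R}$ side (so within-side products $\varepsilon^2$ are unchanged and cross-cut entries differ by $2\varepsilon^2$), whereas you apply a multiplicative gauge $(\lambda, 1/\lambda)$ with $\lambda = 1+2\varepsilon$; both preserve within-side products, and yours has the mild cosmetic advantage that the separation scales as $\varepsilon^2$ rather than $\varepsilon^4$ and keeps all entries positive.
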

            Applying Lemma~\ref{lem:KL_divergence_max_path}, we can derive the minimax lower bound by Le Cam's method.
        \end{proof}

        \begin{proof}[Proof of Lemma~\ref{lem:KL_divergence_max_path}]
            For a fixed $\Omega$, we construct signals $A^\ast$ and $B^\ast$ as rank-1 matrices:
            \begin{align*}
                A^\ast &= a b^\top \\
                B^\ast &= y z ^\top.
            \end{align*}
            For $a$ and $b$, we set them as $a = b = \varepsilon \cdot \boldsymbol{1}$ where  $\boldsymbol{1}$ represents an all-one vector. For $y$ and $z$, we define them according to a minimum edge cut for $u_i$ and $v_j$. 
            Suppose this cut partitions the nodes into two sets $\mathcal{L}$ and $\mathcal{R}$. We assume $u_i \in \mathcal{L}$ and $v_j \in \mathcal{R}$. 
            For node $s \in \mathcal{L}$, let $y_s = \varepsilon$ if $s$ is a left node (or $z_s = \varepsilon$ otherwise.) For node $t \in \mathcal{R}$, let $y_t = -\varepsilon$ if $t$ is a left node (or $z_t = -\varepsilon$ otherwise.) 
            For entry $(s,t)$, it is easy to see that $A_{st}^\ast = B_{st}^\ast$ if and only if $s$ and $t$ belong to the same set. To compute the KL-divergence $\KL(f_{A^\ast} \Vert f_{B^\ast} ) = \frac{1}{2\sigma^2} \sum \Omega_{st}  (A_{st}^\ast - B_{st}^\ast)^2$, we deduce that 
            \begin{align*}
                \Omega_{st} (A_{st}^\ast - B_{st}^\ast)^2 = 
                \begin{cases}
                    4 \varepsilon  & \Omega_{st}=1 \text{ and edge $(s,t)$ is in the cut} \\
                    0 & \text{otherwise.}
                \end{cases}
            \end{align*}
            By Menger's theorem, the size of the minimum cut is equal to the maximum number of edge-disjoint paths connecting $u_i$ and $v_j$. 
            Therefore, the desired result is proved.
        \end{proof}

    \subsection{Examples}

    In this section we discuss the implications of our bound in specific settings, discussing when the the guarantees are minimax optimal or potentially loose. 

    \subsubsection{Extreme Sparsity}
        We present an example that both motivates the usage of paths and showcases the effectiveness of our estimator. 
        The sampling matrix $\Omega \in \{0,1\}^{n \times n}$ only has support on the first row, the first column, and along the diagonal, excluding the $(1,1)$ entry. Namely, we have
        \begin{equation}
            \label{eq:example_sparsity_rank1}
            \Omega = 
            \begin{pmatrix}
                 0 & 1 & 1 & \cdots & 1 \\
                 1 & 1 & 0 & \cdots & 0 \\
                 1 & 0 & 1 &  & 0 \\
                 \vdots &  \vdots& & \ddots & \vdots \\
                 1 & 0 & 0 & \cdots &  1 
            \end{pmatrix}.
        \end{equation}
        Suppose we want to estimate $M_{11}^\ast$. Estimation is not easy since $\Omega$ is quite sparse. 
        Furthermore, we do not even have any densely observed and sufficiently large submatrix (which corresponds to a biclique in the graph) to apply methods introduced in~\cite{agarwal2021causal, xi2023nonuniform}. Nevertheless, estimation is not impossible because in the corresponding bipartite graph, information about $(u_1, v_1)$ can be deduced by paths from $u_1$ to $v_1$ with the form $u_1 \to v_i \to u_i \to v_1$, using precisely the observations given. 
        From a pure linear algebraic perspective, estimating the $(1, 1)$ entry amounts to figuring out the product $a_1^\ast b_1^\ast$. Observations on the first row and column are noisy versions of $(a_1^\ast b_2^\ast, \dots, a_1^\ast b_n^\ast)$ and $(a_2^\ast b_1^\ast, \dots, a_n^\ast b_1^\ast)$, respectively. Hence, we can estimate $a_1^\ast b_1^\ast$ by getting rid of the factors $b_2^\ast, \dots, b_n^\ast$ and $a_2^\ast, \dots, a_n^\ast$ using the diagonal observations. In this example, using paths connecting $u_1$ and $v_1$ is a natural way to obtain accurate estimation. Applying Theorem~\ref{thm:multi_path_est_upper_bound} and~\ref{thm:minimax_max_path}, we obtain matching upper and lower bounds \cyedit{as the minimum cut is equal to the number of disjoint paths, given by $n-1$}. 
        
    \subsubsection{Dense Submatrix}

 \begin{figure}[t]
    \centering
\includegraphics[width=0.8\textwidth]{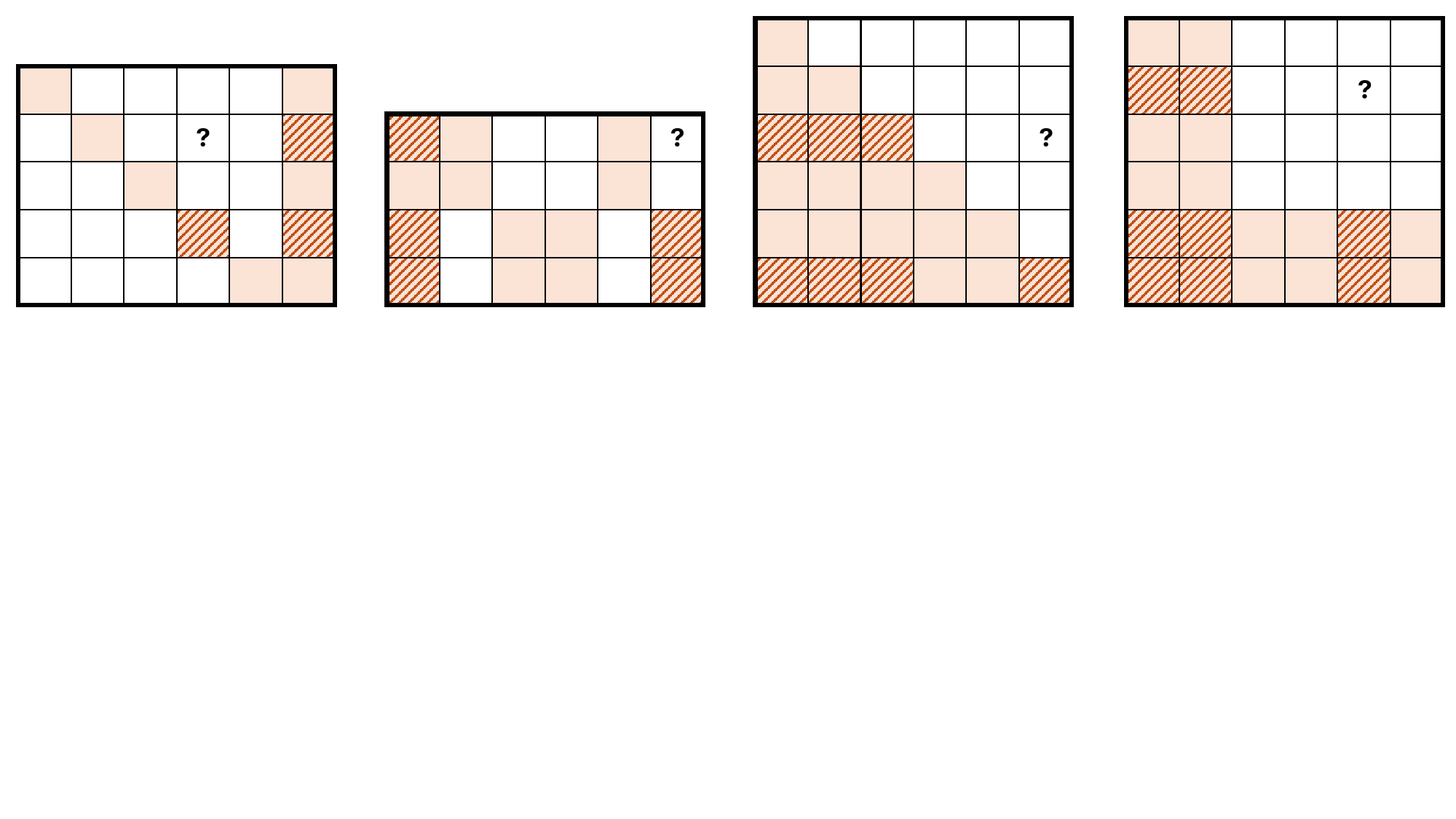}
    \caption{Examples of observation patterns that may naturally arise in recommendation systems panel data, and sequential decision making~\cite{agarwal2021causal}, where shaded regions indicate observed blocks of the matrix, and white regions indicate unobserved blocks. Suppose that we would like to estimate an entry in the block denoted with a ``{\bf ?}''. The striped regions show a dense observed submatrix containing the target entry, where all entries in the submatrix are observed except for the target entry itself.}
    \label{fig:dense_submatrix}
    \end{figure}
    
    The algorithm introduced in \cite{agarwal2021causal} relies on the presence of a dense submatrix containing the entry of interest. They argue that such structures naturally arises in applications such as recommendation systems panel data, and sequential decision making, where Figure \ref{fig:dense_submatrix} shows examples of such observation patterns. \cycomment{We could explain their motivation for these sampling patterns here if we wanted ... the examples correspond to recommendation systems, sequential decision making, staggered adoption, and panel data.} In all of these observation patterns, for any desired target entry $(i,j)$, there exists a submatrix defined by the row subset $\mathcal{I} \subseteq [n]$ and column subset $\mathcal{J} \subseteq [m]$ such that all the entries in the submatrix $\mathcal{I} \cup \{i\} \times \mathcal{J} \cup \{j\}$ are observed except for the target entry $(i,j)$ itself. As depicted in Figure~\ref{fig:bipartite_clique}, this dense submatrix in $\Omega$ results in a subgraph of $G(\Omega)$ where $i$ is fully connected to $\mathcal{J}$, $j$ is fully connected to $\mathcal{I}$, and there is a bipartite clique between $\mathcal{I}$ and $\mathcal{J}$. As a result, the maximum matching in the bipartite clique between $\mathcal{I}$ and $\mathcal{J}$ has size $K = \min(|\mathcal{I}|,|\mathcal{J}|)$, and thus we can use that matching to construct $K$ edge-disjoint paths of length 3 connecting $i$ and $j$. By applying Theorem~\ref{thm:multi_path_est_upper_bound}, this results in an high probability bound of 
    \[|\hat{M}_{ij} - M^*_{ij}| \lesssim \sigma^3 \|M^*\|_{\infty}^3 \sqrt{1/K},\]
    ignoring logarithmic factors. The minimum cut of the subgraph corresponding to the dense submatrix is also $K$, but this does not yet imply the bound is tight as the minimum cut could be larger on the entire graph. An upper bound on the minimum cut is the minimum between the degrees of $u_i$ and $v_j$ in the graph. In the majority of examples given in Figure \ref{fig:dense_submatrix}, the bound achieved by our algorithm is indeed minimax optimal.
    
    \begin{figure}[h]
    \centering
\includegraphics[width=0.45\textwidth]{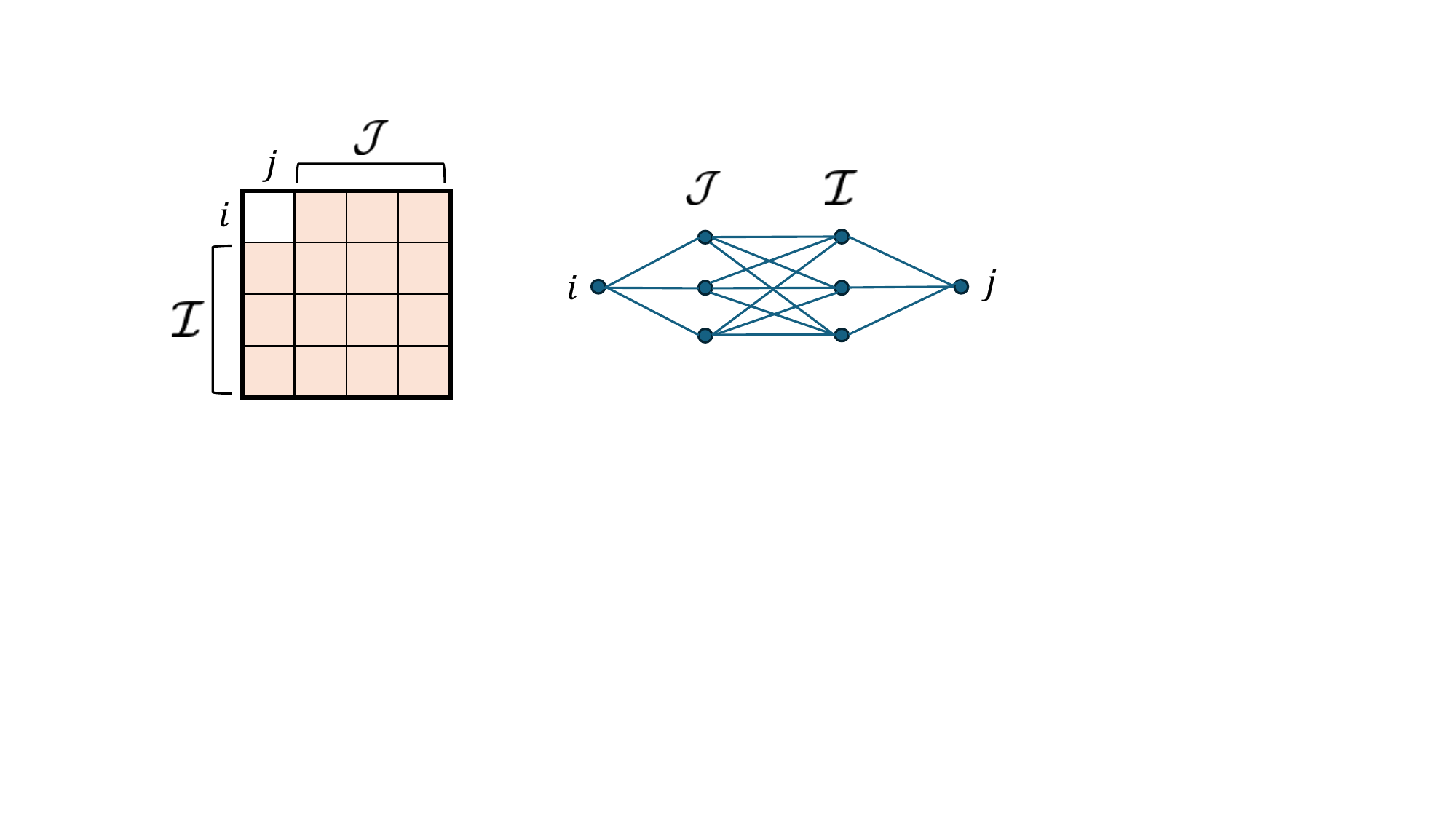}
    \caption{A dense submatrix $\mathcal{I} \cup \{i\} \times \mathcal{J} \cup \{j\}$ in $\Omega$ containing $i$ and $j$ can be used to construct $K$ disjoint paths of length 3 between $i$ and $j$ in the graph $\mathcal{G}(\Omega)$, where $K = \min(|\mathcal{I}|,|\mathcal{J}|)$.}
    \label{fig:bipartite_clique}
    \end{figure}

\subsubsection{Uniform Sampling}

Consider a uniform Bernoulli sampled observation pattern in which $(i,j) \in \Omega$ with probability $p$ independently for all $(i,j)$. For a fixed target entry $(i,j)$, the degree of $i$ and $j$ will concentrate around $pn$ for large $n$. Let $\mathcal{J}$ denote the neighbors of $i$ and let $\mathcal{I}$ denote the neighbors of $j$. Let $K$ denote the size of the maximum matching in the subgraph of $\mathcal{G}(\Omega)$ restricted to vertices $\mathcal{I}\cup\mathcal{J}$. Then the corresponding maximum matching can be used to construct $K$ disjoint paths of length 3, implying that our estimator achieves error bounded by 
\[|\hat{M}_{ij} - M^*_{ij}| \lesssim \sigma^3 \|M^*\|_{\infty}^3 \sqrt{1/K}.\]
A naive upper bound on the minimum cut is given by the degree of nodes $i$ and $j$, which is roughly $pn$. 
We can use the result that for $p = \omega(\log n/n)$, a random bipartite graph with vertex sets of size $n$ will have a perfect matching with probability tending to 1 wrt $n$, see Chapter 7 in \cite{Bollobas_2001}. Thus, our upper bound and lower bound will match if $p = \omega(\log pn/pn)$, i.e. $p = \omega(\log n/ \sqrt{n})$. While this allows for $p \to 0$, it is still fairly dense as the expected degree must grow as $pn = \omega(\sqrt{n} \log n)$. When the graph is sparser, then we would need a finer analysis that characterizes both the minimum cut of a random bipartite graph and the corresponding lengths of the shortest paths involved in the cut. As our upper bound degrades exponentially in the length of the path, we expect that our upper bound would be loose for sparse graphs where the path length may grow with $n$.

\section{Conclusion}

While entry specific matrix estimation is a daunting task in general, we show that for the special settings of additive matrices and rank-1 matrices we can obtain unbiased estimates for any entry $(i,j)$ as long as it is identifiable, i.e. the vertices $u_i$ and $v_j$ are connected in $\mathcal{G}(\Omega)$. We provide a network flow based perspective towards estimation under arbitrary observation patterns, where the connectivity properties of the graph governs the estimation accuracy through the effective resistances in the case of additive models and through the minimum cut in the case of rank-1 models. Our results provide fine grained bounds on how the estimation error of a specific entry in the matrix depends on an arbitrary unstructured observation pattern. While our results are tight for additive models, our upper and lower bounds in the rank-1 setting are not tight for observation patterns for which the connecting paths are growing in length with the size of the matrix, as is the case for sparse random graphs. Future directions include tightening the results for the rank-1 setting and extending our results to general low rank models.

\bibliographystyle{plain}
\bibliography{ref-merged}

\begin{thebibliography}{10}

\bibitem{Abbe2020entrywise}
Emmanuel Abbe, Jianqing Fan, Kaizheng Wang, and Yiqiao Zhong.
\newblock Entrywise eigenvector analysis of random matrices with low expected
  rank.
\newblock {\em Annals of Statistics}, 48 3:1452--1474, 2020.

\bibitem{agarwal2021causal}
Anish Agarwal, Munther Dahleh, Devavrat Shah, and Dennis Shen.
\newblock Causal matrix completion.
\newblock {\em arXiv preprint arXiv:2109.15154}, 2021.

\bibitem{imbens2021staggered}
Dmitry Arkhangelsky, Guido~W. Imbens, Lihua Lei, and Xiaoman Luo.
\newblock Double-robust two-way-fixed-effects regression for panel data.
\newblock {\em arXiv preprint arXiv:2107.13737v1}, 2021.

\bibitem{athey2021completion}
Susan Athey, Mohsen Bayati, Nikolay Doudchenko, Guido Imbens, and Khashayar
  Khosravi.
\newblock Matrix completion methods for causal panel data models.
\newblock {\em Journal of the American Statistical Association},
  116(536):1716--1730, 2021.

\bibitem{athey2022design}
Susan Athey and Guido~W Imbens.
\newblock Design-based analysis in difference-in-differences settings with
  staggered adoption.
\newblock {\em Journal of Econometrics}, 226(1):62--79, 2022.

\bibitem{bandeira2017tightness}
Afonso~S Bandeira, Nicolas Boumal, and Amit Singer.
\newblock Tightness of the maximum likelihood semidefinite relaxation for
  angular synchronization.
\newblock {\em Mathematical Programming}, 163:145--167, 2017.

\bibitem{berend2014consistency}
Daniel Berend and Aryeh Kontorovich.
\newblock Consistency of weighted majority votes.
\newblock {\em Advances in Neural Information Processing Systems}, 27, 2014.

\bibitem{biswas2006sensor}
Pratik Biswas, Tzu-Chen Lian, Ta-Chung Wang, and Yinyu Ye.
\newblock Semidefinite programming based algorithms for sensor network
  localization.
\newblock {\em ACM Trans. Sen. Netw.}, 2(2):188–220, may 2006.

\bibitem{Bollobas_2001}
Béla Bollobás.
\newblock {\em Connectivity and Matchings}, page 160–200.
\newblock Cambridge Studies in Advanced Mathematics. Cambridge University
  Press, 2001.

\bibitem{bradley1952rank}
Ralph~Allan Bradley and Milton~E Terry.
\newblock Rank analysis of incomplete block designs: I. the method of paired
  comparisons.
\newblock {\em Biometrika}, 39(3/4):324--345, 1952.

\bibitem{cai2015framework}
T~Tony Cai and Mark~G Low.
\newblock A framework for estimation of convex functions.
\newblock {\em Statistica Sinica}, pages 423--456, 2015.

\bibitem{callaway2021difference}
Brantly Callaway and Pedro~HC Sant’Anna.
\newblock Difference-in-differences with multiple time periods.
\newblock {\em J of Econometrics}, 225(2), 2021.

\bibitem{candes2012exact}
Emmanuel Candes and Benjamin Recht.
\newblock Exact matrix completion via convex optimization.
\newblock {\em Communications of the ACM}, 55(6):111--119, 2012.

\bibitem{candes2010matrix}
Emmanuel~J Candes and Yaniv Plan.
\newblock Matrix completion with noise.
\newblock {\em Proceedings of the IEEE}, 98(6):925--936, 2010.

\bibitem{chandra1989electrical}
Ashok~K Chandra, Prabhakar Raghavan, Walter~L Ruzzo, and Roman Smolensky.
\newblock The electrical resistance of a graph captures its commute and cover
  times.
\newblock In {\em Proceedings of the twenty-first annual ACM symposium on
  Theory of computing}, pages 574--586, 1989.

\bibitem{chatterjee2020deterministic}
Sourav Chatterjee.
\newblock A deterministic theory of low rank matrix completion.
\newblock {\em IEEE Transactions on Information Theory}, 66(12):8046--8055,
  2020.

\bibitem{chen2018harnessing}
Yudong Chen and Yuejie Chi.
\newblock Harnessing structures in big data via guaranteed low-rank matrix
  estimation: Recent theory and fast algorithms via convex and nonconvex
  optimization.
\newblock {\em IEEE Signal Processing Magazine}, 35(4):14--31, 2018.

\bibitem{davenport2016overview}
Mark~A Davenport and Justin Romberg.
\newblock An overview of low-rank matrix recovery from incomplete observations.
\newblock {\em IEEE Journal of Selected Topics in Signal Processing},
  10(4):608--622, 2016.

\bibitem{dawid1979maximum}
Alexander~Philip Dawid and Allan~M Skene.
\newblock Maximum likelihood estimation of observer error-rates using the em
  algorithm.
\newblock {\em Journal of the Royal Statistical Society: Series C (Applied
  Statistics)}, 28(1):20--28, 1979.

\bibitem{de2023two}
Cl{\'e}ment De~Chaisemartin and Xavier d’Haultfoeuille.
\newblock Two-way fixed effects and differences-in-differences with
  heterogeneous treatment effects: A survey.
\newblock {\em The Econometrics Journal}, 26(3):C1--C30, 2023.

\bibitem{foucart2021weighted}
Simon Foucart, Deanna Needell, Reese Pathak, Yaniv Plan, and Mary Wootters.
\newblock Weighted matrix completion from non-random, non-uniform sampling
  patterns.
\newblock {\em IEEE Transactions on Information Theory}, 67(2):1264--1290, Feb
  2021.

\bibitem{gentle2007matrix}
James~E Gentle.
\newblock Matrix algebra.
\newblock {\em Springer texts in statistics, Springer, New York, NY, doi},
  10:978--0, 2007.

\bibitem{ghosh2008minimizing}
Arpita Ghosh, Stephen Boyd, and Amin Saberi.
\newblock Minimizing effective resistance of a graph.
\newblock {\em SIAM Review}, 50(1):37--66, 2008.

\bibitem{goodman2021difference}
Andrew Goodman-Bacon.
\newblock Difference-in-differences with variation in treatment timing.
\newblock {\em Journal of Econometrics}, 225(2):254--277, 2021.

\bibitem{hajek1970characterization}
Jaroslav H{\'a}jek.
\newblock A characterization of limiting distributions of regular estimates.
\newblock {\em Zeitschrift f{\"u}r Wahrscheinlichkeitstheorie und verwandte
  Gebiete}, 14(4):323--330, 1970.

\bibitem{hajek1972local}
Jaroslav H{\'a}jek.
\newblock Local asymptotic minimax and admissibility in estimation.
\newblock In {\em Proceedings of the sixth Berkeley symposium on mathematical
  statistics and probability}, volume~1, pages 175--194, 1972.

\bibitem{hardt2014computational}
Moritz Hardt, Raghu Meka, Prasad Raghavendra, and Benjamin Weitz.
\newblock Computational limits for matrix completion.
\newblock In {\em Conference on Learning Theory}, pages 703--725. PMLR, 2014.

\bibitem{hazan2024partial}
Elad Hazan, Adam~Tauman Kalai, Varun Kanade, Clara Mohri, and Y~Jennifer Sun.
\newblock Partial matrix completion.
\newblock {\em Advances in Neural Information Processing Systems}, 36, 2024.

\bibitem{heiman2014deterministic}
Eyal Heiman, Gideon Schechtman, and Adi Shraibman.
\newblock Deterministic algorithms for matrix completion.
\newblock {\em Random Struct. Algorithms}, 45(2):306--317, sep 2014.

\bibitem{imai2021fixed}
Kosuke Imai and In~Song Kim.
\newblock On the use of two-way fixed effects regression models for causal
  inference with panel data.
\newblock {\em Political Analysis}, 29(3):405--415, 2021.

\bibitem{jain2013alternating}
Prateek Jain, Praneeth Netrapalli, and Sujay Sanghavi.
\newblock Low-rank matrix completion using alternating minimization.
\newblock In {\em Proceedings of the Forty-Fifth Annual ACM Symposium on Theory
  of Computing}, STOC '13, page 665–674, New York, NY, USA, 2013. Association
  for Computing Machinery.

\bibitem{jochmans2019fixed}
Koen Jochmans and Martin Weidner.
\newblock Fixed-effect regressions on network data.
\newblock {\em Econometrica}, 87(5):1543--1560, 2019.

\bibitem{kanade2022partial}
Varun Kanade, Elad Hazan, and Adam~Tauman Kalai.
\newblock Partial matrix completion.
\newblock {\em arXiv preprint arXiv:2208.12063}, 2022.

\bibitem{keshavan2009matrix}
Raghunandan Keshavan, Andrea Montanari, and Sewoong Oh.
\newblock Matrix completion from noisy entries.
\newblock {\em Advances in neural information processing systems}, 22, 2009.

\bibitem{keshavan2010matrix}
Raghunandan~H Keshavan, Andrea Montanari, and Sewoong Oh.
\newblock Matrix completion from a few entries.
\newblock {\em IEEE transactions on information theory}, 56(6):2980--2998,
  2010.

\bibitem{klein1993er}
D.~J. Klein and M.~Randi\'c.
\newblock Resistance distance.
\newblock {\em Journal of Mathematical Chemistry}, 12(1):81--95, 1993.

\bibitem{kleinberg2006algorithm}
Jon Kleinberg and Eva Tardos.
\newblock {\em Algorithm design}.
\newblock Pearson Education India, 2006.

\bibitem{koren2009matrix}
Yehuda Koren, Robert Bell, and Chris Volinsky.
\newblock Matrix factorization techniques for recommender systems.
\newblock {\em Computer}, 42(8):30--37, 2009.

\bibitem{le1972limits}
Lucien Le~Cam et~al.
\newblock Limits of experiments.
\newblock In {\em Proceedings of the Sixth Berkeley Symposium on Mathematical
  Statistics and Probability}, volume~1, pages 245--261. University of
  California Press, 1972.

\bibitem{liu2010systemid}
Zhang Liu and Lieven Vandenberghe.
\newblock Interior-point method for nuclear norm approximation with application
  to system identification.
\newblock {\em SIAM Journal on Matrix Analysis and Applications},
  31(3):1235--1256, 2010.

\bibitem{ma2020adversarial}
Qianqian Ma and Alex Olshevsky.
\newblock Adversarial crowdsourcing through robust rank-one matrix completion.
\newblock {\em Advances in Neural Information Processing Systems},
  33:21841--21852, 2020.

\bibitem{ma2020gradient}
Yao Ma, Alex Olshevsky, Csaba Szepesvari, and Venkatesh Saligrama.
\newblock Gradient descent for sparse rank-one matrix completion for
  crowd-sourced aggregation of sparsely interacting workers.
\newblock {\em Journal of Machine Learning Research}, 21(133):1--36, 2020.

\bibitem{rennie2005cf}
Jasson D.~M. Rennie and Nathan Srebro.
\newblock Fast maximum margin matrix factorization for collaborative
  prediction.
\newblock In {\em Proceedings of the 22nd International Conference on Machine
  Learning}, ICML '05, page 713–719, New York, NY, USA, 2005. Association for
  Computing Machinery.

\bibitem{ruttenauer2024can}
Tobias R{\"u}ttenauer and Ozan Aksoy.
\newblock When can we use two-way fixed-effects (twfe): A comparison of twfe
  and novel dynamic difference-in-differences estimators.
\newblock {\em arXiv preprint arXiv:2402.09928}, 2024.

\bibitem{shao2003mathematical}
Jun Shao.
\newblock {\em Mathematical statistics}.
\newblock Springer Science \& Business Media, 2003.

\bibitem{shkolnisky2012viewing}
Yoel Shkolnisky and Amit Singer.
\newblock Viewing direction estimation in cryo-em using synchronization.
\newblock {\em SIAM journal on imaging sciences}, 5(3):1088--1110, 2012.

\bibitem{sun2021estimating}
Liyang Sun and Sarah Abraham.
\newblock Estimating dynamic treatment effects in event studies with
  heterogeneous treatment effects.
\newblock {\em Journal of Econometrics}, 225(2):175--199, 2021.

\bibitem{tetali1991random}
Prasad Tetali.
\newblock Random walks and the effective resistance of networks.
\newblock {\em Journal of Theoretical Probability}, 4:101--109, 1991.

\bibitem{tron2009distributed}
Roberto Tron and Ren{\'e} Vidal.
\newblock Distributed image-based 3-d localization of camera sensor networks.
\newblock In {\em Proceedings of the 48h IEEE Conference on Decision and
  Control (CDC) held jointly with 2009 28th Chinese Control Conference}, pages
  901--908. IEEE, 2009.

\bibitem{udell2019lowrank}
Madeleine Udell and Alex Townsend.
\newblock Why are big data matrices approximately low rank?
\newblock {\em SIAM Journal on Mathematics of Data Science}, 1(1):144--160,
  2019.

\bibitem{vos2016methods}
Vaya Sapobi~Samui Vos.
\newblock Methods for determining the effective resistance.
\newblock Master's thesis, Universiteit Leiden, 2016.

\bibitem{wooldridge2021two}
Jeffrey~M Wooldridge.
\newblock Two-way fixed effects, the two-way mundlak regression, and
  difference-in-differences estimators.
\newblock {\em Available at SSRN 3906345}, 2021.

\bibitem{xi2023nonuniform}
Xumei Xi, Christina~Lee Yu, and Yudong Chen.
\newblock Entry-specific bounds for low-rank matrix completion under highly
  non-uniform sampling.
\newblock In {\em {IEEE} International Symposium on Information Theory, {ISIT}
  2023, Taipei, Taiwan, June 25-30, 2023}, pages 2625--2630. {IEEE}, 2023.

\end{thebibliography}

\appendices

\section{Effective Resistance}
    \label{appen:effective_resistance}
    Consider a simple and connected graph $\cG = (\cV, \cE)$ with $|\cV|=n_v$ nodes and $|\cE|=n_e$ edges. We can construct an electrical network from $\cG$ by assigning unit resistance to each edge.
    The \emph{effective resistance}~\cite{klein1993er} is a quantity defined between two nodes $s$ and $t$, denoted as $R(s,t)$, that measures the potential difference between $s$ and $t$ when we apply one unit current between them. 
    It is a useful concept for gauging how connected $s$ and $t$ are. When we add (remove) paths between $s$ and $t$, the effective resistance $R(s,t)$ decreases (increases). In some sense, the effective resistance captures how similar or how close the nodes are, which is why it is also referred to as the \emph{resistance distance}. 
    It turns out that the effective resistance is closely connected to a bunch of concepts: the graph Laplacian, spanning trees, and random walks. We refer interested readers to a survey~\cite{vos2016methods}. To give an example of such connections, we can consider a simple random walk on $\cG$ with equal transition probability.\footnote{If we are at node $u$, we transition to a neighboring node with probability $1 / \deg(u)$, where $\deg(u)$ is the degree of $u$.} In this scenario, the effective resistance $R(s,t)$ is proportional to the commute time between $s$ and $t$, namely the expected time for a random walk starting in $s$, to reach $t$ and return to $s$. In fact, the commute time between $s$ and $t$ is $2 n_e R(s,t)$, according to~\cite{chandra1989electrical}. 
    This interpretation again emphasizes that the effective resistance measures how connected the nodes are: when there are more paths between $s$ and $t$, the smaller the effective resistance and it takes less time to travel between $s$ and $t$ for a random walk. 

    We can compute the effective resistance $R(s,t)$ using the Laplacian matrix $L \in \R^{n_v \times n_v}$, defined as
    \begin{align*}
        L_{uv} = 
        \begin{cases}
            \deg(u) & u=v \\
            -1 & u \neq v \text{ and $u$ is adjacent to $v$} \\
            0 & \text{otherwise.}
        \end{cases}
    \end{align*}
    Equivalently, we can express $L$ as $L = D - A$, where $D$ is the degree matrix and $A$ is the adjacency matrix.
    The Laplacian is symmetric and not invertible. By definition, the sum of each row in $L$ is zero, namely $L \one = \zero$. Hence, $\one$ is an eigenvector of $L$ corresponding to eigenvalue zero. Since we assume $\cG$ is connected, we can show that all other eigenvalues of $L$ are non-zero. 
    Let the Moore--Penrose pseudo-inverse of $L$ be denoted as $L^+.$
    The pseudo-inverse can be computed by leveraging singular value decomposition (SVD). Suppose the SVD of $L$ has the form $L = U \Sigma V^\top$. The pseudo-inverse is $L^+ = V \Sigma^+ U^\top$, where $\Sigma^+$, the pseudo-inverse of $\Sigma$ can be obtained by substituting the non-zero diagonal values with their reciprocals. 
    The Laplacian is related to the oriented incidence matrix (a.k.a.~signed edge-node adjacency matrix) $B \in \R^{n_e \times n_v}$, defined as
    \begin{align*}
        B_{(u,v), w} = 
        \begin{cases}
            1 & w=u \\
            -1 & w=v \\
            0 & \text{otherwise.}
        \end{cases}
    \end{align*}
    By definition, we have $L = B^\top B$.  
    With unit resistance $r_{e}=1$ for all $e\in\cE$, the voltage vector $\vv \in \R^{n_v}$ and the current $\ii \in \R^{n_e}$ satisfy $i_{(x,y)} = \vv_x - \vv_y$, for $(x,y) \in \cE$ 
    by Ohm's Law. 
    Using the incidence matrix $B$, we have $\ii = B \vv$. 
    By flow conservation, we have $\vv = L^{+} (e_{s} - e_{t})$, where $e_s$ and $e_{t}$ denote the standard basis vectors. 
    Therefore, the current and the effective resistance satisfy
    \begin{align}
        \ii &= BL^+ (e_s - e_t) \label{eq:electrical_flow_defn} \\
        R(s,t) &= (e_s - e_t)^\top L^+ (e_s - e_t). \label{eq:effective_resistance_defn}
    \end{align}
    
    The \emph{energy} of a flow $f \in \R^{n_e}$ is defined as $\sum_{(u,v) \in \cE} f^2_{(u,v)}.$ Thomson's principle shows that among all unit flows $f$ from $s$ to $t$, the current is the unique one that minimizes the energy of the flow. 
    \begin{thm}[Thomson's Principle]
    \label{thm:thomson}
          For an arbitrary unit flow $f$, we have $\sum_{(u,v) \in \cE} f^2_{u,v} \ge R(s,t)$, with equality if and only if $f = \ii$.
    \end{thm}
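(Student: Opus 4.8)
The plan is to prove Thomson's principle through an orthogonal decomposition of the space of flows, which is the standard Hodge-type argument tailored to electrical networks. First I would encode the unit-flow constraint algebraically: using the oriented incidence matrix $B$, flow conservation for a unit $s$--$t$ flow is exactly $B^\top f = e_s - e_t$, since $(B^\top f)_w$ equals the net flow out of vertex $w$, which must be $+1$ at $s$, $-1$ at $t$, and $0$ elsewhere. Under this convention the energy is simply $\sum_{(u,v)\in\cE} f^2_{(u,v)} = \twonorm{f}^2$. The next step is to verify that the current $\ii = B L^+(e_s - e_t) = B\vv$ is itself a unit flow: indeed $B^\top \ii = L\vv = L L^+(e_s - e_t)$, and since $\cG$ is connected we have $\ker L = \Span\{\one\}$ and $\mathrm{range}(L) = \one^\perp$; as $\one^\top(e_s - e_t) = 0$, the vector $e_s - e_t$ lies in $\mathrm{range}(L)$, so $L L^+(e_s - e_t) = e_s - e_t$ and hence $B^\top \ii = e_s - e_t$.

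Given any unit flow $f$, I would write $f = \ii + g$ with $g := f - \ii$. Since both $f$ and $\ii$ are unit flows, $B^\top g = (e_s - e_t) - (e_s - e_t) = \zero$; that is, $g$ is a circulation (divergence-free). The crux of the argument is that $\ii$ and $g$ are orthogonal: because $\ii = B\vv$ is a gradient (potential) flow, $\ip{\ii, g} = \vv^\top B^\top g = \vv^\top \zero = 0$. The Pythagorean identity then yields $\twonorm{f}^2 = \twonorm{\ii}^2 + \twonorm{g}^2 \ge \twonorm{\ii}^2$, with equality if and only if $g = \zero$, i.e.\ $f = \ii$. This simultaneously delivers the inequality and the uniqueness of the minimizer.

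It remains to identify $\twonorm{\ii}^2$ with the effective resistance. Using the symmetry of $L^+$, the identity $L = B^\top B$, and the pseudo-inverse relation $L^+ L L^+ = L^+$, I obtain $\twonorm{\ii}^2 = (e_s - e_t)^\top L^+ (B^\top B) L^+ (e_s - e_t) = (e_s - e_t)^\top L^+ L L^+ (e_s - e_t) = (e_s - e_t)^\top L^+ (e_s - e_t) = R(s,t)$, which matches \eqref{eq:effective_resistance_defn}. Combining the pieces gives $\twonorm{f}^2 \ge R(s,t)$ with equality iff $f = \ii$, as claimed. The only genuinely delicate point is the verification that $\ii$ is a legitimate unit flow, which hinges on $e_s - e_t$ being orthogonal to the kernel $\one$ of the Laplacian so that $L L^+$ acts as the identity on it; once that is in place, everything reduces to the orthogonality $\ip{\ii, g} = 0$ between a gradient flow and a circulation, so I expect no serious obstacle beyond bookkeeping the pseudo-inverse identities.
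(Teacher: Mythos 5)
Your proof is correct and complete. Note, however, that the paper never actually proves Theorem~\ref{thm:thomson}: Thomson's principle is stated in Appendix~\ref{appen:effective_resistance} as a known fact from the electrical-network literature and used as a black box. The closest analogue in the paper is Appendix~\ref{appen:linear_regression}, where the same energy-minimization program is analyzed by Lagrangian duality: the dual variable turns out to be the voltage vector $\vv = L^+(e_s - e_t)$, and strong duality identifies the optimal value with the effective resistance (up to the factor $\tfrac12$ in that program's normalization). Your argument is the primal, Hodge-type counterpart of that computation: you decompose an arbitrary unit flow as $f = \ii + g$ with $\ii = B\vv$ a gradient flow and $g$ a circulation, observe $\ip{\ii, g} = \vv^\top B^\top g = 0$, and conclude by the Pythagorean identity, with $\twonorm{\ii}^2 = R(s,t)$ following from $L^+ L L^+ = L^+$. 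Each route buys something: yours is self-contained and elementary, and it delivers the uniqueness claim (equality iff $f = \ii$) for free from $g = \zero$, whereas the duality route would need strict convexity of the energy to get uniqueness; conversely, the duality view is what the paper exploits downstream, since the identification of voltages as dual variables is precisely the engine behind the hard-instance constructions in Theorem~\ref{thm:minimax_effective_resistance} and Appendix~\ref{appen:linear_regression}. One point you rightly flagged as delicate, and which deserves to stay explicit in any write-up, is that $\one^\top(e_s - e_t) = 0$ together with connectivity of $\cG$ guarantees $LL^+(e_s - e_t) = e_s - e_t$, i.e.\ that the electrical current is a legitimate unit flow; this is the only place the structure of the graph enters.
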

    
\section{Linear Regression: Dual Variables \& Minimax Optimality}
\label{appen:linear_regression}
    We have mentioned that the technique used to prove Theorem~\ref{thm:minimax_effective_resistance} can be further extended to a general problem of linear regression. 
    Specifically, we will show that when proving minimax optimality, the hard instances can be constructed using the dual variables. 
    In fact, the voltage vector is the dual variable of the energy-minimizing (variance-minimizing) program. We first show this and then discuss linear regression.
    Consider the following program
        \begin{equation}
        \label{eq:flow_min_var_optimization_copy}
            \begin{aligned}
                \min_{f} \; & \frac12 \sum_{k, \ell } f_{k, n+\ell}^2 \\
            \text{s.t. } & \sum_{\ell} f_{i, n+\ell} = 1 \\
            & -\sum_{k} f_{k, n+j} = -1 \\
                        & \sum_{\ell} f_{i', n+\ell} = 0, \quad \forall i' \neq i \\
                        & - \sum_{k} f_{k,n+j'} = 0, \quad \forall j' 
                        \neq j.
            \end{aligned}
        \end{equation}
        The equality constraints in~\eqref{eq:flow_min_var_optimization_copy} can be written as
        \begin{align*}
            B^\top f = e_{i} - e_{n+j},
        \end{align*}
        where $B \in \R^{n_e \times n_v}$ is the incidence matrix of the graph. 
        Consider the primal problem
        \begin{align*}
            \min_{f} \; & \frac12 \sum_{k, \ell } f_{k, n+\ell}^2 \\
            \text{s.t. } & -B^\top f = b = e_{n+j} - e_i.
        \end{align*}
        Define the dual variables as $\lambda \in \R^{n_v}$. The Lagrangian is 
        \begin{align*}
            L(f, \lambda ) = \frac12 f^\top f + \lambda^\top (-B^\top f - b).
        \end{align*}
        The dual function is
        \begin{align*}
            g(\lambda) &= \inf_{f} L(f,\lambda) \\
                        &= L(B\lambda, \lambda) \\
                        &= -\frac12 \lambda^\top B^\top B \lambda - \lambda^\top b.
        \end{align*}
        The dual problem is
        \begin{align*}
            \max_{\lambda} \; -\frac12 \lambda^\top B^\top B \lambda - \lambda^\top b.
        \end{align*}
        One solution is $\lambda^\ast = - (B^\top B)^{+} b = L^+(e_i - e_{n+j})$, which is the voltage vector. We actually have strong duality here since the dual maximum is equal to $ \frac12 b^\top L^+ b$ (one-half of the effective resistance), which is equal to the primal minimum.

        Now consider the problem of linear regression. The unknown parameter is $\beta \in \R^p$. The observations are given by $y_i = x_i^\top \beta + \varepsilon_i$, for all $i\in[n]$. The covariates $x_i \in \R^p, i\in[n]$ are known and the noise terms $\varepsilon_i$ are i.i.d.~$\normal(0,\sigma^2)$. Assume $X \in \R^{n \times p}$ is full rank.
        Suppose we want to estimate $\tau = w^\top \beta$, a linear function of $\beta$. 
        Define a linear combination of the observations as $\hat \tau = f^\top y$, where $f\in\R^n$ is a weight vector to be determined. If $f$ satisfies $X^\top f = w $, then $\hat \tau$ is an unbiased estimator of $\tau$. In addition, if we want to minimize the variance of $\hat \tau$, we obtain the following optimization problem
        \begin{align*}
            \min_f \; & \frac12 f^\top f \\
            \text{s.t. } & X^\top f = w .
        \end{align*}
        The optimal solution is $f^\ast = X (X^ \top X)^{-1} w$ and the corresponding $\hat \tau$ is $w^\top (X^\top X)^{-1} X^\top y$, which aligns with the least squares estimator. 
        Define the dual variables as $\lambda \in \R^p$. The Lagrangian is
        \begin{align*}
            L(f,\lambda) = \frac12 f^\top f + \lambda^\top (w - X^\top f ).
        \end{align*}
        The dual function is
        \begin{align*}
            g(\lambda) &= \inf_f L(f, \lambda) \\
            &= L(X \lambda, \lambda) \\
            &= - \frac12 \lambda^\top X^\top X\lambda + \lambda^\top w.
        \end{align*}
        The dual problem is
        \begin{align*}
            \max_\lambda \; -\frac12 \lambda^\top X^\top X\lambda + \lambda^\top w.
        \end{align*}
        The above problem has a unique solution $\lambda^\ast = (X^\top X)^{-1} w$ with $g(\lambda^\ast) = \frac12 w^\top (X^\top X)^{-1} w$.
    
        Next, we show the least squares estimator is minimax optimal.
    
        \begin{lem}
        \label{lem:KL_divergence_dual_LR}
            For some $0 < \epsilon < 1$, there exist $\beta^0$ and $\beta^1$ such that $\vert w^\top ( \beta^0 - \beta^1 ) \vert^2  = 4 \epsilon^2 g^2(\lambda^\ast)$and  $\KL(Y^0 \Vert Y^1)  = \frac{\epsilon^2 g(\lambda^\ast)}{\sigma^2} $, where $Y^0 = X \beta^0 + \varepsilon$ and $Y^1 = X \beta^1 + \varepsilon$. 
        \end{lem}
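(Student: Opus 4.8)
The plan is to mirror the construction from the proof of Theorem~\ref{thm:minimax_effective_resistance}, where the hard instance was obtained by perturbing the signal along the voltage vector. The voltage vector was identified above as the dual variable of the variance-minimizing program, so the natural analog here is to perturb the regression parameter in the direction of the dual variable $\lambda^\ast = (X^\top X)^{-1} w$. Concretely, I would take $\beta^0$ to be an arbitrary parameter (for instance $\beta^0 = \zero$) and set $\beta^1 = \beta^0 - \epsilon \lambda^\ast$, so that $\beta^0 - \beta^1 = \epsilon \lambda^\ast$. Since linear regression imposes no structural constraint on the unknown parameter, both $\beta^0$ and $\beta^1$ are admissible instances for any $\epsilon \in (0,1)$.

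The first claim then reduces to a one-line computation. Recalling the closed form $g(\lambda^\ast) = \frac{1}{2} w^\top (X^\top X)^{-1} w$ derived just above the lemma, we have $w^\top \lambda^\ast = w^\top (X^\top X)^{-1} w = 2 g(\lambda^\ast)$. Hence $w^\top(\beta^0 - \beta^1) = \epsilon\, w^\top \lambda^\ast = 2\epsilon\, g(\lambda^\ast)$, which gives $\vert w^\top(\beta^0 - \beta^1)\vert^2 = 4\epsilon^2 g^2(\lambda^\ast)$ as required.

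For the KL divergence, I would use that $Y^0$ and $Y^1$ are Gaussian vectors with common covariance $\sigma^2 I_n$ and means $X\beta^0$ and $X\beta^1$, so that $\KL(Y^0 \Vert Y^1) = \frac{1}{2\sigma^2}\norm{X(\beta^0 - \beta^1)}_2^2 = \frac{\epsilon^2}{2\sigma^2}\, \lambda^{\ast\top} X^\top X \lambda^\ast$. Substituting $\lambda^\ast = (X^\top X)^{-1} w$ collapses the quadratic form: $\lambda^{\ast\top} X^\top X \lambda^\ast = w^\top (X^\top X)^{-1} w = 2 g(\lambda^\ast)$, which yields exactly $\KL(Y^0 \Vert Y^1) = \frac{\epsilon^2 g(\lambda^\ast)}{\sigma^2}$.

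I do not expect a serious obstacle here; the entire content of the lemma lies in recognizing that the optimal two-point separation direction is the dual variable $\lambda^\ast$ rather than some ad hoc choice, exactly as the voltage vector played that role in the additive-matrix argument. The only quantities to verify are the two identities $w^\top\lambda^\ast = 2 g(\lambda^\ast)$ and $\lambda^{\ast\top}X^\top X\lambda^\ast = 2 g(\lambda^\ast)$, both immediate from the closed forms for $\lambda^\ast$ and $g(\lambda^\ast)$. Once the lemma is in hand, a standard application of Le Cam's two-point method to the pair $\{\beta^0,\beta^1\}$ converts these bounds into the claimed minimax lower bound, establishing the local minimax optimality of the least squares estimator for $w^\top\beta^\ast$.
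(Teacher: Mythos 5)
Your proposal is correct and follows essentially the same route as the paper: both set $\beta^0 = \zero$, perturb along the dual variable $\beta^1 = \pm\epsilon\lambda^\ast$ with $\lambda^\ast = (X^\top X)^{-1}w$, and collapse the Gaussian KL divergence via $\lambda^{\ast\top}X^\top X\lambda^\ast = w^\top(X^\top X)^{-1}w = 2g(\lambda^\ast)$. Your write-up is in fact slightly more complete than the paper's, which verifies only the KL identity and leaves the separation claim $\vert w^\top(\beta^0-\beta^1)\vert^2 = 4\epsilon^2 g^2(\lambda^\ast)$ implicit.
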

        Applying Lemma~\ref{lem:KL_divergence_dual_LR} and Le Cam's method, we have
        \begin{align*}
            \inf_{\hat \beta} \sup_{\beta} \expt{ \vert w^\top (\hat \beta - \beta )\vert ^2} \ge \frac{\epsilon^2 g^2 (\lambda^\ast)}{2} \left(1 - \sqrt{\frac{\epsilon^2 g(\lambda^\ast) }{2\sigma^2}} \right).
        \end{align*}
        Setting $\epsilon = \frac{2 \sqrt 2 \sigma}{3\sqrt{g(\lambda^\ast)}}$ yields
        \begin{align*}
            \inf_{\hat \beta} \sup_{\beta} \expt{ \vert w^\top (\hat \beta - \beta )\vert ^2} \ge \frac{4 \sigma^2 g(\lambda^\ast)}{27}.
        \end{align*}
    
        \begin{proof}[Proof of Lemma~\ref{lem:KL_divergence_dual_LR}]
            Let $\beta^0 = \zero$ and $\beta^1 = \epsilon \lambda^\ast$. Since $Y^0 \sim \normal(X \beta^0, \sigma^2 I)$ and $Y^1 \sim \normal(X \beta^1, \sigma^2 I)$, we have
            \begin{align*}
                \KL(Y^0 \Vert Y^1) &= \frac1{2\sigma^2} \norm{X (\beta^0 - \beta^1) }_2^2 \\
                &= \frac{\epsilon^2}{2\sigma^2} \lambda^{\ast \top} X^\top X \lambda^\ast \\
                &= \frac{\epsilon^2}{2\sigma^2} w^\top (X^\top X)^{-1} w \\
                &= \frac{\epsilon^2}{\sigma^2} g(\lambda^\ast).
            \end{align*}
        \end{proof}
    

\section{Proofs}
\label{sec:appendix_proofs}

\subsection{Proof of Lemma~\ref{lem:closed-form_lse}}
\label{appen:proof_closed-form_lse}
In the graph $\cG(\Omega)$, define $N_i = \{j \in [m]: \Omega_{ij}=1\}$ and as the neighbors of $u_i$; similarly, define $N^j = \{ i \in [n] : \Omega_{ij}=1 \}$ as the neighbors of $v_j$. 
The first derivatives of $f$ are
\begin{align*}
\frac{\partial }{\partial a_i} f(\aa,\bb) &= \sum_{j \in N_i} (a_i + b_j -M_{ij}), \quad \forall i \\
\frac{\partial }{\partial b_j} f(\aa,\bb) &= \sum_{i \in N^j} (a_i + b_j -M_{ij}), \quad \forall j.
\end{align*}
Setting the derivatives to zero, we obtain
\begin{align*}
\sum_{j \in N_i} (a_i + b_j -M_{ij}) & = 0 , \quad \forall i \\
\sum_{i \in N^j} (a_i + b_j -M_{ij}) & = 0 , \quad \forall j. \\
\end{align*}
Hence, the solution to~\eqref{eq:lse_factor_defn} satisfies
\begin{align*}
W(\Omega)
\begin{pmatrix}
    \hat \aa \\ \hat  \bb
\end{pmatrix}
 = \begin{pmatrix}
    \diag(\Omega M^\top) \\
    \diag(\Omega^\top M)
\end{pmatrix},
\end{align*}
where $W \in \R^{n_v \times n_v}$ is defined as
\begin{align*}
W = 
\begin{pmatrix}
    D_U & \Omega \\
    \Omega^\top & D_V
\end{pmatrix},
\end{align*}
where $D_U$ and $D_V$ denote diagonal matrices with diagonals being the degrees of the left nodes and the right ones, respectively. 
The minimum norm solution is therefore 
$W^+(\Omega) \begin{pmatrix}
    \diag(\Omega M^\top) \\
    \diag(\Omega^\top M)
\end{pmatrix}$.

The last step is to show $W^+(\Omega) = \begin{pmatrix}
    \Gamma_{11} & -\Gamma_{12} \\
    -\Gamma_{21} & \Gamma_{22}
\end{pmatrix}$.
Using the formula from p.101 of~\cite{gentle2007matrix}, we have
\begin{align*}
L^{+}(\Omega) & = 
    \begin{pmatrix}
        D_U^{+} + D_U^{+} \Omega \left( D_V - \Omega^\top D_U^{+} \Omega  \right)^{+} \Omega^\top D_U^{+} & D_U^{+} \Omega \left( D_V - \Omega^\top D_U^{+} \Omega  \right)^{+}  \\
        \left( D_V - \Omega^\top D_U^{+} \Omega  \right)^{+} \Omega^\top D_U^{+} & \left( D_V - \Omega^\top D_U^{+} \Omega  \right)^{+} 
    \end{pmatrix} \\
W^+(\Omega) &= \begin{pmatrix}
        D_U^{+} + D_U^{+} \Omega \left( D_V - \Omega^\top D_U^{+} \Omega  \right)^{+} \Omega^\top D_U^{+} & -D_U^{+} \Omega \left( D_V - \Omega^\top D_U^{+} \Omega  \right)^{+}  \\
        -\left( D_V - \Omega^\top D_U^{+} \Omega  \right)^{+} \Omega^\top D_U^{+} & \left( D_V - \Omega^\top D_U^{+} \Omega  \right)^{+} 
    \end{pmatrix}.
\end{align*}
Hence, we obtain the desired result.

\subsection{Proof of Theorem~\ref{thm:equivalence_efe_lse}}
\label{appen:proof_equivalence_efe_lse}
Fix $(i,j) \in [n] \times [m]$. Applying Lemma~\ref{lem:closed-form_lse}, we deduce that the least squares estimator satisfies
\begin{align*}
\hat{M}^{\lse}_{ij} &= \hat\aa_i + \hat\bb_j \\
&= e_i^\top  \Gamma_{11}  \diag(\Omega M^\top) - e_i^\top \Gamma_{12}  \diag(\Omega^\top M)  - e_j^\top \Gamma_{21}  \diag(\Omega M^\top) + e_j^\top \Gamma_{22}  \diag(\Omega^\top M)\\
&= ( \Gamma_{11} e_i  -  \Gamma_{12} e_j )^\top \diag(\Omega M^\top)  - ( \Gamma_{21} e_i - \Gamma_{22} e_j )^\top \diag(\Omega^\top M) \\
&= \begin{pmatrix}
    \diag(\Omega M^\top)^\top & \diag(\Omega^\top M)^\top
\end{pmatrix}
\begin{pmatrix}
    \Gamma_{11} e_i - \Gamma_{12}e_j \\
    \Gamma_{21} e_i - \Gamma_{22} e_j
\end{pmatrix} \\
&= \begin{pmatrix}
    \diag(\Omega M^\top)^\top & \diag(\Omega^\top M)^\top
\end{pmatrix}
\begin{pmatrix}
    \Gamma_{11} & \Gamma_{12} \\
    \Gamma_{21} & \Gamma_{22}
\end{pmatrix}
(e_i - e_{n+j}) \\
&= \begin{pmatrix}
    \diag(\Omega M^\top)^\top & \diag(\Omega^\top M)^\top
\end{pmatrix} L^+(\Omega) (e_i - e_{n+j}).
\end{align*}
On the other hand, by~\eqref{eq:closed_form_flow_est}, the optimal flow-based estimator has the expression
\begin{align*}
\hat{M}_{ij}^{\efe} = (B^\top(\Omega) \vec_{\Omega} (M) )^\top L^+(\Omega) (e_i - e_{n+j}).
\end{align*}
By definition, we get $B^\top(\Omega) \vec_{\Omega} (M) = \begin{pmatrix}
    \diag(\Omega M^\top) \\
    \diag(\Omega^\top M)
\end{pmatrix}$, which completes the proof.

\subsection{Proof of Lemma~\ref{lem:concentration_num_denom}}
\begin{proof}[Proof of Lemma~\ref{lem:concentration_num_denom}]
We will only show the proof for the second inequality since the proof for the first one is almost identical. 
Since the paths are disjoint, it is easy to see that the random variables $h_1(Y), \dots, h_K(Y)$ are independent. 
For $\delta \in (0,1)$, define truncated random variables $Z_{ij} = Y_{ij} \indic_{\{|Y
_{ij} - M^\ast_{ij}| \le \sigma \sqrt{2 \log(nm / \delta)} \}}$ for all $(i,j) \in [n] \times [m]$. 
By definition, random variables $h_1(Z), \dots, h_K(Z)$ are also independent. 
We can show that $Z_{ij}=Y_{ij}, \forall (i,j)$ with high probability. 
The union bound and the standard Gaussian tail bound yields 
\begin{align*}
\prob{ \cup_{ (i,j) \in [n] \times [m] } \left\{ |Y_{ij} - M_{ij}^\ast | > \sigma \sqrt{2 \log(nm/\delta)} \right\} } & \le  \sum_{(i,j)} \prob{ |Y_{ij} - M_{ij}^\ast| / \sigma >  \sqrt{2 \log (nm/\delta) } } \\
&\le 2nme^{- \log(nm/\delta)} \\
&= 2\delta.
\end{align*}
By triangle inequality, we derive that 
\begin{align*}
& \card{\frac1K \sum_{k \in [K]} h_k(Y) - \frac1K \sum_{k \in [K]} 
\expect{h_k(Y)}} \\ 
\le  & \card{ \frac1K \sum_{k \in [K]} h_k(Y) - \frac1K \sum_{k \in [K]} h_k(Z)} 
 + \card{\frac1K \sum_{k \in [K]} \expect{h_k(Y)}  - \frac1K \sum_{k \in [K]} \expect{h_k(Z)}  }  \\
 & + \card{\frac1K \sum_{k \in [K]} h_k(Z) - \frac1K \sum_{k \in [K]} 
\expect{h_k(Z)}}.
\end{align*}
We have already shown that $\card{ \frac1K \sum_{k \in [K]} h_k(Y) - \frac1K \sum_{k \in [K]} h_k(Z)} = 0$ with probability at least $1-2\delta$.
Since $Z_{ij}$'s are bounded,  the random variables $h_k(Z)$ are also bounded: $|h_k(Z)| \le  ( \sigma \sqrt{2 \log (nm/\delta)} )^{L}$. An application of Hoeffding's inequality yields:
\begin{align*}
 \card{\frac1K \sum_{k \in [K]} h_k(Z) - \frac1K \sum_{k \in [K]}  \expect{h_k(Z)} } \le C \sigma^L \sqrt{\frac{ 2^L \log^{L+1}{(nm/\delta) }}{K}},
\end{align*}
with probability at least $1 - \delta$.
For each $k$, the difference in expectation is
\begin{align*}
&\expect{h_k(Y) - h_k(Z)} \\
=  &\frac{2 \sigma^{\ell_k-1}}{\sqrt{2\pi}} \Pi_{s=3,5,\dots, \ell_k}  \int_{\sqrt{ 2 \log(nm/\delta) }}^{\infty} (t + M_{x_s^k x_{s-1}^{k}}^\ast)^2 e^{-t^2/2} \ddup t \\
= &\frac{2\sigma^{\ell_k-1}}{\sqrt{2\pi}} \Pi_{k=3,5,\dots, \ell_k}  \left[ 
      \int_{\sqrt{2\log(nm/\delta)}}^{\infty} t^2 e^{-t^2/2} \ddup t  +  2 M_{x_s^k x_s^{k+1}}^\ast \int_{\sqrt{2\log(nm/\delta)}}^{\infty} t e^{-t^2/2} \ddup t  + (M_{x_s^k x_s^{k+1}}^\ast )^2 \int_{\sqrt{2\log(nm/\delta)}}^{\infty} e^{-t^2/2} \ddup t  \ \right] \\
\end{align*}
Note that $\int_{\sqrt{2\log(nm/\delta)}}^{\infty} e^{-t^2/2} \ddup t \le \frac{\delta}{nm}$, by Gaussian tail bound. 
We also have $\int_{\sqrt{2\log(nm/\delta)}}^{\infty} t e^{-t^2/2} \ddup t = - e^{-t^2/2} \bigg\vert_{\sqrt{2\log(nm/\delta)}}^{\infty} = \frac{\delta}{nm}.$
In addition, by integration by parts, we have
\begin{align*}
\int_{\sqrt{2\log(nm/\delta)}}^{\infty} t^2 e^{-t^2/2} \ddup t &= -\int_{\sqrt{2\log(nm/\delta)}}^{\infty} t \ddup(e^{-t^2/2}) \\
&= \frac{\delta \sqrt{2\log(nm/\delta)}}{nm} + \int_{\sqrt{2\log(nm/\delta)}}^{\infty} e^{-t^2/2} \ddup t \\
&\le \frac{3 \delta \sqrt{\log(nm/\delta)}}{nm}.
\end{align*}
Let $\inftynorm{M^\ast} = \max_{ij} |M_{ij}^\ast|$. 
We derive that
\begin{align*}
\card{\frac1K \sum_{k \in [K]} \expect{g_k(Y)}  - \frac1K \sum_{k \in [K]} \expect{g_k(Z)} } & \le \frac{C \sigma^L (1 + \inftynorm{M^\ast}^L) \sqrt{\log(nm/\delta)} }{nm}.
\end{align*}
Combining pieces, we obtain that with probability at least $1 - 3\delta$, the following inequality holds
\begin{align*}
\card{\frac1K \sum_{k \in [K]} g_k(Y) - \frac1K \sum_{k \in [K]} 
\expect{g_k(Y)}} \le C \sigma^L (1 + \inftynorm{M^\ast}^L)  \sqrt{\frac{ 2^L \log^{L+1}{(nm/\delta) }}{K}},
\end{align*}
where we use the fact that $K \le nm$.
\end{proof}

\section{Algorithm for Finding Disjoint Paths}
\label{appen:path_finder}
The following algorithm is from~\cite{kleinberg2006algorithm}. Let $\cE$ denote the edge set of graph $\cG$. In line 3, we use $c \equiv 1$ to denote a unit capacity on every edge of $\cG'$. Lines 4--9 eliminate the possibility that we have flow on both $e=(u,v)$ and $e'=(v,u)$. 

    \begin{algorithm}
         \SetAlgoLined
         \SetKwFunction{algo}{PathFinder}{}
         \SetKwProg{myalg}{Function}{}{}
         \myalg{\algo{$\cG, i, j$}}{
            Construct a directed graph $\cG'$ by adding two edges $(u,v)$ and $(v,u)$ if $u$ and $v$ are adjacent in $\cG$. 
            Delete the edges into $u_i$ and out of $v_j$ from $\cG'$. \\
            Obtain maximum flow of value $K$: $f \gets \texttt{Ford-Fulkerson}(\cG', s=u_i, t=v_j, c\equiv 1)$.\\
            \For{$e =(u,v) \in \cE$}{
                Let $e' \gets (v,u)$. \\
                \If{$f(e) \neq 0, f(e') \neq 0$}{
                    Let $\delta \gets \min\{f(e), f(e')\}$.\\
                    Update $f(e) \gets f(e) - \delta, f(e') \gets f(e') - \delta$.\\
                }
            }
            \For{$k \in [K]$}{
                Start walking from $u_i$ by following $f$.\\
                \uIf{reach $v_j$}{
                    Save the path as $\path(\xx^k, \ell_k)$.\\
                    Decrease the flow $f$ on the edges along the path to $0$.
                }
                \uElseIf{encounter a cycle}{
                    Decrease the flow $f$ on the edges along the cycle to $0$.\\
                    Start the walk again.
                }
            }
            \KwRet{$\{\path(\xx^k, \ell_k)\}_{k\in[K]}$}
         }
         \caption{Find disjoint paths in a bipartite graph. \label{algo:path_finder}}
    \end{algorithm}

\end{document}